%% file: main.tex
\documentclass{article}

\usepackage{arxiv}

\usepackage[utf8]{inputenc} 
\usepackage[T1]{fontenc}    
\usepackage{hyperref}       
\usepackage{url}            
\usepackage{booktabs}       
\usepackage{amsfonts}       
\usepackage{nicefrac}       
\usepackage{microtype}      
\usepackage{lipsum}
\usepackage{graphicx}
\usepackage{makecell}
\usepackage{xcolor}
\usepackage{amsmath}
\usepackage{amssymb}
\usepackage{mathtools}
\usepackage{amsthm}
\usepackage{subcaption}
\usepackage{wrapfig}
\usepackage{tikz}
\usetikzlibrary{arrows.meta, calc, decorations.pathreplacing}
\graphicspath{ {./images/} }

\theoremstyle{plain}
\newtheorem{theorem}{Theorem}[section]

\newtheorem{lemma}[theorem]{Lemma}

\theoremstyle{definition}

\newtheorem{assumption}[theorem]{Assumption}
\theoremstyle{remark}

\title{Bridging the Gap Between Average and Discounted TD Learning}

\author{
Haoxing Tian \\
Department of Electrical and Computer Engineering\\
Boston University \\
\texttt{tianhx@bu.edu} \\
\And
Zaiwei Chen \\
Edwardson School of Industrial Engineering \\
Purdue University \\
\texttt{chen5252@purdue.edu} \\
\And 
Ioannis Ch. Paschalidis \\
Department of Electrical and Computer Engineering\\ 
Boston University \\
\texttt{yannisp@bu.edu} \\
\And 
Alex Olshevsky \\
Department of Electrical and Computer Engineering\\ 
Boston University \\
\texttt{alexols@bu.edu} 
}

\begin{document}
\maketitle
\begin{abstract}
The analysis of Temporal Difference (TD) learning in the average-reward setting faces notable theoretical difficulties because the Bellman operator is not contractive with respect to any norm. This complicates standard analyses of stochastic updates that are effective in discounted settings. Although a considerable body of literature addresses these challenges, existing theoretical approaches come with limitations. We introduce a novel algorithm designed explicitly for policy evaluation in the average-reward setting, utilizing sampling from two Markovian trajectories. Our proposed method overcomes previous limitations by guaranteeing convergence to the unique solution of a properly defined projected Bellman equation. Notably, and in contrast to earlier work, our convergence analysis is uniformly applicable to both linear function approximation and tabular settings and does not involve explicit dimension-dependent terms in its convergence bounds. These results align with what is known to hold in the discounted setting. Furthermore, our algorithm achieves improved dependence on the problem's condition number, reducing the sample complexity from quartic, as in prior literature, to quadratic scaling, and thus matching the efficiency seen in the discounted setting.
\end{abstract}

\section{Introduction}
\label{sec: intro}

Reinforcement learning with an average-reward objective is well-suited for applications that focus on the long-term performance over an infinite horizon. This framework has proved valuable in a variety of domains, including control systems \cite{prieto2008ergodic,krishnamurthy2016partially}, telecommunications \cite{bertsekas2021data,altman2021constrained}, and production environments \cite{feinberg2012handbook}. While the discounted setting—emphasizing shorter-term returns—has received extensive theoretical attention, especially for Temporal-Difference (TD) learning~\cite{bhandari2018finite,chen2019performance,tian2023performance,srikant2019finite}, analogous understanding of TD methods under the average-reward criterion is comparatively less developed~\cite{tsitsiklis1999average,zhang2021finite,blaser2024almost}.

Technically, compared to the discounted setting the  challenge in the average-reward setting is that the Bellman operator for policy evaluation is not a contraction mapping with respect to any norm. Consequently, the solution to the corresponding Bellman equation is not unique. This significantly complicates the theoretical analysis, particularly in the model-free setting where the stochastic dynamics of the Markov Decision Process (MDP) are unknown and the agent can only obtain random samples through interaction with the environment.

We now make the above remarks more concrete by surveying existing results on average-reward TD and comparing them to their counterparts in the discounted setting. The first theoretical analysis of average-reward TD, due to \cite{tsitsiklis1999average}, addressed the challenges discussed above by introducing an assumption that guarantees uniqueness of the solution to the Bellman equation. In the setting where the true value function $V_{\pi}(s)$ of a policy $\pi$ is approximated as $V_{\pi}(s) \approx \phi(s)^\top \theta$, that is, as a linear combination of computable features stacked into the vector $\phi(s)$, \cite{tsitsiklis1999average} assumed that for any scalar $c \in \mathbb{R}$ and vector $\theta \in \mathbb{R}^d$, we have $\Phi \theta \neq c e$, where $\Phi$ is the matrix whose rows are the feature vectors $\phi(s)^\top$, and $e$ is the all-ones vector. While this assumption enabled the first convergence guarantee for average-reward TD, it is not satisfied even in the tabular case, where $\{\phi(s)\}$ form the canonical basis and $\Phi$ is the identity matrix. Several subsequent works \cite{yu2009convergence, zhang2021average, li2024stochastic} adopted the same assumption.

A more recent study \cite{blaser2024almost} established that the value function under a particular update rule converges to a sample-path-dependent fixed point without requiring the assumption $\Phi \theta \neq c e$ for all $\theta$, by leveraging the theory of Stochastic Krasnoselskii--Mann (SKM) iterations \cite{bravo2024stochastic}. However, since the limit is sample-path-dependent, it may differ across independent runs of the algorithm. A similar result was obtained in the linear function approximation setting in \cite{zhang2021finite}, but without a guarantee that the process converges to a point. A more recent paper \cite{haque2024stochastic} establishes convergence to a point, but the convergence rate includes explicit dependence on the dimension $d$ of the parameter vector $\theta$, a dependence that does not explicitly appear in the discounted case. 

The recent paper \cite{li2024stochastic} on actor-critic methods includes some results on policy evaluation in the linear approximation setting, using a fairly intricate nested-loop algorithm based on variance reduction. However, when specialized to the policy evaluation problem, that work also makes the $\Phi \theta \neq c e$ assumption, rendering it inapplicable to the tabular case. 
Moreover, we note that their sample complexity is stated in terms of the averaged iterate, whereas the other papers discussed above provide guarantees for the last iterate.

To summarize, while a number of papers have developed a convergence theory for average-reward TD, the existing literature consistently includes some mix of caveats compared to the discounted case, ranging from assumptions that exclude the tabular setting, to a lack of point-wise convergence guarantees for the underlying iterates, or explicit dimension dependence in the convergence bounds. Additionally, we note that the state-of-the-art results from \cite{zhang2021average, haque2024stochastic, chen2025non} exhibit \emph{quartic} dependence on the condition number, in contrast to the \emph{quadratic} scaling achieved in the discounted setting \cite{bhandari2018finite}. Alternatively, the result from \cite{li2024stochastic} does achieve quadratic scaling, but it relies on the assumption \(\Phi \theta \neq c e\) which prevents it from being applied to the tabular case.

In this work, we take a complementary approach: observing that the average-reward value function must satisfy a certain steady-state constraint, we formulate the solution to the Bellman equation as a constrained optimization problem. To solve this problem, we propose a new algorithm that leverages sampling from two independent Markov chains in each iteration. We provide a finite-sample analysis for this method using a relatively new technique known as ``gradient splitting'' \cite{liu2021temporal}. Finally, building upon ideas from the Gradient TD (GTD) \cite{sutton2008convergent}, we give a version of our algorithm which only uses a single Markov chain.

\begin{table*}[t] \label{table:comparison}
\centering
\caption{Comparison with previous work. We highlight several key distinctions between our work and prior research: \\ 
(1) In the table below, {\bfseries Linear Only} refers to the assumption that $e \not \in {\rm span}(\Phi)$, which rules out the tabular case. \\ 
(2) A complicating factor in comparing results is that different papers tend to have slightly different notions of condition numbers; this is why the table contains $\eta_1, \eta_2, \eta_3$ (see definitions in Eq. (\ref{eq:eta1firstdef}) to Eq. (\ref{eq:eta3firstdef})). A detailed discussion and precise definitions can be found in Appendix \ref{appendix: condition number}. We remark that our condition number $\eta_1$ is at least as good as the widely used $\eta_3$, i.e., 
$\eta_1 \geq \Omega(\eta_3),$ which means bounds based on $\eta_1^{-1}$ are at least as good as bounds based on $\eta_3^{-1}$;  and if we assume the stationary probability $\mu$ is not too far from uniform, we also have 
$\eta_1 \geq \Omega(\eta_2).$ See Appendix \ref{appendix: condition number} for details. \\ 
(3) We consider a convergence time to be independent of the dimension $d$ if its dependence on $d$ appears only through $\|\theta_0\|$ or $\|\theta^*\|$. \\ 
(4) We make the standard assumption that $\|\phi(s)\|\leq 1$ for all states $s$, which can be achieved by rescaling. If we instead make the assumption that every entry of $\phi(s)$ is  $O(1)$ (so that $\|\phi(s)\|=O(\sqrt{d})$), then our scaling with dimension would be $O(d)$. This modified assumption matches more closely the assumption made in \cite{haque2024stochastic}, which, unlike this work, analyzes the infinite-dimensional case. Our results are an improvement compared to the larger $\Omega\left(d^2\right)$ scaling in that work. \\
(5) \cite{li2024stochastic} includes a scaling with approximation error (\(\mathbb{E}\|W^* - \Phi \theta^*\|_D^2\)) that no other paper has. In addition, we also note that they use a variance reduction method, not an analogue of plain TD; we do not implement variance reduction and our scaling with condition number is similar to their method.\\
(6) In \cite{li2024stochastic}, the condition number is actually defined as \(\min_{y \neq e, \|y\|_D=1} y^\top D (I-P)y\). However, our analysis in Appendix \ref{appendix: condition number} suggests that \(\eta_3\) is actually greater than their condition number, offering an optimistic approximation of their sample complexity. \\ 
(7) In \cite{kim2025implicit}, the sample complexity is actually \(\tilde{O}(\epsilon^{-1} R_{\rm proj}^2 \eta_2^{-3})\), where \(R_{\rm proj}\) is the projection radius. However, the authors did not provide the choice of this radius. Our analysis in Appendix \ref{appendix: condition number} shows that \(R_{\rm proj}^2\) is actually \(O(1/\eta_3).\) \\ 
(8) In the discounted case, $\eta_{\rm discounted}$ can be defined as $(1-\gamma) \sigma_{\rm min}(\Phi^T D \Phi)$ where $\gamma$ is the discount factor \cite{bhandari2018finite}}.
\label{tb:comparison}
\begin{scriptsize}
\begin{sc}
\resizebox{\linewidth}{!}{\begin{tabular}{lcccccc}
\toprule
Reference & Setting & \makecell{Converges To A \\ Sample Independent Point}  & \makecell{Sample \\ Complexity} & Scaling with $d$ & Iterate & Method\\
\midrule
\cite{tsitsiklis1999average} & Linear Only & Yes & $\times$ & $\times$ & last & \makecell{coupled\\SA}\\
\cite{blaser2024almost} & Tabular Only & No & $\times$ & $\times$ & last & \makecell{coupled\\SA}\\
\cite{zhang2021finite} & Tabular \& Linear & No & $\tilde{O}(\epsilon^{-1}\eta_2^{-4})$ & No & last & \makecell{coupled\\SA}\\
\cite{haque2024stochastic} & Tabular \& Linear & Yes & $\tilde{O}(\epsilon^{-1}\eta_2^{-4})$& Yes & last & \makecell{coupled\\SA}\\
\cite{kim2025implicit} & Tabular \& Linear  & No & \(\tilde{O}(\epsilon^{-1} \eta_2^{-3}\eta_3^{-1}) \) & No & last & \makecell{coupled\\SA}\\
\cite{chen2025non} & Tabular \& Linear & No & $\tilde{O}(\epsilon^{-1}\eta_2^{-4})$& No & last & \makecell{coupled\\SA}\\
\cite{li2024stochastic} & Linear  Only & Yes & $\tilde{O}(\epsilon^{-1} \eta_3^{-2})$& No & average & \makecell{Variance\\Reduction}\\
\addlinespace[0.5em]
\makecell[l]{\textcolor{blue}{Our Two-Chain} \\ \textcolor{blue}{Algorithm}} & \textcolor{blue}{Tabular \& Linear} & \textcolor{blue}{Yes} & \textcolor{blue}{$\tilde{O}(\epsilon^{-1} \eta_1^{-2})$} & \textcolor{blue}{No} & \textcolor{blue}{last} & \makecell{\textcolor{blue}{coupled}\\\textcolor{blue}{SA}}\\
\addlinespace[0.25em]
\makecell[l]{{Discounted Case} \\ {(e.g., \cite{bhandari2018finite})}} & Tabular \& Linear & Yes & $\tilde{O}(\epsilon^{-1} \eta_{\rm discounted}^{-2})$ & No & last & \makecell{SA}\\
\bottomrule
\end{tabular}}
\end{sc}
\end{scriptsize}
\vskip -0.1in
\end{table*}

Our work improves upon the state of the art by simultaneously having all of the following features.

\begin{itemize}

    \item \textbf{Provably unique, sample-independent fixed point.}  
Our analysis shows that the iterate sequence $\{\theta_t\}$ converges almost surely to a single, deterministic solution $\theta^\star$ that does not depend on the random trajectory or on initialization.  

    \item \textbf{Tabular + Linear Function Approximation}: Our analysis is applicable to both the tabular and linear function approximation cases.
    In particular, we do not assume $e \notin {\rm span}(\Phi)$. 
        

    \item \textbf{Dependence on Dimensionality}: Our convergence bound does not have any explicit factors of $d$, the dimension of $\theta$. While all algorithms have terms like $\|\theta^*\|$ that might implicitly scale with dimension, our algorithm has no terms scaling with $d$ in addition to those. 

    \item \textbf{Last Iterate:} our results are based on the last iterate rather than an iterate averaging, matching the corresponding results in the discounted case. 

    \item \textbf{Scaling with Condition Number}: In \cite{zhang2021finite} and \cite{haque2024stochastic}, the dependence of the convergence time on the condition number is quartic as $O\left(\eta_2^{-4}\right)$, where $\eta_2$ is defined as
    \begin{equation} \label{eq:eta2firstdef} \eta_{2} = \min_{\|x\|=1, x^\top e = 0} \|\Phi x\|_{\rm Dir}^2,
    \end{equation}
    where 
    $\|\cdot\|$ is the Euclidean norm and $\|\cdot\|_{\rm Dir}$ denotes the Dirichlet seminorm, formally defined later in Section \ref{section: norms}. Similarly, in \cite{kim2025implicit}, the dependence of the convergence time on the condition number is also quartic as $O\left(\eta_2^{-3}\eta_3^{-1}\right)$, where $\eta_3$ is defined as 
    \begin{equation}\label{eq:eta3firstdef} 
     \begin{small}
    \begin{aligned}
    \eta_3 = & \left(\min_{\|x\|=1} x^\top \Phi^\top D \Phi x \right) \cdot \left( \min_{\langle y, e \rangle_D = 0, \|y\|_D=1} y^\top D (I-P)y \right),    
    \end{aligned}
        \end{small}
    \end{equation} 
    \noindent where $D$ is the diagonal matrix with the stationary distribution $\mu$ of the policy on the diagonal. These contrast unfavorably with the corresponding results in the discounted case, which scale with the square rather than fourth power of the condition number \cite{bhandari2018finite}. That being said, the definition of condition number is different within the discounted case, and it may be that a slightly different notion of condition number is needed for the average-reward setting. 
    
    Indeed, that is just what we show: for our algorithm convergence time scales quadratically as  $O\left(1/\eta_1^2\right)$, where $\eta_1$ is defined as 
    \begin{equation} \label{eq:eta1firstdef} \eta_1 = \min_{x:\, \Vert x \Vert = 1} \Vert \Phi x \Vert_{\rm Dir}^2 + (\mu^\top \Phi x)^2, \end{equation} 
    and $\mu$ is the stationary distribution of the policy. 

\item \textbf{No variance reduction techniques needed.}  
While variance-reduction schemes are well-known to improve sample complexity, they typically require multi-level structure (e.g., nested loops, periodic full-batch or long-trajectory reference estimates, and additional bookkeeping) and therefore differ substantially from the ``plain TD'' template.  
In contrast, our quadratic condition-number scaling is achieved with a single-timescale, simple stochastic-approximation update without using any variance reduction techniques, matching the algorithmic simplicity of discounted TD.
\end{itemize}

\section{Preliminaries}

This section introduces the necessary background on average-reward reinforcement learning to support the algorithm design and convergence analysis of TD learning presented in the subsequent sections.

\subsection{Markov Decision Processes (MDP)}

We consider an MDP defined by the tuple $(S, A, P_{\rm env}, r)$, where (i) $S$ is the finite state space, (ii) $A$ is the finite action space, (iii) $P_{\rm env} = (P_{\rm env}(s' \mid s, a))_{s, s' \in S,\, a \in A}$ is the transition probability kernel, and (iv) $r: S \times A \to \mathbb{R}$ is the reward function. Let $r_{\max} := \max_{s \in S,\, a \in A} |r(s,a)|$, which is finite since the state-action space is finite. 

A policy $\pi: S \times A \to \mathbb{R}$ is a function where $\pi(a \mid s)$ represents the probability of the agent taking action $a$ in state $s$. Throughout this paper, we focus exclusively on the policy evaluation problem and therefore assume the policy $\pi$ to be fixed and known. Under this fixed policy, we define the induced transition matrix $P$ as $P = (P(s' \mid s))_{s, s' \in S}$, where $P(s' \mid s) = \sum_{a \in A} P_{\rm env}(s' \mid s, a)\, \pi(a \mid s)$.

We make the following assumption regarding the policy $\pi$, which is standard in TD learning \cite{liu2021temporal, bhandari2018finite}.

\begin{assumption}
\label{a: markov chain}
The Markov chain with transition matrix $P$ is irreducible and aperiodic.
\end{assumption}

Under the above assumption, the Markov chain with transition matrix $P$ has a unique stationary distribution, denoted by $\mu$, which satisfies $\mu_{\min} := \min_{s} \mu(s) > 0$ \cite{levin2017markov}. Moreover, according to Theorem 4.9 in \cite{levin2017markov}, there exist constants $C>1$ and $\beta \in [0,1)$ such that 
\begin{equation}
\label{eq: mixing}
\|p_{\tau}(\cdot|s) - \mu\|_1 \le C \beta^\tau, \quad \forall\, \tau \ge 0,  s \in S,    
\end{equation}
where $p_{\tau}$ is the probability distribution of the state of this Markov chain after $\tau$ steps starting at \(s\). 


\subsection{The Long-Term Average Reward}
\label{subsec:average_reward}

We now discuss value functions within the average-reward framework, highlighting their role in policy evaluation. Let $r_s = \sum_{a} \pi(a \mid s)\, r(s,a)$ denote the expected reward in state $s$ under policy $\pi$. We also define $p_{s_0s}^{k}$ to be the probability that the agent is at state $s$ after $k$ steps starting from $s_0$. Then the value function $v_{s_0}(t)$ reflects the expected cumulative reward starting from state $s_0$ after $t$ transitions:
\begin{align*}
v_{s_0}(t) 
= r_{s_0} + \sum_{s} p_{s_0s} r_{s} + \cdots + \sum_{s} p_{s_0s}^{t-1} r_{s}.
\end{align*}
Defining $V_t = [v_s(t)]_{s \in S}$ as the vector stacking up the value function and $R = [r_s]_{s \in S}$ as the vector stacking up the expected rewards, the above equation can be compactly written as $V_t = \sum_{k=0}^{t-1} P^k R$. Under Assumption \ref{a: markov chain}, we have $\lim_{n \to \infty} P^n = e \mu^\top$ \cite{levin2017markov}. Let $g = \mu^\top R$ be the steady-state reward per unit of time. Then, the relative value function, denoted by $W^*$, is defined as
\begin{equation}
\label{eq: w*}
W^* = \lim_{t \to \infty} V_t - t g e = \lim_{t \to \infty} \sum_{k=0}^{t-1} (P^k - e \mu^\top) R.    
\end{equation}
Intuitively, the relative value function $W^*$ quantifies the long-term expected cumulative reward differences across states relative to the steady-state reward. It is a central quantity for policy evaluation in the average-reward setting, as it effectively centers the rewards to focus purely on differences due to transient dynamics. 

\subsection{Useful Norms}
\label{section: norms}
In this subsection, we introduce several useful norms which will play an important role in our analysis. 
The so-called $D$-norm and Dirichlet semi-norm have been previously shown to be very useful in TD-like analysis \cite{ollivier2018approximate,liu2021temporal}. Given a vector $f$ with the same number of entries as the number of states in the MDP, its $D$-norm is defined as 
\begin{equation}
\label{eq: d norm}    
\Vert f \Vert_D^2 = \langle f, f \rangle_D = \sum_{s} \mu(s) f(s)^2
\end{equation}
and its Dirichlet semi-norm is defined as 
\begin{equation}
\label{eq: dir norm}
\Vert f \Vert _{\rm Dir}^2 = \frac{1}{2} \sum_{s, s'} \mu(s) P(s'|s) (f(s)-f(s'))^2.    
\end{equation}
Intuitively, the Dirichlet seminorm measures the difference between $f$ and the all-ones vector $e$, but in a way that is adapted to the Markov chain with transition matrix $P$. Finally, throughout the paper, we will use $\Vert \cdot \Vert$ to denote the standard Euclidean norm. 

\subsection{Markov Noise}
\label{sec: mixing}

Let $ s_t $ denote the state at time step $ t $. Following standard practice, we consider two distinct sampling scenarios in this paper: (1) \textit{i.i.d.\ sampling}, where each state $ s_t $ is independently drawn from the stationary distribution $ \mu $; and (2) \textit{Markov sampling}, where the Markov chain starts at \(s_0\) and evolves according to the policy. Under Markov sampling, the states $ s_t $ remain marginally distributed according to $ \mu $, but exhibit temporal correlation across time steps.

\subsection{Linear Function Approximation}

In practical reinforcement learning applications, the state space $S$ is often extremely large, making it impractical to maintain a vector whose dimension scales with the number of states. To address this challenge, it is common to incorporate function approximation, in particular, a linear function approximator of the form $W = \Phi \theta$, where $\Phi \in \mathbb{R}^{n \times d}$ is the feature matrix and $\theta \in \mathbb{R}^d$ is the parameter vector. Additionally, denote the $s$-th row of the feature matrix by $\phi(s)^\top$. We assume, without loss of generality, that (i) the features are normalized so that $\max_{s} \Vert \phi(s) \Vert \le 1$, and (ii) the columns of $\Phi$ are linearly independent.

We further define 
\begin{equation}
\label{eq: eta}
\eta = \min_{x:\, \Vert x \Vert = 1} \Vert \Phi x \Vert_{\rm Dir}^2 + (\mu^\top \Phi x)^2, \quad \text{where } x \in \mathbb{R}^d
\end{equation} 
Intuitively, $\eta$ measures how close to zero $\Phi x$ can get: the first term measures the distance between $\Phi x$ and the all-ones vector, whereas the second term measures the (squared) distance between the weighted average $\mu^\top \Phi x$ and zero. In particular, under the assumption that the columns of $\Phi$ are linearly independent, we immediately have $\eta > 0$. The quantity $\eta$ will act as a condition number in our algorithms for average reward TD. 

\section{Algorithms}

We now introduce the two algorithms studied in this paper. The first, called the \textbf{double-chain algorithm}, uses two independent Markov chains. The second, the \textbf{single-chain algorithm}, uses only one.

\subsection{Double-Chain Algorithm: Motivation and Derivation}

Recall that in the discounted setting, TD-learning is designed to solve the projected Bellman equation \cite{tsitsiklis1996analysis}. To motivate our analysis, we next introduce a natural analogue of the projected Bellman equation in the average-reward setting.

Our goal is to compute \(W^*\) as defined in Eq.(\ref{eq: w*}). It is well known \cite{gallager1997discrete} that the relative value function $W^*$ satisfies two properties: 
\[
W^* + g e = P W^* + R \quad \text{ and } \quad \mu^\top W^* = 0.
\]
Define $\Pi = I - e \mu^\top$ as a projection onto the subspace $\{x \mid \langle x, e \rangle_{D} = 0\}$ in the inner product $\langle \cdot, \cdot \rangle_{D}$ and the Bellman operator $T_\pi$ such that $T_\pi W = R + PW$. 

We can write the two properties into an equivalent way:
\begin{equation} 
\label{eq:projected_bellman_first}
W^* = \Pi T_\pi W^*.
\end{equation}
It is known that   $W^*$ is the unique solution to this equation \cite{gallager1997discrete}. Since the matrix $D=\text{diag}(\mu)$ is invertible, the previous equation is equivalent to 
\begin{align*}
D \left( \Pi (R + P W^*) - W^* \right) =0,
\end{align*}
which can be further written as
\begin{equation} \label{eq:property of W*}
\begin{aligned}
D \left( R + P W^* - W^* \right) - \mu \mu^\top (R + W^*)=0
\end{aligned}
\end{equation}
using the explicit definition of $\Pi$.

A natural approach to solve $W^*$ from Eq.(\ref{eq:property of W*}) is to recursively perform the update
\begin{small}
\begin{align}\label{eq: tabular expected averaged reward update}
    W_{t+1} 
=  W_t + \alpha_t \left( D \left( R+PW_{t} - W_t \right) -  \mu \mu^\top(R+W_{t}) \right).  
\end{align}
\end{small}
Although the above iterative algorithm seems promising, it cannot be implemented directly, since the transition matrix $P$ and the reward function $R$ are unknown. In the remainder of this section, we develop a data-driven stochastic version of the algorithm presented in Eq. (\ref{eq: tabular expected averaged reward update}). Before delving into the details, we first introduce some notation.

We use $1(s = s_0) \in \mathbb{R}^{|S|}$ to denote the vector whose entries are $0$ except a \(1\) at position $s = s_0$. We also use $\mathbb{E}_\mu$ as expectation assuming that the state $s_t$ is drawn from stationary distribution $\mu$ while $s_{t+1}$ is still drawn according to the MDP with action taken according to policy $\pi$.

With this notation in place, we now describe the intuition behind the equations that we will write down. Keeping in mind that our goal is to provide a stochastic version of Eq. (\ref{eq: tabular expected averaged reward update}), the straightforward approach is to replace the unknown transition matrix $P$ with something depending on samples that has expectation $P$. This works, but it is the second term in Eq. (\ref{eq: tabular expected averaged reward update}) that causes some trouble: it is surprisingly not straightforward to find a quantity such that its expectation is $\mu \mu^\top(R+W_{t})$. 

Indeed, to form a stochastic estimator for $\mu \mu^\top(R+W_{t})$, observe that while $\mathbb{E}_\mu [1(s=s_t)] = \mu$ and $\mathbb{E}_\mu [r_{s_t}+W_t(s_t)] = \mu^\top (R+W_t)$, we cannot multiply these two estimators to obtain the result we want because $\mathbb{E}[XY] \neq \mathbb{E}[X] \mathbb{E}[Y]$ for random variables $X$ and $Y$ if they are not independent. This is known as the double sampling issue \cite{sutton2008convergent}. A natural and simple way to solve this issue is to sample two independent Markov Chains and base the two estimates on independent samples. 

Denoting the state of these two chains by $\{s_t\}$ and $\{\hat{s}_t\}$, respectively, we therefore consider the following update:
\begin{equation}
\label{eq: tabular stochastic averaged reward update}    
W_{t+1} = W_t + \alpha_t \left(f(s_t, \hat{s}_t, W_t) + g(s_t,s_t', W_t)\right),
\end{equation}
where 
\begin{equation*}  
\begin{aligned}
f(s_t, \hat{s}_t, W_t)[s] = & -1(s=\hat{s}_t)(r_{s_t}+W_t(s_t)), \quad g(s_t,s_t', W_t)[s] = & 1(s=s_t) \left( r_{s_t}+W_t(s_t')-W_t(s_t) \right),
\end{aligned}
\end{equation*}
 It is then indeed immediate that
\begin{align*}
\mathbb{E}_{\mu}  f(s_t, \hat{s}_t, W_t) = & -\mu \mu^\top (R+W_t) ,
\quad
\mathbb{E}_{\mu} g(s_t,s_t', W_t) = & D(R + P W_t - W_t),     
 \end{align*}
and therefore Eq. (\ref{eq: tabular stochastic averaged reward update}) is a stochastic version of Eq. (\ref{eq: tabular expected averaged reward update}). 

With linear function approximation, the natural generalization of Eq. (\ref{eq:projected_bellman_first}) becomes 
\begin{equation} \label{eq:projected_bellman_second}
\Phi \theta^* = \Pi_D \Pi T_{\pi} \Phi \theta^*
\end{equation}
where $\Pi_D = \Phi (\Phi^\top D \Phi)^{-1} \Phi^\top D$ is the projection onto the subspace spanned by the columns of $\Phi$ in $D$-inner product. Compared to Eq. (\ref{eq:projected_bellman_first}), this equation uses $\Phi \theta$ as an approximation and adds a projection to the column space of $\Phi$. We rewrite the above equation in the following form, which will be easier to implement:
\begin{equation}
\label{eq: fix point double chain}
\Phi^\top D (I - \Pi P) \Phi \theta^* = \Phi^\top D \Pi R.    
\end{equation}
The following lemma establishes the existence and uniqueness of $\theta^*$ whose proof can be found in Appendix \ref{appendix: contraction}. 
\begin{lemma}
\label{l: existence and uniqueness}    
The solution to the linear system $
\Phi^\top D (I - \Pi P) \Phi \theta = \Phi^\top D \Pi R$ exists and is unique. 
\end{lemma}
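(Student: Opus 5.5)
The plan is to show that the matrix $A := \Phi^\top D (I - \Pi P)\Phi \in \mathbb{R}^{d\times d}$ is nonsingular. Existence and uniqueness of $\theta$ then follow at once, since the right-hand side $\Phi^\top D \Pi R$ is just a fixed vector in $\mathbb{R}^d$. Because $A$ need not be symmetric, I will not work with its eigenvalues directly. Instead I will show that the quadratic form $x^\top A x$ is strictly positive for every $x \neq 0$. This suffices: if $Ax = 0$ for some $x$, then $x^\top A x = 0$, which forces $x = 0$.

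\textbf{Step 1: simplify $D(I-\Pi P)$.} Substituting $\Pi = I - e\mu^\top$ gives
\[
D(I - \Pi P) = D - DP + De\,\mu^\top P .
\]
Using $De = \mu$ and stationarity $\mu^\top P = \mu^\top$, this becomes $D(I-P) + \mu\mu^\top$. Hence, writing $f = \Phi x$,
\[
x^\top A x = f^\top D (I-P) f + (\mu^\top f)^2 .
\]

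\textbf{Step 2: identify $f^\top D(I-P)f$ with the Dirichlet seminorm (\ref{eq: dir norm}).} Expanding the square in the definition gives
\[
\|f\|_{\rm Dir}^2 = \tfrac12\sum_{s,s'}\mu(s)P(s'|s)\bigl(f(s)^2 + f(s')^2 - 2f(s)f(s')\bigr).
\]
The first term sums to $\tfrac12\|f\|_D^2$ because the rows of $P$ sum to one. The second term also sums to $\tfrac12\|f\|_D^2$, because $\sum_s \mu(s)P(s'|s) = \mu(s')$. The cross term equals $f^\top D P f$. Therefore $\|f\|_{\rm Dir}^2 = \|f\|_D^2 - f^\top DPf = f^\top D(I-P)f$, and so
\[
x^\top A x = \|\Phi x\|_{\rm Dir}^2 + (\mu^\top \Phi x)^2 \ge \eta \|x\|^2 ,
\]
with $\eta$ as defined in (\ref{eq: eta}).

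\textbf{Step 3: verify $\eta > 0$.} This is the only point requiring care, since the text only asserts it. By compactness of the unit sphere, the minimum in (\ref{eq: eta}) is attained, so it suffices to rule out $\|\Phi x\|_{\rm Dir} = 0$ and $\mu^\top\Phi x = 0$ holding simultaneously for some $x \neq 0$.

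Suppose $\|f\|_{\rm Dir} = 0$. Since $\mu(s) > 0$ for all $s$, this gives $f(s) = f(s')$ whenever $P(s'|s) > 0$. By irreducibility (Assumption \ref{a: markov chain}), every pair of states is joined by a chain of such transitions, so $f = ce$ for some scalar $c$. Then $\mu^\top f = c$, and the second condition forces $c = 0$, i.e.\ $\Phi x = 0$. Linear independence of the columns of $\Phi$ then gives $x = 0$.

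Hence $x^\top A x > 0$ for all $x \neq 0$, so $A$ is invertible, and $\theta = A^{-1}\Phi^\top D\Pi R$ is the unique solution. Note that the argument never assumes $e\notin{\rm span}(\Phi)$, so it covers the tabular case $\Phi = I$.
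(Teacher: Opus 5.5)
Your proof is correct and follows essentially the same route as the paper. Like you, the paper rewrites $x^\top\Phi^\top D(I-\Pi P)\Phi x$ as $\|\Phi x\|_{\rm Dir}^2+(\mu^\top\Phi x)^2$ and gets invertibility from $\eta>0$ via $Ax_0=0 \Rightarrow x_0^\top A x_0=0$. Your Steps 1 and 3 also spell out two points the paper leaves implicit: the identity $D(I-\Pi P)=D(I-P)+\mu\mu^\top$, and the verification of $\eta>0$ from irreducibility and the full column rank of $\Phi$, which the paper only asserts.
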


Based on this equation, a natural generalization from the tabular to the linear approximation case is therefore
\begin{equation}
\label{eq: linear stochastic averaged reward update}    
\theta_{t+1} = \theta_t + \alpha_t \left(f(s_t, \hat{s}_t, \theta_t) + g(s_t,s_t', \theta_t)\right),
\end{equation}
where
\begin{equation}
\label{eq: definition of f and g}
\begin{aligned}
f(s_t, \hat{s}_t, \theta_t) = & -(r_{s_t}+\phi(s_t)^\top \theta_t) \phi(\hat{s}_t) \\
g(s_t,s_t', \theta_t) = & \left( r_{s_t}+\phi(s_t')^\top \theta_t-\phi(s_t)^\top \theta_t \right) \phi(s_t).   
\end{aligned}
\end{equation}

\subsection{Single-Chain Algorithm}

It is natural to wonder whether we can perform the update using  only a single Markov chain. Inspired by the GTD method \cite{sutton2008convergent}, we propose a solution that does so. 

Our single-chain algorithm is based on the following observation. Our two-chain algorithm uses the term $\phi(\hat{s}_t)$ -- but what if, instead, we replace that by an estimate of the expectation $E[ \phi(\hat{s}_t)]$? Because $s_t$ is sampled from $\mu$, this expectation equals $\Phi^\top \mu$. We will therefore introduce a new variable $w_t$ which will converge to $\Phi^\top \mu$ and use it in place of $\phi(\hat{s}_t)$. 

Our algorithm is thus as follows:
\begin{equation}
\label{eq: onechain stochastic averaged reward update}    
\begin{aligned}
w_{t+1} = \textbf{Proj}_{R_w} \{w_t + \beta_t f(s_t, w_t)\},
\quad
\theta_{t+1} = \textbf{Proj}_{R_\theta}\{\theta_t + \alpha_t g(s_t, s_t', w_t, \theta_t)\},    
\end{aligned}
\end{equation}
where
\begin{equation}
\label{eq: definition of f and g in onechain case}    
\begin{aligned}
f(s_t, w_t) = & \phi(s_t) - w_t, \\
g(s_t, s_t', w_t, \theta_t) = & (r_{s_t} + \phi(s_t')^\top \theta_t - \phi(s_t)^\top \theta_t) \phi(s_t) - (r_{s_t} + \phi(s_t)^\top \theta_t) w_t.    
\end{aligned}
\end{equation}
Note that the first line clearly drives $w_t$ to $\Phi^\top \mu$ while the second line is identical to the double-chain method except $\phi(\hat{s}_t)$ has been replaced by $w_t$. 

To see where \(\{w_t\}\) and \(\{\theta_t\}\) will converge to in this case, we notice that, under i.i.d. sampling, 
\begin{align*}
\mathbb{E}_{s_t \sim \mu} f(s_t, w_t) = & \Phi^\top \mu - w_t  \\
\mathbb{E}_{s_t \sim \mu} g(s_t, s'_t, w_t, \theta_t) = & \Phi^\top D (R + P \Phi \theta_t - \Phi \theta_t) - \mu^\top (R + \Phi \theta_t) w_t.
\end{align*}
Therefore, we can define \(w^*\) and \(\theta^*\) such that
\begin{align*}
& \Phi^\top \mu - w^* = 0 \\
& \Phi^\top D (R + P \Phi \theta^* - \Phi \theta^*) - \mu^\top (R + \Phi \theta_t) w^* = 0.   
\end{align*}
The intuition behind this method is that  while $w_t$ is derived from the same trajectory as $s_t$, it is an average over past features. We can therefore expect that it will be essentially de-correlated from the instantaneous value $s_t$, so we will be able to estimate $E[XY] \approx E[X]E[Y]$ up to some error for the product of $X=w_t$ and $Y=r_{s_t} + \phi(s_t)^\top \theta_t$. Naturally, this will come at the cost of an increased convergence time because of the additional error incurred. 

\section{Main Results}

We now present our main results. We will consider both i.i.d. and Markov sampling, as well as both our double-chain and single-chain algorithms. 

\subsection{Convergence Results for the Double-Chain Method}
Our first result assumes that the two Markov chains $\{s_t\}$ and $\{\hat{s}_t\}$ are sampled i.i.d. from the stationary distribution $\mu$. We first consider the case where the stepsizes are constant \(\alpha_t = \alpha\). We define \(\tau_{\rm mix}\) as the mixing time \(\tau_{\rm mix} = \tau_{\rm mix}(\alpha)\). We also denote \(\delta_t = \theta_t - \theta^*\). Our first theorem considers the double-chain method.  
\begin{theorem}[Double-chain, i.i.d.\ sampling]
\label{t: linear iid}
Suppose Assumption~\ref{a: markov chain} holds. Consider the double-chain algorithm~\eqref{eq: linear stochastic averaged reward update} with i.i.d.\ sampling and constant stepsize $\alpha \le \eta/18$. Then 
\[
\begin{small}
\mathbb{E} \| \theta_T - \theta^* \|^2 
\;\le\; e^{-\alpha \eta T} \| \theta_0 - \theta^* \|^2 
\;+\; O\!\left( \frac{\alpha (r_{\max} + \|\theta^*\|)^2}{\eta} \right).
\end{small}
\] 
In particular, choosing $\alpha = \tilde{\Theta}(1/(\eta T))$ yields a convergence rate of $\tilde{O}(1/T)$ with sample complexity $\tilde{O}(\epsilon^{-1} \eta^{-2})$.
\end{theorem}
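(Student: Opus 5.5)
Writing $\delta_t=\theta_t-\theta^*$, I would prove the bound by a one-step Lyapunov argument on $\|\delta_t\|^2$. Under i.i.d.\ sampling the mean update is
\[
\bar h(\theta)=\mathbb{E}_\mu[f(s_t,\hat s_t,\theta)+g(s_t,s_t',\theta)] = \Phi^\top D(R+P\Phi\theta-\Phi\theta)-\Phi^\top\mu\,\mu^\top(R+\Phi\theta) = \Phi^\top D\Pi R-\Phi^\top D(I-\Pi P)\Phi\theta .
\]
This uses $D e\mu^\top=\mu\mu^\top$, so that $D\Pi=D-\mu\mu^\top$ and $\mu^\top P=\mu^\top$. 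By Lemma~\ref{l: existence and uniqueness}, $\bar h(\theta^*)=0$, and hence $\bar h(\theta)=-A\delta$ with $A=\Phi^\top D(I-\Pi P)\Phi$. Expanding the update gives
\[
\mathbb{E}\|\delta_{t+1}\|^2=\|\delta_t\|^2+2\alpha\,\delta_t^\top\bar h(\theta_t)+\alpha^2\,\mathbb{E}\|f+g\|^2 .
\]
The proof then rests on two facts: a negative drift term and a quadratic bound on the noise.

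\textbf{Drift (the key step).} For $W=\Phi x$,
\[
x^\top A x=W^\top D(I-P)W+(\mu^\top W)^2,
\]
using $W^\top\mu\mu^\top PW=(\mu^\top W)^2$. The standard Dirichlet identity (as in the gradient-splitting analysis of \cite{liu2021temporal}) gives
\[
W^\top D(I-P)W=\tfrac12\|W\|_D^2+\tfrac12\|PW\|_D^2\ \text{-type terms}\ \ge \|W\|_{\rm Dir}^2 .
\]
More precisely, $W^\top D(I-P)W=\|W\|_{\rm Dir}^2$ exactly. This follows by expanding $\frac12\sum_{s,s'}\mu(s)P(s'|s)(W(s)-W(s'))^2$ and using stationarity $\sum_s\mu(s)P(s'|s)=\mu(s')$. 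Therefore $\delta^\top A\delta=\|\Phi\delta\|_{\rm Dir}^2+(\mu^\top\Phi\delta)^2\ge\eta\|\delta\|^2$ by the definition of $\eta$ in \eqref{eq: eta}, and so $2\alpha\,\delta_t^\top\bar h(\theta_t)\le-2\alpha\eta\|\delta_t\|^2$. This is where the improved condition number enters, and it is the crux; the rest is routine.

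\textbf{Noise bound.} Since $\|\phi(s)\|\le1$, both $|r_{s}+\phi(s)^\top\theta|$ and $|r_s+\phi(s')^\top\theta-\phi(s)^\top\theta|$ are at most $r_{\max}+2\|\theta\|$. Hence
\[
\|f+g\|^2\le c\,(r_{\max}+\|\theta_t\|)^2\le 2c\big((r_{\max}+\|\theta^*\|)^2+\|\delta_t\|^2\big)
\]
for an absolute constant $c$ (about $9$ after careful counting). Then
\[
\mathbb{E}\|\delta_{t+1}\|^2\le(1-2\alpha\eta+2c\alpha^2)\,\mathbb{E}\|\delta_t\|^2+2c\alpha^2(r_{\max}+\|\theta^*\|)^2 .
\]
With $\alpha\le\eta/18$, and since $\eta\le1$ (because $\|\Phi x\|_{\rm Dir}^2+(\mu^\top\Phi x)^2\le O(1)$ for unit $x$), we get $2c\alpha^2\le\alpha\eta$. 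The contraction factor is then at most $1-\alpha\eta\le e^{-\alpha\eta}$. Unrolling and summing the geometric series $\sum_k(1-\alpha\eta)^k\le1/(\alpha\eta)$ yields
\[
\mathbb{E}\|\delta_T\|^2\le e^{-\alpha\eta T}\|\delta_0\|^2+O\!\left(\frac{\alpha(r_{\max}+\|\theta^*\|)^2}{\eta}\right).
\]

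\textbf{Rate.} Choose $\alpha=\min\{\eta/18,\ \log(T\|\delta_0\|^2\eta^2)/(\eta T)\}$. This makes the first term $O(1/(\eta^2T))$ and the second term $\tilde O\big((r_{\max}+\|\theta^*\|)^2/(\eta^2T)\big)$. Setting this equal to $\epsilon$ gives $T=\tilde O(\epsilon^{-1}\eta^{-2})$. The only delicate points are the exact Dirichlet identity and keeping the noise constant compatible with $18$.
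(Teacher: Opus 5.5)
Your proof is correct and follows the paper's argument essentially step for step. It uses the same expansion of $\|\delta_{t+1}\|^2$, the same drift identity $\delta^\top\bar h(\theta)=-\|\Phi\delta\|_{\rm Dir}^2-(\mu^\top\Phi\delta)^2\le-\eta\|\delta\|^2$ via $W^\top D(I-P)W=\|W\|_{\rm Dir}^2$, and the same bound $\mathbb{E}\|f+g\|^2\le 18\|\delta_t\|^2+18(r_{\max}+\|\theta^*\|)^2$. One small remark: you do not need $\eta\le 1$, which can fail for small $d$ (the paper only uses $\eta\le 3$), because $18\alpha^2\le\alpha\eta$ follows directly from $\alpha\le\eta/18$.
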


Our next theorem generalizes this result to Markov sampling, i.e., when $s_t, \hat{s}_t$ are sampled from two independent Markov chains. This requires further information on the mixing time of the Markov Chain as mentioned in Section \ref{sec: mixing}. We still choose a constant stepsize. 
We will require the constants \begin{small}
\begin{align*}
\label{eq: B and G def}    
B \;=\; 2\|\theta_0 - \theta^*\| + r_{\max} + \|\theta^*\|,
\quad
G \;=\; 42 B^2 + 30(r_{\max} + \|\theta^*\|)^2.
\end{align*}
\end{small} 

\begin{theorem}[Double-chain, Markov sampling, constant stepsize]
\label{t: linear markov}
Suppose Assumption~\ref{a: markov chain} holds. Consider the double-chain algorithm~\eqref{eq: linear stochastic averaged reward update} with Markov sampling and constant stepsize
\[
\alpha \;\le\; \frac{\eta B^2}{(3\tau_{\rm mix} + 1) G}.
\]
Then for all $T \ge \tau_{\rm mix}$,
\[
\mathbb{E} \| \theta_T - \theta^* \|^2 
\;\le\; e^{-2\alpha \eta (T - \tau_{\rm mix})} B^2
\;+\; \frac{\alpha G (3\tau_{\rm mix} + 1)}{2\eta}.
\]
Choosing $\alpha = \tilde{\Theta}(1/(\eta T))$ yields a convergence rate of $\tilde{O}(1/T)$ with sample complexity $\tilde{O}(\epsilon^{-1} \eta^{-2})$.
\end{theorem}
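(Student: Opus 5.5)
We treat the double-chain update \eqref{eq: linear stochastic averaged reward update} as linear stochastic approximation $\theta_{t+1}=\theta_t+\alpha(A(O_t)\theta_t+b(O_t))$ with observation $O_t=(s_t,s_t',\hat s_t)$. Under stationarity its mean field is $\bar h(\theta)=-\Phi^\top D(I-\Pi P)\Phi\theta+\Phi^\top D\Pi R$, with $\bar h(\theta^*)=0$ by Lemma~\ref{l: existence and uniqueness}. The observation $O_t$ is itself a Markov chain: it is the product of two independent copies of the original chain, with the transition pair appended. Hence it mixes geometrically by \eqref{eq: mixing}, at worst with constant $2C$ and the same $\beta$.

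\textbf{Step 1 (negative drift).} Show that $\delta^\top\bar h(\theta^*+\delta)\le-\eta\|\delta\|^2$. With $f=\Phi\delta$ we have $\delta^\top\Phi^\top D(I-P)\Phi\delta=\|f\|_{\rm Dir}^2$, using the standard identity $\langle f,(I-P)f\rangle_D=\|f\|^2_{\rm Dir}$ (stationarity of $\mu$), and $\delta^\top\Phi^\top\mu\mu^\top(I+\ldots)$ yields $(\mu^\top f)^2$ after noting $\mu^\top P=\mu^\top$. Together with the definition \eqref{eq: eta}, this gives the claimed inequality. This is the gradient-splitting step of \cite{liu2021temporal}, and the same computation presumably underlies the proof of Lemma~\ref{l: existence and uniqueness}.

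\textbf{Step 2 (boundedness).} Establish the following bounds from $\|\phi\|\le1$:
\[
\|A(O)\|\le 3,\qquad \|h(O,\theta)\|\le 3\|\theta-\theta^*\|+3(r_{\max}+\|\theta^*\|)+\ldots,
\]
so that $\|h(O,\theta)\|=O(\|\delta\|+r_{\max}+\|\theta^*\|)$. Using these, derive a crude a priori bound over windows of length $\tau_{\rm mix}$: if $\alpha\tau_{\rm mix}$ is small, then $\|\theta_t-\theta_{t-\tau}\|\le c\,\alpha\tau(\|\theta_t\|+\ldots)$, following Srikant--Ying / Bhandari et al. For the uniform bound $B$, we either induct to show $\mathbb{E}\|\delta_t\|^2\le B^2$, or use a conditional argument. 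The constant $B=2\|\theta_0-\theta^*\|+r_{\max}+\|\theta^*\|$ suggests showing $\|\delta_t\|\le B$-type control in expectation through the same recursion.

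\textbf{Step 3 (one-step recursion).} Expand
\[
\|\delta_{t+1}\|^2=\|\delta_t\|^2+2\alpha\delta_t^\top h(O_t,\theta_t)+\alpha^2\|h\|^2 .
\]
Split $\delta_t^\top h(O_t,\theta_t)=\delta_t^\top\bar h(\theta_t)+\delta_t^\top\bigl(h(O_t,\theta_t)-\bar h(\theta_t)\bigr)$. For the second term, compare with $\theta_{t-\tau}$ where $\tau=\tau_{\rm mix}$. The replacement error is $O(\alpha\tau\,G)$ by Step 2. Conditioned on $\mathcal F_{t-\tau}$, the residual $\mathbb{E}[h(O_t,\theta_{t-\tau})-\bar h(\theta_{t-\tau})]$ is at most $\alpha\cdot(\ldots)$ by the choice of $\tau_{\rm mix}(\alpha)$ with $C\beta^\tau\le\alpha$. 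This yields
\[
\mathbb{E}\|\delta_{t+1}\|^2\le(1-2\alpha\eta)\,\mathbb{E}\|\delta_t\|^2+\alpha^2(3\tau_{\rm mix}+1)G .
\]
The stepsize condition is used here to absorb the $\alpha^2\tau\|\delta_t\|^2$ cross terms into the drift.

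\textbf{Step 4 (unroll).} Iterate the recursion from $t=\tau_{\rm mix}$. Bound $\mathbb{E}\|\delta_{\tau_{\rm mix}}\|^2\le B^2$ using Step 2 over the first window, and sum the geometric series. This gives
\[
\mathbb{E}\|\delta_T\|^2\le e^{-2\alpha\eta(T-\tau_{\rm mix})}B^2+\frac{\alpha G(3\tau_{\rm mix}+1)}{2\eta}.
\]
Finally, set $\alpha\sim\log T/(\eta T)$, noting that $\tau_{\rm mix}=O(\log(1/\alpha))$.

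\textbf{Main obstacle.} The hard part is Step 3's Markov bias control with constants landing exactly on $G$. In particular, one must carefully handle the $\theta_t$ versus $\theta_{t-\tau}$ discrepancy, whose size depends on $\|\delta_t\|$ and not only on constants. The plan is to bound it by $\|\delta_t\|^2$ plus constant terms and then absorb the $\|\delta_t\|^2$ part using $\alpha\le\eta B^2/((3\tau_{\rm mix}+1)G)$.
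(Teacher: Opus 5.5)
Your skeleton matches the paper's: the gradient-splitting drift $\delta^\top\bar h(\theta^*+\delta)\le-\eta\|\delta\|^2$, a comparison with $\theta_{t-\tau_{\rm mix}}$ to control the Markov bias, a pathwise bound over the first window, and unrolling from $t=\tau_{\rm mix}$. The gap is in how you resolve your ``main obstacle'': it does not produce the recursion you write in Step~3.

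The iterate-dependent error terms have size $\alpha^2(\tau_{\rm mix}+1)\|\delta\|^2$. These are the second-moment piece $18\alpha^2\|\delta_t\|^2$ and the Markov-bias pieces, which involve $\|\delta_t\|$, $\|\delta_{t-\tau_{\rm mix}}\|$ and every $\|\delta_i\|$ in the window. Absorbing them into the drift gives a contraction factor $1-2\alpha\eta+c\,\alpha^2(\tau_{\rm mix}+1)$, which is strictly worse than $1-2\alpha\eta$. So you cannot arrive at $\mathbb{E}\|\delta_{t+1}\|^2\le(1-2\alpha\eta)\mathbb{E}\|\delta_t\|^2+\alpha^2(3\tau_{\rm mix}+1)G$. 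That recursion is exactly what Step~4 needs to produce $e^{-2\alpha\eta(T-\tau_{\rm mix})}$ and the $1/(2\eta)$.

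It is also unclear that the stated stepsize condition leaves room for absorption at all. It only gives $\alpha(3\tau_{\rm mix}+1)\le\eta B^2/G\le\eta/42$, and you must still convert $\|\delta_i\|$ and $\|\delta_{t-\tau_{\rm mix}}\|$ back to $\|\delta_t\|$. In the decaying-stepsize proof, where the paper does absorb, it needs $\sum_{i=t-\tau_{\rm mix}}^{t-1}\alpha_i\le\eta/2694$ and settles for $1-\eta\alpha_t$. The fact that $G$ contains $42B^2$ signals that these terms belong in the additive constant, not in the drift.

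The missing idea is to make the uniform moment bound $\mathbb{E}\|\delta_k\|^2\le B^2$ the engine of the argument, proved by induction jointly with the recursion. You list it only as an optional alternative in Step~2.

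\begin{itemize}
\item \textbf{Base case.} For $k\le\tau_{\rm mix}$, use your pathwise first-window bound. The stepsize condition together with $G\ge42B^2$ and $\eta\le3$ gives $\alpha\tau_{\rm mix}\le1/6$, hence $\|\delta_k\|\le2\|\delta_0\|+r_{\max}+\|\theta^*\|=B$.
\item \textbf{Inductive step.} Assume $\mathbb{E}\|\delta_k\|^2\le B^2$ for all $k\le t$. Then every term of the $\tau_{\rm mix}$-window decomposition is at most $\alpha(\tau_{\rm mix}+1)$ times a constant multiple of $B^2+(r_{\max}+\|\theta^*\|)^2$, and $18\alpha^2\mathbb{E}\|\delta_t\|^2\le18\alpha^2B^2$. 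This gives the recursion at time $t$ with the full factor $1-2\alpha\eta$ and additive term $\alpha^2G(3\tau_{\rm mix}+1)$.
\item \textbf{Closing.} The stepsize condition is exactly $\alpha^2G(3\tau_{\rm mix}+1)\le\alpha\eta B^2$. Hence $\mathbb{E}\|\delta_{t+1}\|^2\le(1-2\alpha\eta)B^2+\alpha\eta B^2\le B^2$, which closes the induction.
\end{itemize}

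With the recursion now valid for every $t\ge\tau_{\rm mix}$, your Step~4 goes through unchanged. Your absorption route would still give a bound of the form $e^{-c\alpha\eta(T-\tau_{\rm mix})}(\cdot)+O(\alpha\tau_{\rm mix}/\eta)$, and the same sample complexity for a small enough stepsize. It would not give the theorem with its stated stepsize condition and constants.
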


The rates of both of these theorems  match the state-of-the-art for TD learning in the discounted case \cite{bhandari2018finite} in terms of the scaling with the various parameters. 

A similar result can be obtained with a decaying stepsize $\alpha_t = a/(t+c_0)^\xi$. In this case, the mixing time is defined as \(\tau_{\rm mix} = \tau_{\rm mix}(\alpha_T)\). The next theorem formally handles this case. 

\begin{theorem}[Double-chain, Markov sampling, decaying stepsize]
\label{t: linear markov with decaying stepsize}
Suppose Assumption~\ref{a: markov chain} holds. Consider the double-chain algorithm~\eqref{eq: linear stochastic averaged reward update} with Markov sampling and stepsize $\alpha_t = a/(t + c_0)^\xi$ for $\xi \in (0,1]$, $a > 0$, and $c_0$ sufficiently large. Then for all $T \ge \tau_{\rm mix}$:
\begin{enumerate}

    \item If $\xi = 1$, then
    \begin{eqnarray*}
    \mathbb{E}\|\theta_T - \theta^*\|^2 
    & \;\le\; & O(1) \cdot \left( \frac{\tau_{\rm mix} + c_0}{T + c_0} \right)^{a\eta} \;+\; \tilde{O}\!\left( \frac{a^2}{(T + c_0)^{\min\{1, a\eta\}}} \right).
    \end{eqnarray*}

    \item If $\xi \in (0,1)$, let \(\Delta_T = (T+c_0)^{1-\xi}-(\tau_{\rm mix}+c_0)^{1-\xi}\) then
    \begin{eqnarray*}
    \mathbb{E}\|\theta_T - \theta^*\|^2
    & \;\le\; & O \left( \exp\!\left(-\frac{\eta a}{1-\xi}
\Delta_T\right) \right) \;+\; O\!\left( \frac{a}{ \eta (T + c_0)^\xi} \right).
    \end{eqnarray*}
\end{enumerate}
Choosing $a = \Theta(1/\eta)$ yields sample complexity $\tilde{O}(\epsilon^{-1/\xi} \eta^{-2/\xi})$ which reduces to $\tilde{O}(\epsilon^{-1} \eta^{-2})$ when $\xi=1$. The $O(1)$ terms depend polynomially on $\|\theta_0 - \theta^*\|$, $r_{\max}$, and $\|\theta^*\|$.
\end{theorem}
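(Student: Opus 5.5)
We prove Theorem~\ref{t: linear markov with decaying stepsize} by reusing the one-step drift inequality behind Theorem~\ref{t: linear markov} and then unrolling it with time-varying stepsizes.

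\textbf{Ingredients from the constant-stepsize analysis.} Write $\bar h(\theta)=\mathbb{E}_\mu[f+g]=-\Phi^\top D(I-\Pi P)\Phi\theta+\Phi^\top D\Pi R$, so that $\bar h(\theta^*)=0$ by Lemma~\ref{l: existence and uniqueness}. The key deterministic fact is a gradient-splitting style bound
\[
\delta^\top \Phi^\top D(I-\Pi P)\Phi\delta \ge \|\Phi\delta\|_{\rm Dir}^2+(\mu^\top\Phi\delta)^2\ge \eta\|\delta\|^2 .
\]
The second inequality is the definition of $\eta$. For the first, note that $x^\top D(I-P)x=\|x\|_{\rm Dir}^2$ by reversibility-free symmetrization, and $x^\top D e\mu^\top P x=(\mu^\top x)^2$ since $\mu^\top P=\mu^\top$. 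I will take this, together with the bounds $\|f+g\|\le 2(r_{\max}+\|\theta\|)+\dots$ (Lipschitz in $\theta$ with constant $O(1)$ since $\|\phi\|\le1$), as already established in the proof of Theorem~\ref{t: linear markov}.

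\textbf{Drift recursion.} Expanding $\|\delta_{t+1}\|^2$ gives
\[
\mathbb{E}\|\delta_{t+1}\|^2\le(1-2\alpha_t\eta)\mathbb{E}\|\delta_t\|^2+2\alpha_t\mathbb{E}[\delta_t^\top(h_t-\bar h(\theta_t))]+\alpha_t^2\mathbb{E}\|h_t\|^2 .
\]
I control the Markov bias term in the usual way of \cite{bhandari2018finite}, conditioning on $\theta_{t-\tau}$ with $\tau=\tau_{\rm mix}(\alpha_T)$. For the double-chain algorithm this requires the joint chain $(s_t,s_t',\hat s_t)$ to mix, which follows from independence of the two chains and \eqref{eq: mixing}. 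The conditioning step uses $\|\theta_t-\theta_{t-\tau}\|\le \sum_{k=t-\tau}^{t-1}\alpha_k O(\|\theta_{k}\|+r_{\max})$. Since $\alpha_t$ is nonincreasing and $c_0$ is large, $\alpha_{t-\tau}\le 2\alpha_t$ for $t\ge\tau$, so $\sum_{k=t-\tau}^{t-1}\alpha_k\le 2\tau\alpha_t$.

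With this, the recursion becomes, for $t\ge\tau_{\rm mix}$,
\[
\mathbb{E}\|\delta_{t+1}\|^2\le(1-\alpha_t\eta)\mathbb{E}\|\delta_t\|^2+C\,\alpha_t^2\tau_{\rm mix}(r_{\max}+\|\theta^*\|+\|\theta_0-\theta^*\|)^2 .
\]
This holds once $\alpha_t\tau_{\rm mix}\le c\,\eta$, which is guaranteed by choosing $c_0$ large. We also need a crude bound on $\mathbb{E}\|\delta_{\tau}\|^2$ over the first $\tau_{\rm mix}$ steps. It follows from $\|\theta_{t+1}\|\le(1+2\alpha_t)\|\theta_t\|+\alpha_t r_{\max}$ and $\sum_{t<\tau}\alpha_t\le1$, which gives an $O(1)$ polynomial in $\|\theta_0-\theta^*\|,r_{\max},\|\theta^*\|$.

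\textbf{Unrolling.} Unrolling from $\tau_{\rm mix}$ to $T$ gives
\[
\mathbb{E}\|\delta_T\|^2\le \prod_{k=\tau}^{T-1}(1-\alpha_k\eta)\,\mathbb{E}\|\delta_\tau\|^2+C'\sum_{t=\tau}^{T-1}\alpha_t^2\prod_{k=t+1}^{T-1}(1-\alpha_k\eta).
\]
We use $\prod(1-\alpha_k\eta)\le\exp(-\eta\sum\alpha_k)$ and compare sums with integrals.
\begin{itemize}
\item For $\xi=1$ the product is $\le((\tau+c_0)/(T+c_0))^{a\eta}$. The sum is $\lesssim a^2(T+c_0)^{-a\eta}\sum_t(t+c_0)^{a\eta-2}$, which splits into the cases $a\eta>1$, $a\eta=1$ (producing a log, absorbed in $\tilde O$), and $a\eta<1$. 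This yields $\tilde O(a^2/(T+c_0)^{\min\{1,a\eta\}})$.
\item For $\xi<1$ the product is $\exp(-\frac{\eta a}{1-\xi}\Delta_T)$. For the sum, I would show by induction that $u_t:=\sum_{s\le t}\alpha_s^2\prod_{k=s+1}^{t}(1-\alpha_k\eta)\le 2\alpha_t/\eta$. This uses $\alpha_{t}/\alpha_{t+1}\le1+\alpha_{t+1}\eta/2$, valid for large $c_0$, and gives $O(a/(\eta(T+c_0)^\xi))$.
\end{itemize}
The sample complexity claim then follows by setting $a=\Theta(1/\eta)$ and solving for $T$.

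\textbf{Main obstacle.} The hardest step is the Markov-noise term with time-varying stepsize. It requires a uniform-in-time bound on $\mathbb{E}\|\theta_t\|^2$ (or $\|\theta_t-\theta_{t-\tau}\|$) without projection. I would first try to prove by induction that $\mathbb{E}\|\delta_t\|^2\le B^2$-type bounds persist, using the recursion itself once $c_0$ makes $\alpha_t\tau\le c\eta$, exactly as the constant-stepsize proof does with $\alpha$ replaced by $\alpha_{t-\tau}\le2\alpha_t$.
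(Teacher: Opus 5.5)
Your proposal is correct, and its skeleton matches the paper's proof:
\begin{itemize}
\item the gradient-splitting drift $\delta^\top\bar h(\theta)\le-\eta\|\delta\|^2$;
\item conditioning on $\mathcal{F}_{t-\tau_{\rm mix}}$ with $\tau_{\rm mix}=\tau_{\rm mix}(\alpha_T)$;
\item a recursion with contraction factor $1-\eta\alpha_t$;
\item integral comparison for the product term;
\item explicit summation for $\xi=1$, and for $\xi<1$ the induction $u_t\le 2\alpha_t/\eta$, which is exactly Lemma~\ref{l: dynamic system with decaying stepsize} with $c_1=\eta$, $c_2=a$.
\end{itemize}

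The genuine difference is how you control the unprojected iterates inside the Markov-noise term.

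\textbf{Your route.} You carry over the joint induction of Lemma~\ref{l: all parts bound}. You assume $\mathbb{E}\|\delta_k\|^2\le B^2$ on the window and bound each cross term $\mathbb{E}[\|\delta_t\|\,\|\delta_i\|]$ by $B^2$. You then close the induction using $\sum_{i=t-\tau_{\rm mix}}^{t-1}\alpha_i\lesssim\eta$. This does close, because $r_{\max}+\|\theta^*\|\le B$ and $\alpha_t\le\sum_{i=t-\tau_{\rm mix}}^{t-1}\alpha_i$.

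\textbf{The paper's route.} The paper needs no a priori bound at all. Lemma~\ref{l: theta difference decaying}(2) gives the pathwise backward estimate $\|\theta_t-\theta_{t-\tau_{\rm mix}}\|\le 12(\|\theta_t\|+r_{\max}+2\|\theta^*\|)\sum_{i=t-\tau_{\rm mix}}^{t-1}\alpha_i$. From it, $\|\delta_{t-\tau_{\rm mix}}\|\le 2\|\delta_t\|+r_{\max}+3\|\theta^*\|$, so every noise term is expressed through the current iterate. This yields the bound of Lemma~\ref{l: markov noise decaying}, namely $(880\,\mathbb{E}\|\delta_t\|^2+304(r_{\max}+3\|\theta^*\|)^2)\sum_{i}\alpha_i$. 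The first part is then absorbed into the drift via $\sum_i\alpha_i\le\eta/2694$.

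\textbf{What each buys.}
\begin{itemize}
\item Your route is more modular: one lemma serves both the constant and decaying stepsize regimes.
\item The price is that your steady-state constant carries $B^2$, and hence $\|\theta_0-\theta^*\|^2$.
\item In the paper, the initial error enters only through $\beta_1$ in the transient term. The variance term has constant $\beta_2=946(r_{\max}+3\|\theta^*\|)^2$, independent of the initialization.
\item Since the theorem only asserts hidden constants polynomial in $\|\theta_0-\theta^*\|$, $r_{\max}$ and $\|\theta^*\|$, both routes establish the statement.
\end{itemize}

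Two minor slips, neither of which affects the argument:
\begin{itemize}
\item Your first drift inequality is actually an equality, $\delta^\top\Phi^\top D(I-\Pi P)\Phi\delta=\|\Phi\delta\|_{\rm Dir}^2+(\mu^\top\Phi\delta)^2$.
\item The growth bound should read $\|\theta_{t+1}\|\le(1+3\alpha_t)\|\theta_t\|+2\alpha_t r_{\max}$.
\end{itemize}
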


This is once again consistent with the state-of-the-art results in the discounted setting \cite{bhandari2018finite}, where an additional $O(1/\eta)$ factor is introduced when the distance is measured in the parameter space.

\subsection{Convergence Results for the Single-Chain Method}

For the single-chain algorithm given by Eq. (\ref{eq: onechain stochastic averaged reward update}), we can also establish similar results, albeit with worse scaling with respect to the condition number. We denote \(\delta_t^\theta = \theta_t - \theta^*\) and \(\delta_t^w = w_t - w^*\). We define \(\tau_{\rm mix}\) as the mixing time \(\tau_{\rm mix} = \tau_{\rm mix}(\min\{\alpha, \beta\})\).

\begin{theorem}[Single-chain, constant stepsize, Markov sampling]
\label{t: one chain constant}
Consider the single-chain algorithm in Eq.~\eqref{eq: onechain stochastic averaged reward update} with Markov sampling.
Assume constant stepsizes $\alpha_t\equiv \alpha>0$, $\beta_t\equiv \beta>0$, and let $\rho_0:=\beta/\alpha\le 1$.
Let $\tau_{\rm mix}:=\tau_{\rm mix}(\min\{\alpha,\beta\})$ denote the mixing time at accuracy level $\min\{\alpha,\beta\}$. Let $0<\alpha<\frac{1}{2\zeta}$ and 
let $\lambda>0$ satisfy
\begin{align*}
 & 0<\lambda^2<\frac{2\eta}{r_{\max}+2R_\theta}, \\   
 & \zeta:=\eta-\frac{\lambda^2(r_{\max}+2R_\theta)}{2}>0,
\end{align*}
and define
\[
\kappa:=1-2\alpha\zeta\in(0,1),
\qquad
G_1:=e^{-2\rho_0\alpha}\in(0,1),
\] where $
\rho:=\max\{\kappa,G_1\}\in(0,1)$.
Then for all $t\ge \tau_{\rm mix}$,
\begin{align*}
\mathbb{E}\|\theta_t-\theta^*\|^2
\le\;&
\frac{4\alpha (r_{\max}+2R_\theta)\,(t-\tau_{\rm mix})\,\rho^{\,t-\tau_{\rm mix}}}{\lambda^2} 
\;+\; 4R_\theta^2\,\kappa^{\,t-\tau_{\rm mix}}
\;+\;\frac{\alpha\,G_{\rm const}}{2\zeta},
\end{align*}
and where 
\begin{align*}
G_{\rm const}
:=\;& \frac{(16\tau_{\rm mix}+6)(r_{\max}+2R_\theta)}{\lambda^2}
\;+\; (88\tau_{\rm mix}+4)\,(r_{\max}+3R_\theta)^2.
\end{align*}
\end{theorem}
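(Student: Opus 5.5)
We analyze the coupled pair $(\delta^w_t,\delta^\theta_t)$ through a weighted Lyapunov function. The weight $\lambda^2$ in the statement suggests
\[
L_t=\|\delta^\theta_t\|^2+\frac{c}{\lambda^2}\|\delta^w_t\|^2,
\]
with $c$ proportional to $(r_{\max}+2R_\theta)$. We could also handle the two recursions separately, solving the $w$-recursion first and then feeding it into the $\theta$-recursion as a perturbation. The $(t-\tau_{\rm mix})\rho^{t-\tau_{\rm mix}}$ term in the bound is exactly what unrolling two geometric recursions with rates $\kappa$ and $G_1$ produces, so I will take the sequential route.

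\textbf{Step 1 ($w$-recursion).} Since $\textbf{Proj}_{R_w}$ is nonexpansive and $w^*=\Phi^\top\mu$ lies in the ball (choose $R_w\ge 1$), the update $w_{t+1}-w^*$ equals the projection of $(1-\beta)\delta^w_t+\beta(\phi(s_t)-\Phi^\top\mu)$. I will bound $\mathbb{E}\|\delta^w_t\|^2$ using the standard Markov-noise argument: condition on $s_{t-\tau_{\rm mix}}$ and use the mixing bound \eqref{eq: mixing} with $C\beta^{\tau_{\rm mix}}\le\min\{\alpha,\beta\}$. The drift $-\beta\|\delta^w_t\|^2$ gives contraction by $e^{-2\beta}=G_1^{1/\rho_0\alpha\cdot\beta}$, i.e.\ $G_1$ per step, plus an $O(\beta\tau_{\rm mix})$ noise floor.

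\textbf{Step 2 ($\theta$ one-step drift).} Write $g(s,s',w,\theta)=\bar g(\theta)+\text{noise}-(r_s+\phi(s)^\top\theta)(w-w^*)$. Here $\bar g$ is the double-chain mean field. The key fact, used in the double-chain theorems via gradient splitting, is
\[
\langle\delta,\Phi^\top D(P-I)\Phi\delta-\Phi^\top\mu\mu^\top\Phi\delta\rangle=-\|\Phi\delta\|^2_{\rm Dir}-(\mu^\top\Phi\delta)^2\le-\eta\|\delta\|^2.
\]
This uses $\langle x,(I-P)x\rangle_D=\|x\|^2_{\rm Dir}$. The cross term satisfies $|r_s+\phi^\top\theta_t|\le r_{\max}+R_\theta$. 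I handle it with Young's inequality, $2ab\le\lambda^2a^2+b^2/\lambda^2$, multiplied by $(r_{\max}+2R_\theta)/2$. The drift then degrades to $-\zeta\|\delta^\theta_t\|^2$ and leaves $\frac{(r_{\max}+2R_\theta)}{\lambda^2}\alpha\|\delta^w_t\|^2$. Projection onto the $R_\theta$-ball (containing $\theta^*$) only decreases distance, and all iterates are bounded by $R_\theta$ and $R_w$. Markov correlation between $\theta_t$, $w_t$ and $s_t$ is handled as in Theorem~\ref{t: linear markov}. I compare with $\theta_{t-\tau_{\rm mix}}$ and $w_{t-\tau_{\rm mix}}$, bound $\|\theta_t-\theta_{t-\tau}\|\le\alpha\tau(r_{\max}+3R_\theta)$ using boundedness, and apply mixing. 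This produces the $(88\tau_{\rm mix}+4)(r_{\max}+3R_\theta)^2\alpha^2$ term.

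\textbf{Step 3 (unrolling).} The result is
\[
\mathbb{E}\|\delta^\theta_{t+1}\|^2\le\kappa\,\mathbb{E}\|\delta^\theta_t\|^2+\frac{\alpha(r_{\max}+2R_\theta)}{\lambda^2}\mathbb{E}\|\delta^w_t\|^2+\alpha^2G',
\]
with $\kappa=1-2\alpha\zeta$. I substitute Step 1: $\mathbb{E}\|\delta^w_t\|^2\le 4\,G_1^{t-\tau}+O(\beta\tau)$, using $\|\delta^w\|\le2$. Summing the convolution gives $\sum_k\kappa^{t-k}G_1^{k}\le (t-\tau)\rho^{t-\tau}$, which is the first term. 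The initial term is $\|\delta^\theta_{\tau}\|^2\le4R_\theta^2$. The noise sums, bounded by $1/(1-\kappa)=1/(2\alpha\zeta)$, give $\alpha G_{\rm const}/(2\zeta)$. The $w$-floor contributes the $(16\tau_{\rm mix}+6)/\lambda^2$ part, using $\beta\le\alpha$.

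\textbf{Main obstacle.} The hardest part is the Markov-noise handling in Step 2 for the product term $(r_{s_t}+\phi(s_t)^\top\theta_t)w_t$. Here $w_t$ is correlated with $s_t$ through the same chain. I would first decouple via $w_{t-\tau_{\rm mix}}$, which moves by at most $2\beta\tau_{\rm mix}$. The remaining work is bookkeeping so that the constants match $G_{\rm const}$.
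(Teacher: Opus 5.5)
Your proposal is correct and follows essentially the same route as the paper. You bound $\mathbb{E}\|\delta^w_t\|^2$ first by a $\tau_{\rm mix}$-step Markov-noise argument (contraction $1-2\beta\le G_1$, floor $(16\tau_{\rm mix}+6)\beta$), then control the $\theta$-drift with the gradient-splitting identity plus Young's inequality at weight $\lambda^2$, which gives $\kappa=1-2\alpha\zeta$ and the coupling term $\alpha\frac{r_{\max}+2R_\theta}{\lambda^2}\mathbb{E}\|\delta^w_t\|^2$, and finally handle the $\theta$-noise by comparison with $\theta_{t-\tau_{\rm mix}},w_{t-\tau_{\rm mix}}$ and unroll. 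One bookkeeping caveat, which the paper's proof shares: the convolution $\sum_{j=\tau_{\rm mix}}^{t-1}\kappa^{t-1-j}G_1^{j-\tau_{\rm mix}}$ is only bounded by $(t-\tau_{\rm mix})\rho^{t-\tau_{\rm mix}-1}$ (it equals $1$ at $t=\tau_{\rm mix}+1$), so the first term of the bound needs an extra factor $\rho^{-1}\le e^{2\beta}$.
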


\paragraph{Sample Complexity.}
As discussed in Appendix~\ref{appendix: projection radius}, the projection radii can be chosen so that
$R_\theta = O(\eta'^{-1/2})$ and $R_w=O(1)$, where $\eta'=\lambda_{\min}(\Phi^\top D \Phi)$.
Moreover, Appendix~\ref{appendix: one chain constant} shows one may select $\lambda^2=\Theta(\eta'^{1/2}\eta)$, which implies
$\zeta=\Theta(\eta)$ (e.g., by taking $\lambda^2$ a fixed fraction of $2\eta/(r_{\max}+2R_\theta)$).
With these choices, the dominant part of $G_{\rm const}$ comes from the $(r_{\max}+2R_\theta)/\lambda^2$ term and scales as
$G_{\rm const}= \tilde O\!\big(\tau_{\rm mix}/(\eta'\eta)\big)$ (up to polynomial factors in $r_{\max}$ and $R_\theta$),
so the steady-state error term satisfies
\[
\frac{\alpha\,G_{\rm const}}{\zeta}
=\tilde O\!\left(\frac{\alpha\,\tau_{\rm mix}}{\eta'\eta^2}\right).
\]
Thus, choosing $\alpha=\tilde\Theta(1/(\eta T))$ yields a $\tilde O(1/T)$ rate (as in the two-chain case), but the overall
condition-number dependence is worse than quadratic: plugging $\alpha=\tilde\Theta(1/(\eta T))$ into the steady-state term gives
$\tilde O\!\big(\tau_{\rm mix}/(\eta'\eta^3 T)\big)$, which becomes quartic $\tilde O(1/(\eta^4 T))$ in the common regime where
$\eta'=\Theta(\eta)$ (the precise relation between $\eta'$ and $\eta$ is discussed in Appendix~\ref{appendix: condition number}).

Finally, we can also give a similar theorem for the single-chain method with decaying step-size 
\[ \alpha_t = \frac{a}{(t+c_0)^{\xi}},\] which parallels Theorem~\ref{t: linear markov with decaying stepsize} for the double-chain algorithm but with the $\eta^{-2}$ scaling replaced with $\eta^{-4}$. For reasons of space,  {\bfseries we state this in the appendix as  Theorem~\ref{t: one chain}}.

\noindent {\bf Remark:} Some previous works add an additional  $(r_t - g)^2$ to the error measure, e.g., \cite{zhang2021finite, haque2024stochastic}.  We observe that using  $r_t = (\sum_{i=0}^{t-1} r_{S_i}) / t $ as a stochastic estimate of $g$ allows us to obtain  
\begin{equation}
\label{eq: reward estimation}
\mathbb{E} \left[ (r_t - g)^2 \right] = O(1/T). 
\end{equation} 
\begin{wrapfigure}{l}{0.2\textwidth}
  \begin{minipage}{0.2\textwidth}
    \centering
\begin{tikzpicture}[scale=3, >=Stealth]
\draw[->, thick] (0,0) -- (1,0);
\draw[->, thick] (0,0) -- (0,1);
\draw[->, thick] (0,0) -- (0.25,0.75) node[pos=1, right] {$\nabla f(\theta)$};
\draw[->, thick] (0,0) -- (0.3,0.3) node[pos=1, above right] {$\theta - \theta^*$};
\draw[->, thick] (0,0) -- (0.72,0.25) node[pos=1, right] {$2h(\theta)$};
\draw[thick, purple] 
  (20:0.2) arc[start angle=20, end angle=45, radius=0.2];
\draw[thick, green!70!black] 
  (45:0.2) arc[start angle=45, end angle=70, radius=0.2];
\end{tikzpicture}
\caption{Illustration of key property of the gradient splitting. The splitting vector $h(\theta)$ has the same inner product (up to a factor of $2$) as the true gradient with the vector $\theta-\theta^*$.}
\label{fig:gsplit}
  \end{minipage}
\end{wrapfigure}
Thus, this quantity could easily be estimated separately without affecting the results of this section.
A detailed discussion is in Appendix \ref{appendix: condition number}. 

\subsection{Proof Idea}

Our main observation is that the methods we propose can be analyzed in much the same way as standard {\em Stochastic Gradient Descent (SGD)}, which is usually more tractable than stochastic approximation-based methods. The critical tool enabling this perspective is the notion of a \emph{gradient splitting}, introduced by \cite{liu2021temporal}.

To illustrate the idea, let us consider a convex quadratic function $f(\theta) \;=\; (\theta - \theta^*)^T \,A\, (\theta - \theta^*)$, where $A$ is symmetric and positive definite; and let us also consider a linear function $h(\theta) \;=\; B\,(\theta - \theta^*)$. We will say $h(\theta)$ is a gradient splitting of $f(\theta)$ if $B + B^T=2A$. In other words, each $(i,j)$-entry of $A$ can be split between the $(i,j)$ and $(j,i)$-entries of the (generally non-symmetric) matrix $B$. This decomposition is not unique, since there are many ways to split the entries of $A$.

An immediate implication of this definition is that for any $\theta$, $(\theta^* - \theta)^T \nabla f(\theta)=2 (\theta^* - \theta)^Th(\theta)$. Hence, updating in the direction $-h(\theta)$ behaves just like an update in the direction of the true gradient $-\nabla f(\theta)$. An illustration is given in Figure \ref{fig:gsplit}. One can further conclude
\begin{align*}
2 (\theta - \theta^*)^Th(\theta) 
= & (\theta^* - \theta)^T \nabla f(\theta) \\
= & (\theta^* - \theta)^T (\nabla f(\theta) - \nabla f(\theta^*)) \\
= & 2 f(\theta), 
\end{align*}
where the last equality holds because $f(\theta)$ is a quadratic function.

Concretely, the update rule
\begin{equation}
\label{eq:gs_update}
\theta_{t+1} \;=\; \theta_t 
\;-\; \alpha_t \,\bigl[h(\theta_t) + w_t\bigr],
\end{equation}
where $w_t$ is zero-mean i.i.d. noise, exhibits convergence properties analogous to classical SGD. In fact, using a Taylor expansion, 
\begin{equation*}
E\!\bigl[\|\theta_{t+1} - \theta^*\|_2^2 \big| \theta_t \bigr]
=
\|\theta_t - \theta^*\|_2^2
-\alpha_t\,\nabla f(\theta_t)^T(\theta_t - \theta^*)
+ O(\alpha_t^2).
\end{equation*}
Notice that the middle term is precisely what one would get with a standard gradient step, even though \eqref{eq:gs_update} does not explicitly use $\nabla f(\theta_t)$. Thus, standard SGD analysis applies, with adjustments for the different higher-order $O(\alpha_t^2)$ terms.

Building on these ideas, our contribution in this paper is to show that an algorithm for the average reward case can also be written as a gradient splitting. Specifically, consider Eq. (\ref{eq: linear stochastic averaged reward update}), where the expected update direction is $E_{\mu} \bigl[f(s_t, \hat{s}_t)\bigr]+g(s_t, s_t')$, with notation as in Eq. (\ref{eq: linear stochastic averaged reward update}). We can rewrite this expected update as $\Phi^TD \Bigl(R + P\Phi\theta - \Phi\theta\Bigr)-\Phi^T\mu\mu^T \bigl(R + \Phi\theta\bigr)$; our key technical observation is to show that  it serves as a gradient splitting for the composite function \[ \|\Phi(\theta - \theta^*)\|_{\mathrm{Dir}}^2+\bigl(\mu^T \Phi(\theta - \theta^*)\bigr)^2. \]  Once this gradient-splitting viewpoint is in place, we can leverage the standard SGD descent recursion, but closing the argument requires new bounds that control the discrepancy between our stochastic update and the true gradient—specifically, we must show the resulting 
$O\left(\alpha_t^2\right)$ and bias terms remain uniformly small/summable under the chosen stepsizes.

\section{Numerical Results} 

We compare our algorithms with prior work on fifteen tabular MDPs from OpenAI Gym and MO-Gymnasium. We focus on tabular environments because the exact solution can be computed, allowing us to directly quantify how accurately each method approximates the true solution. This enables a fair comparison between algorithms that converge to a single point and those that converge only to a set, since we evaluate all methods using their approximation error relative to the true solution rather than properties of their iterates. 

Across all fifteen environments, we evaluate both our Double-Chain and Single-Chain algorithms, together with the baselines listed in Table~\ref{tb:comparison}. Overall, we observe that our methods outperform the prior literature in most settings. Additional experimental details are provided in Appendix~\ref{sec:numerical}. 

\section{Conclusion}

The theoretical analysis of average reward TD  has traditionally lagged behind the discounted setting due to  mathematical difficulties, notably the non-contractive nature of the Bellman operator. This disparity manifested in prior works through various limitations: assumptions incompatible with the tabular case, convergence guarantees only to sets or sample-dependent points, explicit dimension scaling, and slower (quartic) convergence rates with respect to condition numbers -- which, unlike in the discounted case, could only be removed with more intricate algorithms like variance reduction.

In this work, we employed the gradient splitting technique to  provide a finite-sample analysis that closes the gap with discounted TD theory. Our methods guarantee convergence to a unique, well-defined solution for both tabular and linear approximation cases. Furthermore, the convergence bounds are dimension-free (in the standard sense) and exhibit quadratic scaling with the relevant condition number, mirroring the performance characteristics known for discounted TD. This contribution removes the persistent caveats associated with average reward TD analysis.

\bibliographystyle{unsrt}  
\bibliography{references}  

\newpage
\appendix
\include{sections/radius}
\include{sections/comparison}
\include{sections/double_constant}
\include{sections/double_decaying}

\include{sections/one_constant}
\include{sections/one_decaying}
\include{sections/numerical_result}

\end{document}

%% file: sections/radius.tex
\section{Bellman Operator and Projection}
\label{appendix: contraction}

In this section, we establish the contractivity of $\Pi T_{\pi}$ and prove the existence and uniqueness of the solution to Eq.\eqref{eq:projected_bellman_first}. Based on the contraction factor, we subsequently determine the choice of the projection radius used in the single-chain algorithm.

\subsection{Contraction of $\Pi T_{\pi}$}

In this section, we prove that $\Pi T_{\pi}$ is a contractive operator (recall that $\Pi = I - e \mu^T)$. 

\begin{lemma}
\label{l: contraction}
The operator $\Pi T_\pi$ is a contractive operator satisfying
\begin{align*}
\Vert \Pi T_{\pi} W_1 - \Pi T_{\pi} W_2 \Vert_{D} \le  \omega \Vert W_1 - W_2 \Vert_D,
\end{align*}
where $\omega = \sqrt{\max_{\langle z, e \rangle_{D} = 0, \Vert z \Vert_D = 1} z^\top P^\top D P z} < 1$. 
\end{lemma}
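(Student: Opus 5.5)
Since $T_\pi W_1 - T_\pi W_2 = P(W_1 - W_2)$, it suffices to set $x = W_1 - W_2$ and prove
\[
\|\Pi P x\|_D \le \omega \|x\|_D \quad \text{for every vector } x .
\]
Two standard facts drive the argument. First, $\Pi = I - e\mu^\top$ is the $D$-orthogonal projection onto $\{z : \langle z, e\rangle_D = 0\}$: indeed $\langle (I - e\mu^\top) y, e\rangle_D = \mu^\top y - \mu^\top y = 0$, and $e\mu^\top y$ is a multiple of $e$. Second, $Pe = e$ and $\mu^\top P = \mu^\top$, so $\Pi P e = 0$ and $\Pi P = P\Pi$, since $P e \mu^\top = e\mu^\top = e\mu^\top P$.

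\textbf{Reduction to the mean-zero subspace.} Decompose $x = z + c e$ with $z = \Pi x$ and $c = \mu^\top x$. By $D$-orthogonality, $\|x\|_D^2 = \|z\|_D^2 + c^2 \ge \|z\|_D^2$. Moreover $\Pi P x = \Pi P z$ because $\Pi P e = 0$. Since $\Pi$ is a $D$-orthogonal projection, it is nonexpansive in $\|\cdot\|_D$, so
\[
\|\Pi P z\|_D \le \|P z\|_D = \sqrt{z^\top P^\top D P z} \le \omega \|z\|_D \le \omega \|x\|_D ,
\]
where the middle inequality is the definition of $\omega$ after rescaling $z$ to unit $D$-norm; the case $z = 0$ is trivial. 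This proves the contraction inequality.

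\textbf{Strict inequality $\omega < 1$.} This is the main step. By Jensen's inequality applied row-wise and stationarity of $\mu$,
\[
\|Pz\|_D^2 = \sum_s \mu(s)\Bigl(\sum_{s'} P(s'|s) z(s')\Bigr)^2 \le \sum_{s,s'} \mu(s) P(s'|s) z(s')^2 = \|z\|_D^2 .
\]
Equality holds only if, for each $s$, $z$ is constant on the support of $P(\cdot|s)$; in that case $z(s') = (Pz)(s)$ for every $s'$ with $P(s'|s) > 0$.

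To turn this into strict contraction, use compactness: the set $\{z : \langle z, e\rangle_D = 0, \ \|z\|_D = 1\}$ is compact, so the maximum defining $\omega$ is attained at some $z$. It therefore suffices to rule out $\|Pz\|_D = \|z\|_D$ for such a $z$.

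Suppose equality holds. Then $z$ is constant on each row support, so $Pz$ satisfies $(Pz)(s) = z(s')$ whenever $P(s'|s) > 0$. Write $y = Pz$; it also satisfies $\langle y, e\rangle_D = 0$ by stationarity and has $\|y\|_D = 1$. Iterating the equality argument on powers is the tempting route, but it is not needed: aperiodicity and irreducibility (Assumption \ref{a: markov chain}) already give some $m$ with $P^m$ entrywise positive.

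The cleanest way to finish is as follows. Equality in Jensen for $P^m$ is exactly what we need: since $P^m > 0$ entrywise, equality for $P^m$ would force $z$ to be constant on all of $S$, hence $z = 0$ given mean zero, a contradiction.

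To get equality for $P^m$, note that
\[
1 = \|z\|_D \ge \|Pz\|_D \ge \cdots \ge \|P^m z\|_D ,
\]
where each step uses Jensen and $P^k z$ stays mean-zero. Because $z$ is constant on row supports, $\|P^2 z\|_D = \|Pz\|_D$ need not follow automatically from equality at the first step. So I instead argue directly via the contrapositive: work with $P^m$ for the strict bound $\|P^m z\|_D < \|z\|_D$.

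If only this $m$-step contraction can be secured, I would also verify one-step strictness directly. Equality $\|Pz\|_D = \|z\|_D$ forces $z(s') = (Pz)(s)$ along every edge $s \to s'$. Hence all successors of $s$ share one value of $z$, and $z(s) = (Pz)(u)$ for any predecessor $u$ of $s$. I would then use irreducibility and aperiodicity, via a cycle-length gcd argument, to show these constraints make $z$ constant, hence $z = 0$. This combinatorial step, in which periodic chains would give counterexamples, is the one I expect to be the main obstacle.
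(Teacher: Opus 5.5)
Your proof of the inequality itself is correct and is essentially the paper's argument: split $W_1-W_2$ $D$-orthogonally, use $\Pi P e=0$ and the nonexpansiveness of $\Pi$, then apply the definition of $\omega$. Your Jensen bound $\omega\le 1$ and its equality characterization ($z$ constant on each row support) are also right. The gap is exactly the step you flag, and it cannot be closed. Irreducibility plus aperiodicity does \emph{not} force such a $z$ to be constant, so no cycle-length/gcd argument can work, and the claim $\omega<1$ is false as stated.

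\textbf{Counterexample.} Take $S=\{1,2,3\}$ with $P(2|1)=P(3|2)=1$ and $P(1|3)=P(3|3)=\tfrac12$. This chain is irreducible, and aperiodic because of the self-loop at $3$; its stationary distribution is $\mu=(\tfrac14,\tfrac14,\tfrac12)$. The row supports are $\{2\},\{3\},\{1,3\}$, so the only equality constraint is $z(1)=z(3)$. For $z=(1,-3,1)^\top$ we have:
\begin{itemize}
\item $\mu^\top z=0$;
\item $Pz=(-3,1,1)^\top$ and $\mu^\top Pz=0$, so $\Pi Pz=Pz$;
\item $\|z\|_D^2=\|Pz\|_D^2=3$.
\end{itemize}
Hence $\omega=1$, and $\|\Pi T_\pi W_1-\Pi T_\pi W_2\|_D=\|W_1-W_2\|_D$ whenever $W_1-W_2=z$. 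So your remark that only periodic chains give counterexamples is not right. The paper's proof makes the same unjustified leap: it simply asserts that $\|Pz\|_D=\|z\|_D$ forces $z$ to be a multiple of $e$. In general, $\omega<1$ holds iff the graph on $S$ joining any two states with a common predecessor is connected, i.e.\ iff $D^{-1}P^\top D P$ is irreducible.

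\textbf{What can be salvaged.} Your $P^m$ idea does prove an $m$-step version. If $P^m$ is entrywise positive, strict Jensen plus compactness give $\omega_m:=\max_{\langle z,e\rangle_D=0,\ \|z\|_D=1}\|P^m z\|_D<1$. Since $(\Pi P)^m=\Pi P^m=P^m\Pi$, the map $(\Pi T_\pi)^m$ is an $\omega_m$-contraction in $\|\cdot\|_D$. Alternatively, one-step strictness follows under an extra hypothesis:
\begin{itemize}
\item $P(s|s)>0$ for all $s$: then every edge $s\to s'$ forces $z(s)=z(s')$, and irreducibility makes $z$ constant;
\item $P>0$ entrywise.
\end{itemize}
Either way the statement itself must change. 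The later bound on $\|\theta^*\|$ that uses $1/\sqrt{1-\omega^2}$ then also needs repair. One option is to test the fixed-point equation against $\theta^*$: this gives $\eta\|\theta^*\|^2\le \theta^{*\top}\Phi^\top D\Pi R$, hence $\|\theta^*\|\le 2r_{\max}/\eta$ without any reference to $\omega$.
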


\begin{proof}[Proof of Lemma \ref{l: contraction}]
First, we notice that, since $P$ is irreducible according to Assumption \ref{a: markov chain}, the eigenvector whose eigenvalue is $1$ is unique (up to a constant factor) and must be the all-one vector.    

For any vector $\Delta = W_1 - W_2$, it can be decomposed into $\Delta = \Delta_{\parallel} + \Delta_{\perp}$ where $\langle \Delta_{\perp}, e \rangle_{D} = 0$ and $\langle \Delta_{\parallel}, e \rangle_{D} = \Vert \Delta_{\parallel} \Vert_D \cdot \Vert e \Vert_D$. With such decomposition, 
\begin{align*}
\Vert \Pi T_{\pi} W_1 - \Pi T_{\pi} W_2 \Vert_{D} = \Vert \Pi  P \Delta_{\perp} \Vert_D.
\end{align*}
Therefore, 
\begin{align*}
\Vert \Pi T_{\pi} W_1 - \Pi T_{\pi} W_2 \Vert_{D}^2 \le \max_{\langle \Delta_{\perp}, e \rangle_{D} = 0} \Delta_{\perp}^\top P^\top D P \Delta_{\perp} \le \max_{\langle z, e \rangle_{D} = 0, \Vert z \Vert_D = 1} z^\top P^\top D P z \cdot \Vert \Delta \Vert_D^2,
\end{align*}
which indicates the contraction factor is 
\[
\omega := \sqrt{\max_{\langle z, e \rangle_{D} = 0, \Vert z \Vert_D = 1} z^\top P^\top D P z}. 
\]
It is easy to see that this factor cannot be larger than $1$. If it is exactly $1$, then there exists $z$ such that 
\begin{align*}
\Vert z \Vert_D = 1 = \max_{\langle z, e \rangle_{D} = 0, \Vert z \Vert_D = 1} z^\top P^\top D P z = \Vert P z \Vert_D. 
\end{align*}
This indicates that $z$ must be a multiple of all-one vector, which contradicts with $\langle z, e \rangle_{D} = 0$. 
\end{proof}

\subsection{Existence and Uniqueness of $\theta^*$}

Recall that in Lemma \ref{l: existence and uniqueness}, we define $\theta^*$ as the solution to the linear system $\Phi^\top D (I-\Pi P) \Phi \theta = \Phi^\top D R$ and we claim that $\theta^*$ exists and is unique. In this section, we give the proof of this lemma. 

\begin{proof}[Proof of Lemma \ref{l: existence and uniqueness}]
Recall that in Eq.(\ref{eq: eta}) we already established that 
\begin{align*}
\eta = \min_{x:\, \Vert x \Vert = 1} \Vert \Phi x \Vert_{\rm Dir}^2 + (\mu^\top \Phi x)^2 > 0.
\end{align*}
Observe that 
\begin{align*}
\Vert \Phi x \Vert_{\rm Dir}^2 + (\mu^\top \Phi x)^2 = x^\top \Phi^\top D(I-P)\Phi x + x^\top \Phi^\top \mu \mu^\top \Phi x = x^\top \Phi^\top D (I-\Pi P) \Phi x.
\end{align*}
Suppose $\Phi^\top D (I-\Pi P) \Phi$ is not invertible. Then there exists a nonzero vector $x_0$ such that $x_0^\top \Phi^\top D (I-\Pi P) \Phi x_0 = 0$, which contradicts the fact that $\eta > 0$. Therefore, the matrix $\Phi^\top D (I-\Pi P) \Phi$ must be invertible. It follows that the linear system admits a unique solution $\theta^*$.
\end{proof}

\subsection{Choice of Projection Radius}
\label{appendix: projection radius}

Recall that in our single chain algorithm, we need to project both $w$ and $\theta$ onto a ball of radius $R_w$ and $R_\theta$, respectively. In this section, we will discuss on the choice of radius such that $w^*$ and $\theta^*$ are in the feasible set. 

\textbf{Bound on $\|w^*\|$: } This is easier because we can explicitly write $w^* = \Phi^\top \mu$. Using the fact that $\Vert \phi(s) \Vert \le 1$ we can conclude $\Vert w^* \Vert \le 1$. Therefore, we need $R_w \ge 1$. 

\textbf{Bound on $\|\theta^*\|$: } Recall that $W^*$ is defined as
\begin{align*}
W^* = \lim_{t \to \infty} \sum_{k=0}^{t-1} (P^k-e\mu^\top) R.
\end{align*}
According to Eq.(\ref{eq: mixing}), 
\begin{align*}
([P^k]_i - \mu^\top)R \le \Vert[P^k]_i - \mu^\top \Vert_1 \cdot \Vert R \Vert_{\infty} \le r_{\rm max} C \beta^k,
\end{align*}
which indicates $\Vert W^* \Vert_D \le r_{\rm max} C / (1-\beta)$. 

By the Pythagorean theorem,
\begin{align*}
\Vert \Phi \theta^* - W^* \Vert^2_D 
= & \Vert W^* - \Pi_D \Pi T_\pi W^* \Vert^2_D + \Vert \Pi_D \Pi T_\pi W^* - \Phi \theta^* \Vert^2_D \\
\le & \Vert W^* - \Pi_D \Pi T_\pi W^* \Vert^2_D + \omega^2 \Vert \Phi \theta^* - W^* \Vert^2_D, 
\end{align*}
where we use the fact $\Pi T_\pi \Phi \theta^* = \Phi \theta^*$ and Lemma \ref{l: contraction}. Therefore, 
\begin{align*}
\Vert \Phi \theta^* - W^* \Vert_D \le \frac{1}{\sqrt{1-\omega^2}} \Vert W^* - \Pi_D \Pi T_\pi W^* \Vert_D.
\end{align*}
Further, according to the Pythagorean theorem, $\Vert W^* - \Pi_D \Pi T_\pi W^* \Vert_D \le \Vert W^* \Vert_D$. Therefore, 
\begin{align*}
\Vert \Phi \theta^* \Vert_D \le \Vert W^* \Vert_D + \Vert \Phi \theta^* - W^* \Vert_D \le \frac{2}{\sqrt{1-\omega^2}} \Vert W^* \Vert_D \le \frac{2 r_{\rm max} C}{(1-\beta)\sqrt{1-\omega^2}}.
\end{align*}
This suggests that
\begin{align*}
\Vert \theta^* \Vert \le \frac{1}{\sqrt{\lambda_{\rm min}(\Phi^T D \Phi)}} \Vert \Phi \theta^* \Vert_D \le \frac{2 r_{\rm max} C}{(1-\beta)\sqrt{(1-\omega^2)\lambda_{\rm min}(\Phi^\top D \Phi)}}.
\end{align*}
Therefore, to ensure $\theta^*$ is in the feasible set, we can set $R_\theta$ such that
\begin{align*}
R_\theta \ge \frac{2 r_{\rm max} C}{(1-\beta)\sqrt{(1-\omega^2)\lambda_{\rm min}(\Phi^\top D \Phi)}}.
\end{align*}

%% file: sections/comparison.tex
\section{Comparison with Previous Works}
\label{appendix: condition number}

In this section, we will expand our result on reward estimation and compare the difference between condition numbers with previous works. 

\subsection{Reward Estimation}

Many previous works \cite{zhang2021finite, haque2024stochastic} also include convergence of the averaged reward function. Although not stated in our theorems, we can also achieve such convergence as mentioned in Eq.(\ref{eq: reward estimation}). In this section, we give a proof of Eq.(\ref{eq: reward estimation}) based on a Central Limit Theorem for Markov chains. 

\begin{proof}
For all \(t = \tau_{\rm mix}, \ldots, T\), we can decompose \(\mathbb{E} \left[ (r_t - g)^2 \right] \) as
\[
\mathbb{E} \left[ (r_t - g)^2 \right] \le 2 \mathbb{E} \left[ (r_t - \mathbb{E}[r_t])^2 \right] + 2 \mathbb{E} \left[ (\mathbb{E}[r_t] - g)^2 \right] = 2 {\rm Var}(r_t) + 2 (\mathbb{E}[r_t] - g)^2.
\]
First, we bound the variance of \(r_t\), \({\rm Var}(r_t) = \mathbb{E} \left[ (r_t - g)^2 \right] \). Let \(\gamma_k = {\rm Cov}(r_{S_0}, r_{S_k})\) be the covariance function. It is well known that
\[
{\rm Var}(r_t) = \frac{1}{t^2} \sum_{i=0}^{t-1} \sum_{j=0}^{t-1} \gamma_{|i-j|}.
\]
By changing the order of summation, 
\[
\sum_{i=0}^{t-1} \sum_{j=0}^{t-1} \gamma_{|i-j|} = \sum_{i=-(t-1)}^{t-1} (t-|i|) \gamma_{|i|} \le n \sum_{-\infty}^{\infty} \gamma_i = t \left( \gamma_0 + 2 \sum_{i=1}^{\infty} \gamma_i \right).
\]
Since all Markov chains considered in this paper is \(V\)-uniformly ergodic with \(V = 1\) and the reward function is also bounded since the Markov chain is finite, the conditions of Theorem 17.0.1 in \cite{meyn2012markov} hold (up to a constant factor which will not change the final result) and therefore, the term \(\gamma^2 := \left( \gamma_0 + 2 \sum_{i=1}^{\infty} \gamma_i \right)\) must be finite. We further conclude
\[
{\rm Var}(r_t) \le \frac{1}{t} \gamma^2 = O\left( \frac{1}{t} \right).
\]
Next, we bound the other term on the right hand side, \(2 (\mathbb{E}[r_t] - g)^2\), using Eq.(\ref{eq: mixing}). Since each reward is at most \(r_{\rm max}\), the averaged reward \(g\) can also be at most \(r_{\rm max}\). Therefore,
\[
| \mathbb{E}[r_{\tau_{\rm mix}}] - g | = \frac{1}{\tau_{\rm mix}}\sum_{i=0}^{\tau_{\rm mix}-1} (r_{S_i} - g) \le 2 r_{\rm max}.
\]
Since we also have \(r_t = \tau_{\rm mix}r_{\tau_{\rm mix}}/t + \sum_{i=\tau_{\rm mix}}^{t-1} r_{t}/t\), 
\begin{align*}
|\mathbb{E}[r_t] - g| 
\le & \frac{\tau_{\rm mix}}{t} |\mathbb{E}[r_{\tau_{\rm mix}}] - g| + \frac{1}{t} \sum_{i=\tau_{\rm mix}}^{t-1} |\mathbb{E}[r_i] - g|    \\
\le & \frac{2 \tau_{\rm mix}}{t} r_{\rm max} + \frac{r_{\rm max}}{t} \sum_{i=\tau_{\rm mix}}^{t-1} \| P_i(\cdot|s_0) - \mu\|_1 \\
\le & \tilde{O} \left( \frac{1}{t} \right).
\end{align*}
Therefore, combining both bounds and we conclude that 
\[
\mathbb{E} \left[ (r_T - g)^2 \right] = O(1/T). 
\]
\end{proof}

\subsection{Condition Numbers}

We notice that, in \cite{zhang2021finite} and \cite{li2024stochastic}, the definitions on condition number are both different from ours. Our condition number is defined by
\[
\eta_{1} = \min_{||x||=1} \left\{ ||\Phi x||_{\rm Dir}^2 + (\mu^\top \Phi x)^2 \right\}.
\]
Meanwhile, in \cite{zhang2021finite}, 
\[
\eta_{2} = \min_{||x||=1, x^T e = 0} ||\Phi x||_{\rm Dir}^2
\]
and in \cite{li2024stochastic, kim2025implicit}, 
\[
\eta_3 = \left(\min_{\|x\|=1} x^\top \Phi^\top D \Phi x \right) \cdot \left( \min_{\langle y, e \rangle_D = 0, \|y\|_D=1} y^\top D (I-P)y \right).
\]
Therefore, in this section we will describe this difference. 

\textbf{Remark: } In \cite{li2024stochastic}, the condition number is actually defined as \(\min_{y \neq e, \|y\|_D=1} y^\top D (I-P)y\). However, we can always decompose it as \(y = y_\perp + y_\parallel\) where $\langle y_{\perp}, e \rangle_{D} = 0$ and $\langle \Delta_{\parallel}, e \rangle_{D} = \Vert y_{\parallel} \Vert_D \cdot \Vert e \Vert_D$. We can easily check \(y^\top D (I-P)y = y_{\perp}^\top D (I-P)y_{\perp} \). Therefore, \(\eta_3\) is actually greater than the condition number defined in \cite{li2024stochastic}, offering an optimistic approximation of their sample complexity. 

\subsubsection{Difference between \(\eta_1\) and \(\eta_2\)}

\begin{lemma}
\label{l: condition number}
We have
\[
\eta_1 \ge \frac{\mu_{\rm min}}{n \mu_{\rm max} \Vert \mu \Vert^2}\eta_2.
\]
\end{lemma}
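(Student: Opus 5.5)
The plan is to compare the two quadratic forms on a fixed unit vector $x$, splitting $\Phi x$ into a part $D$-orthogonal to $e$ and a multiple of $e$. The Dirichlet seminorm does not see the $e$-part, so $\eta_2$ will control the first part and the term $(\mu^\top \Phi x)^2$ will control the second. I will argue in the setting where the comparison is meaningful, namely the tabular case $\Phi = I$, $d=n$, where the constraint $x^\top e = 0$ in the definition of $\eta_2$ says $\Phi x$ has zero unweighted mean. I will comment at the end on how the same steps read for general $\Phi$.

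\textbf{Step 1 (split).} Fix $x$ with $\|x\|=1$ and set $c=\mu^\top x$ and $z = x - c e$, so that $\langle z, e\rangle_D = 0$. Two facts follow. First, since $\|\cdot\|_{\rm Dir}$ vanishes on constants, $\|x\|_{\rm Dir}^2 = \|z\|_{\rm Dir}^2$. Second, by the $D$-Pythagorean theorem, $\|x\|_D^2 = \|z\|_D^2 + c^2$.

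\textbf{Step 2 (bring in $\eta_2$).} Let $z_0 = z - \frac{e^\top z}{n} e$, so $e^\top z_0 = 0$ and $\|z_0\|_{\rm Dir} = \|z\|_{\rm Dir}$. Since $\mu^\top z = 0$, the constant minimizing $\|z - a e\|_D$ over $a$ is $a=0$, hence $\|z\|_D^2 \le \|z_0\|_D^2$. Combining this with $\|z_0\|_D^2 \le \mu_{\max}\|z_0\|^2$ and the definition of $\eta_2$ gives
\begin{align*}
\|z\|_D^2 \le \mu_{\max}\|z_0\|^2 \le \frac{\mu_{\max}}{\eta_2}\|z_0\|_{\rm Dir}^2 = \frac{\mu_{\max}}{\eta_2}\|x\|_{\rm Dir}^2 .
\end{align*}

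\textbf{Step 3 (assemble).} Using $1 = \|x\|^2 \le \|x\|_D^2/\mu_{\min}$ together with Step 1 gives
\begin{align*}
\mu_{\min} \le \frac{\mu_{\max}}{\eta_2}\|x\|_{\rm Dir}^2 + (\mu^\top x)^2 .
\end{align*}
Therefore
\begin{align*}
\|x\|_{\rm Dir}^2 + (\mu^\top x)^2 \ge \mu_{\min}\min\Big\{\frac{\eta_2}{\mu_{\max}},\, 1\Big\}.
\end{align*}
Minimizing over unit $x$ bounds $\eta_1$ below by this quantity.

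\textbf{Step 4 (match the stated constant).} It remains to show that $\mu_{\min}\min\{\eta_2/\mu_{\max},1\} \ge \mu_{\min}\eta_2/(n\mu_{\max}\|\mu\|^2)$. By Cauchy--Schwarz, $n\|\mu\|^2 \ge (e^\top\mu)^2 = 1$. This immediately handles the first branch of the minimum. For the second branch I need $\eta_2 \le n\mu_{\max}\|\mu\|^2$, and this is the step I expect to be the main obstacle. The crude estimate $\|f\|_{\rm Dir}^2 = \|f\|_D^2 - \langle f, Pf\rangle_D \le 2\|f\|_D^2$ only yields $\eta_2 \le 2\mu_{\max}$, which loses a factor $2$ when $\mu$ is uniform.

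\begin{itemize}
\item My first attempt would be to evaluate $\|f\|_{\rm Dir}^2$ at a cleverly chosen zero-sum test vector. One option is $f = (1(s=i) - 1(s=j))/\sqrt2$ with $i,j$ chosen to make $\langle f,Pf\rangle_D \ge 0$. Another is to average the Rayleigh quotient over the whole subspace $e^\perp$: the trace of $D(I-P)$ restricted to $e^\perp$, divided by $n-1$, is at most $\frac{1}{n-1}\sum_s\mu(s)(1-P(s|s))\le \frac1{n-1}$. Either route bounds $\eta_2$ by a quantity comparable to $\mu_{\max}$ or $1/n \le \|\mu\|^2$.
\item Failing that, I would redo Step 3 keeping $\|\mu\|^2$ explicitly. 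I would write $c\,e$ with $\|c e\|^2 = n c^2$ and use the Euclidean rather than the $D$ splitting for the constant part. This naturally produces the factor $n\|\mu\|^2$ in the denominator at the cost of the $\mu_{\min}/\mu_{\max}$ ratio.
\end{itemize}

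\textbf{General $\Phi$.} For a general feature matrix, Steps 1--3 go through verbatim with $x$ replaced by $\Phi x$. The one extra fact needed is the comparison $\|\Phi x\|_D^2 \ge \lambda_{\min}(\Phi^\top D\Phi)\|x\|^2$, used in place of $\|x\|^2 \le \|x\|_D^2/\mu_{\min}$. The link between the constraint $x^\top e=0$ and $\Phi x$ being centred is the place where the argument leans on the tabular structure, so I would state that identification explicitly when writing this up.
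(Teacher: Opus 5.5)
The gap is in the second branch of your Step 4, and it cannot be closed because the lemma itself is false. Your Steps 1--3 are correct. The missing inequality $\eta_2\le n\mu_{\max}\|\mu\|^2$ fails in general. In the tabular case the test vector $x=e/\sqrt n$ gives $\eta_1\le(\mu^\top e)^2/n=1/n$ for every chain. For uniform $\mu$ the constant $\mu_{\min}/(n\mu_{\max}\|\mu\|^2)$ equals $1$, so the lemma asserts $\eta_1\ge\eta_2$. Now take the walk on the complete graph, $P=\frac{1}{n-1}(ee^\top-I)$ with $n\ge 3$. It is irreducible, aperiodic and doubly stochastic, so $\mu$ is uniform. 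For $x\perp e$ one has $Px=-\frac{1}{n-1}x$, hence $\|x\|_{\rm Dir}^2=\frac1n x^\top(I-P)x=\frac{1}{n-1}$ for every unit $x\perp e$. Therefore $\eta_2=\frac{1}{n-1}>\frac1n\ge\eta_1$. (With two states, $P(s'|s)=0.9$ for $s'\neq s$ gives $\eta_1=0.5<0.9=\eta_2$.)

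This chain attains your trace-averaging estimate $\frac{1}{n-1}\sum_s\mu(s)(1-P(s|s))$ with equality. Every nonzero zero-sum test vector has $\langle f,Pf\rangle_D<0$. So neither of your routes, nor any other, can close the step. The paper's own proof has the same hole. With its $\|\cdot\|_\mu$-normalized quantities $\eta_1',\eta_2'$, it proves $\eta_1'\ge\eta_2'$ only under the proviso $\eta_2'\le 1$, then uses it unconditionally; here $\eta_2'=\frac{n}{n-1}>1$.

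What your Steps 1--3 do establish, $\eta_1\ge\mu_{\min}\min\{\eta_2/\mu_{\max},\,1\}$, is the right corrected statement:
\begin{itemize}
\item it is tight on the example above, where both sides equal $1/n$;
\item it dominates the claimed lower bound precisely when $\eta_2\le n\mu_{\max}\|\mu\|^2$;
\item when $\eta_2\le\mu_{\max}$ it improves on the claimed bound by the factor $n\|\mu\|^2\ge 1$.
\end{itemize}
Carrying the proviso honestly through the paper's argument gives only $\eta_1\ge\mu_{\min}\min\{\eta_2/(n\mu_{\max}\|\mu\|^2),\,1\}$. Your Step 2 avoids the $n\|\mu\|^2$ loss of the paper's auxiliary lemma. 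It compares $z$ with its Euclidean centring $z_0$ using $\langle z,e\rangle_D=0$, which gives $\|z\|_D\le\|z_0\|_D$.

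Finally, Steps 1--3 do not carry over verbatim to general $\Phi$. The vector $\Phi x-ce$ and its centring generally leave the range of $\Phi$, so the definition of $\eta_2$ cannot be applied to them. Like the paper, you should state the result for $\Phi=I$.
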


Notice that \(\eta_1\) defined in the above lemma corresponds to the condition number in this paper and \(\eta_2\) corresponds to the condition number in \cite{zhang2021finite}. We can conclude that when \(\mu\) is a multiple of the all-one vector, them \(\eta_1\) is no less than \(\eta_2\), which suggests our result is better. In other cases it is difficult to compare them. 

Now we provide the proof to Lemma \ref{l: condition number}. 

\begin{proof}[Proof of Lemma \ref{l: condition number}]
It is useful to define the $\mu$--weighted inner product and norm by
\[
\langle x,y\rangle_\mu=\sum_{s\in\mathcal{S}}\mu(s)x(s)y(s),\quad
\|x\|_\mu=\sqrt{\langle x,x\rangle_\mu}.
\]
We first define \(\eta_1', \eta_2'\) to be
\[
\eta_1'=\min_{\|x\|_\mu=1}\left\{\,\|x\|_{\mathrm{Dir}}^2+(\mu^\top x)^2\right\},
\]
and
\[
\eta_2'=\min_{\|x\|_\mu=1, \mu^\top x=0} \|x\|_{\mathrm{Dir}}^2.
\]
We can show a simple fact that \(\eta_1' \ge \eta_2'\) when \(\eta_2' \le 1 \). We notice that the Dirichlet semi-norm is invariant under addition of a constant. In other words, for any $x\in\mathbb{R}^n$ and \(c \in \mathbb{R}\), 
\[
\Vert x + c e \Vert_{\rm Dir} = \Vert x \Vert_{\rm Dir}.
\]
Therefore, we can decompose
\[
x=ce+z,\quad\text{where } c\in\mathbb{R}\text{ and } \mu^\top z=0.
\]
On one hand, we obtain \(\Vert x \Vert_{\rm Dir} = \Vert z \Vert_{\rm Dir}\). On the other hand, 
\[
\|x\|_\mu^2=\|ce\|_\mu^2+\|z\|_\mu^2=c^2\|e\|_\mu^2+\|z\|_\mu^2=c^2+\|z\|_\mu^2=1.
\]
Therefore,
\[
\|x\|_{\mathrm{Dir}}^2+(\mu^\top x)^2 = \|z\|_{\mathrm{Dir}}^2+(\mu^\top x)^2 \ge \eta_2'\,(1-c^2)+c^2 = \eta_2 + c^2 (1-\eta_2).
\]
where the second equation uses the fact that \(\|z\|_{\mathrm{Dir}}^2\ge \eta_2' \|z\|_\mu^2\). We conclude that \(\eta_1' \ge \eta_2'\) as long as \(\eta_2' \le 1\). 

Now we return to the original problem. To distinguish, we denote \(\eta_1\) to be what is defined in our paper. Namely, 
\[ 
\eta_{1} = \min_{||x||=1} \left\{ ||x||_{\rm Dir}^2 + (\mu^\top x)^2 \right\}. 
\]
According to \cite{zhang2021finite}, we define \(\eta_2\) to be
\[
\eta_{2} = \min_{||x||=1, x^\top e = 0} ||x||_{\rm Dir}^2.
\]
We first notice the simple fact
\[ \mu_{\max}^{-1} \min_{||x||=1} x^\top A x = \frac{x^\top A x}{\mu_{\rm max} ||x||_2^2}  \leq \min_{x: ||x||_{\mu}=1} x^\top A x \leq \frac{x^\top A x}{\mu_{\rm min} ||x||_2^2} = \mu_{\min}^{-1} \min_{||x||=1} x^\top A x. \]

We introduce the following lemma:
\begin{lemma}
Let $A\in\mathbb{R}^{n\times n}$ be a symmetric positive semi-definite matrix satisfying \(Ae = 0\). Let $\mu\in\mathbb{R}^n$ be a vector with strictly positive entries satisfying
\[
0 < \mu_{\min} \le \mu_i \le \mu_{\max}\quad\text{for }i=1,\dots,n,\qquad\text{and}\qquad \sum_{i=1}^n \mu_i = 1.
\]
Define
\[
\lambda := \min_{\|x\|_2=1,  x^\top e=0} x^\top A x,\quad \lambda^{(\mu)} := \min_{\|x\|_2=1, x^\top\mu=0} x^\top A x
\]
and 
\[
\lambda' := \min_{\|x\|_2=1} x^\top A x, \quad {\lambda^{(\mu)}}' := \min_{\|x\|_\mu=1} x^\top A x.
\]
Then we conclude
\[
\frac{\lambda}{n ||\mu||_2^2} \le \lambda^{(\mu)}, \quad  \mu_{\rm max}^{-1} \lambda' \le  {\lambda^{(\mu)}}' \le \mu_{\rm min}^{-1} \lambda'.
\]     
\end{lemma}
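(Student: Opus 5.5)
The plan is to treat the two claims separately. The first is a norm-comparison argument based on an orthogonal decomposition along $e$. The second is a direct comparison of the Euclidean and $\mu$-weighted norms inside a Rayleigh quotient.

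\textbf{First inequality.} Fix any $x$ with $\|x\|_2=1$ and $x^\top\mu=0$. Decompose $x = z + c e$ with $e^\top z = 0$, so that $c = e^\top x/n$. Since $Ae=0$ and $A$ is symmetric, $x^\top A x = z^\top A z$. By definition of $\lambda$ and homogeneity, $z^\top A z \ge \lambda \|z\|_2^2$. It therefore suffices to show
\[
\|z\|_2^2 \ge \frac{1}{n\|\mu\|_2^2},
\]
after which minimizing over $x$ gives $\lambda^{(\mu)} \ge \lambda/(n\|\mu\|_2^2)$.

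To get this bound, I would do the following in order.
\begin{enumerate}
\item Use the constraint $\mu^\top x = 0$ together with $\sum_i \mu_i = 1$. This gives $\mu^\top z + c = 0$, i.e.\ $c = -\mu^\top z$.
\item Split $\mu = \bar\mu + e/n$ with $\bar\mu \perp e$. Then $\mu^\top z = \bar\mu^\top z$, and Cauchy--Schwarz gives $|c| \le \|\bar\mu\|_2\,\|z\|_2$.
\item By Pythagoras, $\|x\|_2^2 = \|z\|_2^2 + n c^2$. Hence
\[
1 = \|z\|_2^2 + n c^2 \le \|z\|_2^2\bigl(1 + n\|\bar\mu\|_2^2\bigr).
\]
\item Note that $\|\mu\|_2^2 = \|\bar\mu\|_2^2 + 1/n$. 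So the factor $1+n\|\bar\mu\|_2^2$ equals exactly $n\|\mu\|_2^2$, which yields the required bound on $\|z\|_2^2$.
\end{enumerate}
The main thing to get right in this part is the clean identity in step 4. It is what makes the constant come out as exactly $n\|\mu\|_2^2$ rather than something lossier.

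\textbf{Second pair of inequalities.} These use $\mu_{\min}\|x\|_2^2 \le \|x\|_\mu^2 \le \mu_{\max}\|x\|_2^2$ and the fact that $\lambda' \ge 0$, which holds because $A$ is PSD.

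For the lower bound, take any $x$ with $\|x\|_\mu = 1$. Then $\|x\|_2^2 \ge 1/\mu_{\max}$, so
\[
x^\top A x \ge \lambda'\|x\|_2^2 \ge \lambda'/\mu_{\max}.
\]
Here nonnegativity of $\lambda'$ is needed to pass from $\|x\|_2^2$ to the lower bound $1/\mu_{\max}$.

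For the upper bound, take a Euclidean minimizer $y$ for $\lambda'$ and set $x = y/\|y\|_\mu$. Then
\[
x^\top A x = \lambda'/\|y\|_\mu^2 \le \lambda'/\mu_{\min}.
\]

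Finally, I would remark that $Ae=0$ forces $\lambda'=0$. So the second chain holds trivially in this setting, but the argument above is valid in general.
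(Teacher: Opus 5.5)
Your proof is correct and follows the paper's argument. For the first inequality the paper also projects $x$ onto $e^{\perp}$ (its $y = x - de$ is your $z$) and lower-bounds $\|y\|_2^2 = 1 - (x^\top e)^2/n$ by Cauchy--Schwarz; the only difference is cosmetic: it splits $e$ along $\mu$ and uses $|x^\top e| \le \|e - \mu/\|\mu\|_2^2\|_2$, whereas you split $\mu$ along $e$ and use $c = -\bar\mu^\top z$, and both give the same tight constant $1/(n\|\mu\|_2^2)$. The second chain is the same norm comparison $\mu_{\min}\|x\|_2^2 \le \|x\|_\mu^2 \le \mu_{\max}\|x\|_2^2$ in both, though you work directly with the stated definitions rather than the extra constraint $x^\top\mu=0$ the paper inserts, and your remark that it holds for any PSD $A$ (and is trivial when $Ae=0$) is accurate.
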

With the above lemma, one can show
\[
\eta_2' \ge \frac{1}{n \mu_{\rm max} \Vert \mu \Vert^2}\eta_2
\]
and
\[
\eta_1' \le \mu_{\rm min}^{-1}\eta_1.
\]
Therefore, 
\[
\eta_1 \ge \frac{\mu_{\rm min}}{n \mu_{\rm max} \Vert \mu \Vert^2}\eta_2.
\]
Now, we only need to prove the above lemma. We split the proof into two parts.

First, we show that \(\mu_{\rm max}^{-1} \lambda' \le  {\lambda^{(\mu)}}' \le \mu_{\rm min}^{-1} \lambda'\). We can always rewrite \(\lambda'\) and \({\lambda^{(\mu)}}'\) as
\[
\lambda' = \min_{x^\top \mu = 0} \frac{x^\top A x}{\Vert x\Vert^2}, \quad {\lambda^{(\mu)}}' = \min_{x^\top \mu = 0} \frac{x^\top A x}{\Vert x\Vert^2_\mu}.
\]
The result followed by applying the fact \(\mu_{\rm min} \Vert x \Vert^2 \le \Vert x \Vert^2_\mu \le \mu_{\rm max} \Vert x \Vert^2\). 

Next, we show \(\frac{\lambda}{n ||\mu||_2^2} \le \lambda^{(\mu)}\). Let $x\in\mathbb{R}^n$ be a vector satisfying
\[
x^\top\mu=0, \quad \Vert x \Vert = 1.
\]
Define
\[
y = x - de,\quad \text{where } d=\frac{x^\top e}{n},
\]
we can easily check \(y^\top e = 0\) and \(y^\top A y = x^\top A x\). Consider the norm of \(y\),
\[
\|y\|_2^2 = \|x-de\|_2^2 
= \|x\|_2^2 - 2d\,x^\top e + d^2\|e\|_2^2
= 1 - \frac{(x^\top e)^2}{n}\,.
\]
To bound $|x^\top e|$, decompose \(e\) by
\[
e = \frac{1}{\|\mu\|^2}\,\mu + w,\qquad\text{with }w^\top \mu=0.
\]
Then \(x^\top e = x^\top w\). By the Cauchy–Schwarz inequality, \(|x^\top e|\le \|w\|_2\). Notice that
\[
\|w\|_2^2=\|e\|_2^2 - \left\Vert\frac{\mu}{\|\mu\|_2^2}\right\Vert^2
=n-\frac{1}{\|\mu\|_2^2}\,,
\]
Since $e^\top \mu=1$,
\[
\|y\|_2^2\ge 1-\frac{n-\frac{1}{\|\mu\|_2^2}}{n}
=\frac{1}{n\|\mu\|_2^2}\,.
\]
Now, letting $\hat{y}=y/\|y\|_2$, we have $\hat{y}^\top e=0$ and thus
\[
\hat{y}^\top A \hat{y}\ge \lambda.
\]
It follows that
\[
x^\top A x = y^\top A y = \|y\|_2^2\,\hat{y}^\top A \hat{y}\ge \frac{\lambda_2}{ n ||\mu||_2^2}\,.
\]
Take the minimum over all unit vectors $x$ with $x^\top\mu=0$ yields
\[
\lambda^{(\mu)}\ge \frac{\lambda_2}{n ||\mu||_2^2}\,.
\]
\end{proof}

\subsubsection{Difference between \(\eta_1\) and \(\eta_3\)}

\begin{lemma}
\label{l:condition_number_2_eta1_eta3}
We have
\[
\eta_1 \;\ge\; \frac{1}{2}\,\eta_3 .
\]
\end{lemma}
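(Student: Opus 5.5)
The plan is a direct comparison: lower bound the quantity inside the minimum defining $\eta_1$, for every unit vector $x$, by $\tfrac12$ times the product of the two factors defining $\eta_3$. I will write $\lambda := \min_{\|x\|=1} x^\top \Phi^\top D\Phi x$ and $\kappa := \min_{\langle y,e\rangle_D=0,\ \|y\|_D=1} y^\top D(I-P)y$, so that $\eta_3=\lambda\kappa$.

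First, fix $x$ with $\|x\|=1$ and set $v=\Phi x$. Decompose $v = c e + z$ with $c=\mu^\top v$, so that $\langle z, e\rangle_D = \mu^\top z = 0$. I will use the identity $\|f\|_{\rm Dir}^2 = f^\top D(I-P)f$, which holds because $\mu$ is stationary; the paper already uses it in the proof of Lemma~\ref{l: existence and uniqueness}. I will also use the shift invariance $\|f+ce\|_{\rm Dir}=\|f\|_{\rm Dir}$, noted in the proof of Lemma~\ref{l: condition number}. Together these give
\[
\|\Phi x\|_{\rm Dir}^2 + (\mu^\top\Phi x)^2 = z^\top D(I-P)z + c^2 \ge \kappa\|z\|_D^2 + c^2 .
\]
By $D$-orthogonality of $e$ and $z$, together with $\|e\|_D=1$, we have $\|v\|_D^2 = c^2+\|z\|_D^2$. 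Moreover $\|v\|_D^2 = x^\top\Phi^\top D\Phi x \ge \lambda$.

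Next, I bound the right-hand side by $\min\{\kappa,1\}(\|z\|_D^2+c^2) \ge \min\{\kappa,1\}\lambda$. It then remains to show $\min\{\kappa,1\}\ge \kappa/2$, which is equivalent to $\kappa\le 2$. For any $y$ with $\|y\|_D=1$,
\[
y^\top D(I-P)y = \|y\|_D^2 - \langle y, Py\rangle_D \le 1 + \|y\|_D\|Py\|_D \le 2 .
\]
This uses Cauchy--Schwarz in the $D$-inner product and the standard fact $\|Py\|_D\le\|y\|_D$, which follows from Jensen's inequality and stationarity of $\mu$ (it is also implicit in Lemma~\ref{l: contraction}). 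Hence $\kappa\le 2$.

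Taking the minimum over unit $x$ then gives $\eta_1 \ge \tfrac{\kappa}{2}\lambda = \tfrac12\eta_3$. The only mildly delicate point is the case analysis on $\kappa$ versus $1$; the factor $\tfrac12$ is exactly what $\kappa\le2$ gives. No genuine obstacle is expected.
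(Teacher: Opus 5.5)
Your proposal is correct and is essentially the paper's own proof: it uses the same $D$-orthogonal split $\Phi x = ce + z$ with $c=\mu^\top \Phi x$, the identity $\|f\|_{\rm Dir}^2 = f^\top D(I-P)f$ together with shift invariance, and the bound of the Dirichlet-gap factor by $2$ via Cauchy--Schwarz and $\|Py\|_D\le\|y\|_D$. Your intermediate factor $\min\{\kappa,1\}$ tacitly uses $\kappa\ge 0$, which holds since $\kappa$ is a minimum of Dirichlet energies; it is just a repackaging of the paper's step $\lambda\|u\|_D^2 + c^2 \ge \tfrac{\lambda}{2}(\|u\|_D^2+c^2)$, where the paper's $\lambda$ is your $\kappa$.
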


\begin{proof}
Recall
\[
\eta_{1}=\min_{\|x\|=1}\Bigl\{\|\Phi x\|_{\rm Dir}^2+(\mu^\top \Phi x)^2\Bigr\},
\qquad
\eta_3=\Bigl(\min_{\|x\|=1}x^\top \Phi^\top D\Phi x\Bigr)\cdot
\Bigl(\min_{\langle y,e\rangle_D=0,\ \|y\|_D=1} y^\top D(I-P)y\Bigr),
\]
where \(D=\mathrm{diag}(\mu)\), \(\langle a,b\rangle_D=a^\top Db\), and \(e\) is the all-ones vector.

Let
\[
\lambda \;:=\;\min_{\langle y,e\rangle_D=0,\ \|y\|_D=1} y^\top D(I-P)y,
\qquad
\sigma \;:=\;\min_{\|x\|=1} x^\top \Phi^\top D\Phi x.
\]
Then \(\eta_3=\sigma\lambda\).

First, for any \(v\in\mathbb{R}^{|\mathcal{S}|}\),
\begin{align*}
\|v\|_{\rm Dir}^2
&=\frac12\sum_{s,s'}\mu(s)P(s'|s)\bigl(v(s)-v(s')\bigr)^2 \\
&=\sum_s \mu(s)v(s)^2-\sum_{s,s'}\mu(s)P(s'|s)v(s)v(s') \\
&= v^\top D(I-P)v,
\end{align*}
where we used \(\sum_{s'}P(s'|s)=1\) and stationarity \(\sum_s \mu(s)P(s'|s)=\mu(s')\).

We also remark on the standard observation that multiplication by \(P\) is a contraction in the \(D\)-norm:
\[
\|Pv\|_D^2
=\sum_s \mu(s)\Bigl(\sum_{s'}P(s'|s)v(s')\Bigr)^2
\le \sum_s \mu(s)\sum_{s'}P(s'|s)v(s')^2
=\sum_{s'}\mu(s')v(s')^2
=\|v\|_D^2,
\]
where the inequality is Jensen and the equality again uses stationarity.
Hence \(\|Pv\|_D\le \|v\|_D\). Therefore for any \(y\) with \(\|y\|_D=1\),
\[
y^\top D(I-P)y
=\|y\|_D^2-\langle y,Py\rangle_D
\le 1+|\langle y,Py\rangle_D|
\le 1+\|y\|_D\,\|Py\|_D
\le 2.
\]
Taking the minimum over the constraint set gives \(\lambda\le 2\).

Having established that, we next fix arbitrary \(v\in\mathbb{R}^{|\mathcal{S}|}\). Let
\[
c := \langle v,e\rangle_D = v^\top De = \mu^\top v,
\qquad
u := v - c e,
\]
so that \(\langle u,e\rangle_D=0\). Since \(\|e\|_D^2=e^\top De=\sum_s \mu(s)=1\) and \(u\perp e\) in \(\langle\cdot,\cdot\rangle_D\),
\[
\|v\|_D^2=\|u\|_D^2+c^2.
\]
Also, adding a constant does not change the Dirichlet seminorm, so \(\|v\|_{\rm Dir}=\|u\|_{\rm Dir}\). Using Step 1 and the definition of \(\lambda\),
\[
\|v\|_{\rm Dir}^2 = \|u\|_{\rm Dir}^2 = u^\top D(I-P)u \;\ge\; \lambda \|u\|_D^2.
\]
Therefore,
\[
\|v\|_{\rm Dir}^2+(\mu^\top v)^2
= \|u\|_{\rm Dir}^2 + c^2
\ge \lambda\|u\|_D^2 + c^2
\ge \frac{\lambda}{2}\bigl(\|u\|_D^2+c^2\bigr)
= \frac{\lambda}{2}\|v\|_D^2,
\]
where we used \(\lambda\le 2\) so that $(\lambda/2)c^2 \leq c^2$.

\paragraph{Step 3:}
Applying Step 2 to \(v=\Phi x\) gives, for every \(x\in\mathbb{R}^d\),
\[
\|\Phi x\|_{\rm Dir}^2+(\mu^\top \Phi x)^2
\;\ge\; \frac{\lambda}{2}\|\Phi x\|_D^2
\;=\; \frac{\lambda}{2}\, x^\top \Phi^\top D\Phi x.
\]
Taking \(\min_{\|x\|=1}\) on both sides yields
\[
\eta_1
=\min_{\|x\|=1}\Bigl\{\|\Phi x\|_{\rm Dir}^2+(\mu^\top \Phi x)^2\Bigr\}
\;\ge\; \frac{\lambda}{2}\min_{\|x\|=1} x^\top \Phi^\top D\Phi x
=\frac{\lambda}{2}\sigma
=\frac12\,\eta_3.
\]
\end{proof}

\subsubsection{Relation between Projection and Condition Number}

In \cite{kim2025implicit}, their sample complexity depends on the projection radius \(R_{\rm proj}\). Although they did not have a discussion on the projection radius, we will assume \(R_{\rm proj} = R_\theta\) and compare their sample complexity for completeness. 

\begin{lemma}
\label{l: radius and condition}
Suppose the projection radius is chosen to be 
\[
R_{\rm proj} = R_\theta = \frac{2 r_{\rm max} C}{(1-\beta)\sqrt{(1-\omega^2)\lambda_{\rm min}(\Phi^\top D \Phi)}}.
\]
We have
\[
R_\theta^2 \ge \frac{2 r_{\rm max}^2 C^2}{(1-\beta)^2 \eta_3}.
\]
\end{lemma}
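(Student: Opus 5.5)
The plan is to reduce the claim to a comparison between the contraction factor $\omega$ of Lemma~\ref{l: contraction} and the spectral-gap quantity appearing in $\eta_3$. As in the proof of Lemma~\ref{l:condition_number_2_eta1_eta3}, write
\[
\sigma := \lambda_{\rm min}(\Phi^\top D\Phi) = \min_{\|x\|=1} x^\top \Phi^\top D \Phi x,
\qquad
\lambda := \min_{\langle y,e\rangle_D=0,\ \|y\|_D=1} y^\top D(I-P)y,
\]
so that $\eta_3 = \sigma\lambda$. Substituting the chosen radius gives
\[
R_\theta^2 = \frac{4 r_{\rm max}^2 C^2}{(1-\beta)^2 (1-\omega^2)\sigma}.
\]
Hence the claimed inequality $R_\theta^2 \ge 2 r_{\rm max}^2 C^2/((1-\beta)^2\sigma\lambda)$ is equivalent to
\[
1-\omega^2 \le 2\lambda .
\]
This uses $\omega<1$ from Lemma~\ref{l: contraction}, so that $1-\omega^2>0$, and $\sigma>0$ by linear independence of the columns of $\Phi$. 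The factor $\sigma$ cancels, and the whole proof reduces to this single inequality between two Markov-chain constants.

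To prove $1-\omega^2\le 2\lambda$, I would pick a minimizer $y$ for $\lambda$; the constraint set is compact, so one exists. This $y$ satisfies $\langle y,e\rangle_D=0$ and $\|y\|_D=1$, so it is also feasible for the maximization defining $\omega^2$. That gives
\[
\omega^2 \ge \|Py\|_D^2 .
\]
Next I use $y^\top D(I-P)y = \|y\|_D^2 - \langle y,Py\rangle_D$, which yields $\langle y, Py\rangle_D = 1-\lambda$. Expanding $\|Py-y\|_D^2\ge 0$ then gives
\[
\|Py\|_D^2 \ge 2\langle y,Py\rangle_D - \|y\|_D^2 = 2(1-\lambda) - 1 = 1-2\lambda .
\]
Therefore $\omega^2\ge 1-2\lambda$, which is exactly the required inequality.

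The only step needing care is the choice of test vector: the same $y$ must be admissible in both the minimization defining $\lambda$ and the maximization defining $\omega$. This holds because both use the identical constraint set $\{\langle z,e\rangle_D=0,\ \|z\|_D=1\}$. Everything else is algebra, so I expect no real obstacle beyond the polarization-type expansion of $\|Py-y\|_D^2$.
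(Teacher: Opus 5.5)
Your proof is correct and follows the paper's argument. Both reduce the claim to $1-\omega^2\le 2\lambda$ and compare $\lambda$ with $\omega$ through the shared constraint set $\{\langle y,e\rangle_D=0,\ \|y\|_D=1\}$. The only difference is cosmetic. The paper applies Cauchy--Schwarz, $\langle y,Py\rangle_D\le\|Py\|_D\le\omega$, to get the slightly stronger $\lambda\ge 1-\omega$ and then relaxes via $1-\omega\ge(1-\omega^2)/2$. You instead use $2\langle y,Py\rangle_D\le\|y\|_D^2+\|Py\|_D^2$ to reach $\omega^2\ge 1-2\lambda$ in one step; this holds for every feasible $y$, so picking a minimizer is not actually needed.
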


\begin{proof}[Proof of Lemma \ref{l: radius and condition}]
For any vector \(y\) such that \(\|y\|_D = 1\) and \(\langle y,e \rangle_D = 0\), 
\[
y^\top D(I-P) y = 1 - \langle y,Py \rangle_D.
\]
By Cauchy–Schwarz, 
\[
\langle y,Py \rangle_D \le \|y\|_D \|Py\|_D = \|Py\|_D \le \omega.
\]
Therefore, we have 
\[
\min_{\langle y,e \rangle_D = 0, \|y\|_D=1} y^\top D (I-P)y \ge 1-\omega.
\]
Since \(\omega < 1\) which is already established in Lemma \ref{l: contraction}, \(1-\omega \ge (1-\omega^2)/2\). Therefore,
\[
\frac{1}{1-\omega^2} \ge \frac{1}{2\min_{\langle y,e \rangle_D = 0, \|y\|_D=1} y^\top D (I-P)y}
\]
Now we obtain
\[
R_\theta^2 \ge \frac{(2 r_{\rm max} C)^2}{2 (1-\beta)^2 \min_{\langle y,e \rangle_D = 0, \|y\|_D=1} y^\top D (I-P)y \cdot \lambda_{\rm min}(\Phi^\top D \Phi)} = \frac{2 r_{\rm max}^2 C^2}{(1-\beta)^2 \eta_3}.
\]
\end{proof}

%% file: sections/double_constant.tex
\section{Analysis of Double Chain Algorithm with Constant Stepsize}
\label{appendix: constant proof}

In this section, we will give a detailed proof of Theorem \ref{t: linear iid} and Theorem \ref{t: linear markov}. Notice that these proofs can also be applied to the tabular case if one sets \(\Phi = I\).

Throughout the analysis, we denote \(\mathcal{F}_t\) as the history up to iteration \(t-1\), i.e.,
\begin{equation}
\label{eq: ft}
\mathcal{F}_t := \sigma\!\left(\{s_0,\ldots,s_{t-1}\}\cup\{\hat{s}_0,\ldots,\hat{s}_{t-1}\}\right). \tag*{(filtration)}
\end{equation}
In particular, \(\theta_t\) and \(\delta_t\) is \(\mathcal{F}_t\)-measurable. We also denote \(B = 2 \Vert \delta_0 \Vert + (r_{\rm max}+\|\theta^*\|)\).

Before going to the proof, we first record some useful properties about \(f(s,\hat{s},\theta)\) and \(g(s,s',\theta)\).

Define the mean field
\[
\bar{h}(\theta) := \mathbb{E}_{\mu}\!\left[f(s,\hat{s},\theta)+g(s,s',\theta)\right],
\]
where under \(\mathbb{E}_\mu\) we sample \(s\sim\mu\), \(\hat{s}\sim\mu\) independently, and then sample \(s'\sim P(\cdot\mid s)\). By Eq.\,\eqref{eq: fix point double chain}, it is straightforward to check that
\begin{equation}
\label{eq: fg expectation}
\begin{aligned}
\bar{h}(\theta)
= & -\Phi^\top \mu \mu^\top (R+\Phi \theta) + \Phi^\top D (R+P\Phi \theta - \Phi \theta) \\
= & -\Phi^\top \mu \mu^\top \Phi \delta + \Phi^\top D (P-I) \Phi \delta,
\end{aligned}
\end{equation}
where \(\delta:=\theta-\theta^*\). 

Beyond the above fact, we have two additional lemmas.

\begin{lemma} \label{l: bound on f and g linear}
For any $t\geq 0$, we have
\begin{align*}
    \|f(s_t, \hat{s}_t, \theta_t)\| \le\,& \|\delta_t\|+r_{\max}+\|\theta^*\|,\\
    \|g(s_t,s_t', \theta_t)\| \le \,&2(\|\delta_t\|+r_{\max}+\|\theta^*\|).
\end{align*}
\end{lemma}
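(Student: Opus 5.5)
The bound follows from the triangle inequality, Cauchy--Schwarz, and three elementary facts: the feature normalization $\max_s\|\phi(s)\|\le 1$, the bound $|r_s|\le r_{\max}$ on expected rewards, and the decomposition $\theta_t=\delta_t+\theta^*$. No probabilistic argument is needed, since the bound holds pointwise for every realization of $(s_t,\hat{s}_t,s_t')$.

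For $f$, start from the definition in Eq.~\eqref{eq: definition of f and g}:
\[
\|f(s_t,\hat{s}_t,\theta_t)\| = |r_{s_t}+\phi(s_t)^\top\theta_t|\,\|\phi(\hat{s}_t)\| \le |r_{s_t}|+|\phi(s_t)^\top\theta_t|.
\]
Here $\|\phi(\hat{s}_t)\|\le 1$. Note that $|r_{s_t}|\le r_{\max}$, because $r_s$ is a $\pi$-weighted average of the values $r(s,a)$. By Cauchy--Schwarz,
\[
|\phi(s_t)^\top\theta_t|\le\|\theta_t\|\le\|\delta_t\|+\|\theta^*\|.
\]
Together these give the first claim.

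For $g$, write
\[
\|g(s_t,s_t',\theta_t)\|\le |r_{s_t}|+|\phi(s_t')^\top\theta_t|+|\phi(s_t)^\top\theta_t| \le r_{\max}+2\|\theta_t\| \le r_{\max}+2\|\delta_t\|+2\|\theta^*\|.
\]
The last expression is at most $2(\|\delta_t\|+r_{\max}+\|\theta^*\|)$, which is the second claim.

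There is no genuine obstacle here. The only points requiring care are checking that $|r_s|\le r_{\max}$ indeed follows from averaging over actions, and keeping the bound in terms of $\delta_t$ rather than $\theta_t$, so that it can be plugged directly into the later descent recursion.
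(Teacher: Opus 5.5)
Your proposal is correct and follows essentially the same route as the paper's proof: pointwise bounds via Cauchy--Schwarz, $\|\phi(\cdot)\|\le 1$, $|r_s|\le r_{\max}$, and $\|\theta_t\|\le\|\delta_t\|+\|\theta^*\|$, giving $r_{\max}+\|\theta_t\|$ for $f$ and $r_{\max}+2\|\theta_t\|$ for $g$. Your remark that $|r_s|\le r_{\max}$ follows from $r_s$ being a $\pi$-weighted average of the $r(s,a)$ is a small justification the paper leaves implicit.
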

\begin{proof}[Proof of Lemma \ref{l: bound on f and g linear}]
According to Eq.(\ref{eq: definition of f and g}),
\begin{align*}
    \|f(s_t, \hat{s}_t, \theta_t)\|
    \le\,& \left(|r_{s_t}|+|\phi(s_t)^\top \theta_t|\right)\|\phi(\hat{s}_t)\|\\
    \le\,& \left(r_{\max}+\|\theta_t\|\,\|\phi(s_t)\|\right)\|\phi(\hat{s}_t)\| \tag*{(Cauchy--Schwarz)}\\
    \le\,& r_{\max}+\|\theta_t\| \tag*{(\(\|\phi(\cdot)\|\le 1\))}\\
    \le\,& r_{\max}+\|\theta_t-\theta^*\|+\|\theta^*\|\\
    =\,& \|\delta_t\|+r_{\max}+\|\theta^*\|.
\end{align*}
Similarly,
\begin{align*}
    \|g(s_t,s_t', \theta_t)\|
    \le\,&\left( |r_{s_t}|+|\phi(s_t')^\top \theta_t|+|\phi(s_t)^\top \theta_t| \right)\|\phi(s_t)\|\\
    \le\,& r_{\max}+2\|\theta_t\| \tag*{(\(\|\phi(\cdot)\|\le 1\))}\\
    \le\,& r_{\max}+2\|\theta_t-\theta^*\|+2\|\theta^*\|\\
    \le\,&2(\|\delta_t\|+r_{\max}+\|\theta^*\|).
\end{align*}
\end{proof}

\begin{lemma} \label{l: lipschitz f and g}
The function \(f\) is \(1\)-Lipschitz and \(g\) is \(2\)-Lipschitz with respect to \(\theta\), i.e.,
\[
\|g(s,s', \theta_1) - g(s,s', \theta_2)\| \le 2 \|\theta_1 - \theta_2\|, \quad
\|f(s, \hat{s}, \theta_1) - f(s, \hat{s}, \theta_2)\| \le \|\theta_1 - \theta_2\|.
\]
\end{lemma}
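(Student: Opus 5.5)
Both $f$ and $g$ are affine in $\theta$, so the plan is to subtract the two evaluations, cancel the reward terms, and bound what remains using Cauchy--Schwarz together with the normalization $\max_s\|\phi(s)\|\le 1$.

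For $f$, Eq.~\eqref{eq: definition of f and g} gives
\[
f(s,\hat{s},\theta_1)-f(s,\hat{s},\theta_2) = -\bigl(\phi(s)^\top(\theta_1-\theta_2)\bigr)\,\phi(\hat{s}).
\]
The norm of this vector is $|\phi(s)^\top(\theta_1-\theta_2)|\,\|\phi(\hat{s})\|$. By Cauchy--Schwarz this is at most $\|\phi(s)\|\,\|\theta_1-\theta_2\|\,\|\phi(\hat{s})\|$, which is bounded by $\|\theta_1-\theta_2\|$.

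For $g$, the reward term $r_s\phi(s)$ again cancels, and we are left with
\[
g(s,s',\theta_1)-g(s,s',\theta_2) = \bigl(\phi(s')-\phi(s)\bigr)^\top(\theta_1-\theta_2)\,\phi(s).
\]
Cauchy--Schwarz bounds its norm by $\|\phi(s')-\phi(s)\|\,\|\theta_1-\theta_2\|\,\|\phi(s)\|$. The triangle inequality gives $\|\phi(s')-\phi(s)\|\le 2$, and $\|\phi(s)\|\le 1$, so the bound is $2\|\theta_1-\theta_2\|$.

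There is no real obstacle here; the only point to check is that the reward terms cancel exactly in both differences. The constants $1$ and $2$ are the ones stated in the lemma, and they match the bounds of Lemma~\ref{l: bound on f and g linear}.
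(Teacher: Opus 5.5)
Your proposal is correct and is essentially the paper's own proof: cancel the reward terms, write each difference as a scalar times a feature vector, and bound it with Cauchy--Schwarz, $\|\phi(\cdot)\|\le 1$, and $\|\phi(s')-\phi(s)\|\le 2$. Nothing needs to be added.
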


\begin{proof}[Proof of Lemma \ref{l: lipschitz f and g}]
Using Eq.(\ref{eq: definition of f and g}) and \(\|\phi(\cdot)\|\le 1\),
\[
g(s,s',\theta_1)-g(s,s',\theta_2)
=\big((\phi(s')-\phi(s))^\top(\theta_1-\theta_2)\big)\,\phi(s),
\]
hence
\[
\|g(s,s', \theta_1) - g(s,s', \theta_2)\|
\le \|\phi(s')-\phi(s)\|\,\|\phi(s)\|\,\|\theta_1-\theta_2\|
\le 2\|\theta_1-\theta_2\|.
\]
Likewise,
\[
f(s,\hat{s},\theta_1)-f(s,\hat{s},\theta_2)
=-(\phi(s)^\top(\theta_1-\theta_2))\,\phi(\hat{s}),
\]
so \(\|f(s,\hat{s},\theta_1)-f(s,\hat{s},\theta_2)\|\le \|\theta_1-\theta_2\|\).
\end{proof}

Now we are ready to provide our proof of Theorem \ref{t: linear iid}.

\subsection{Proof of Theorem \ref{t: linear iid}}

\begin{proof}[Proof of Theorem \ref{t: linear iid}]
Let \(u_t := f(s_t,\hat{s}_t,\theta_t)+g(s_t,s_t',\theta_t)\). From Eq.(\ref{eq: linear stochastic averaged reward update}) with \(\alpha_t=\alpha\),
\begin{equation*}
\|\delta_{t+1}\|^2
= \|\delta_t\|^2 + 2\alpha\,\delta_t^\top u_t + \alpha^2\|u_t\|^2. 
\end{equation*}
Taking conditional expectation given \(\mathcal{F}_t\), and using that \(\delta_t\) is \(\mathcal{F}_t\)-measurable,
\begin{equation*}
\mathbb{E}\!\left[\|\delta_{t+1}\|^2\mid \mathcal{F}_t\right]
= \|\delta_t\|^2 + 2\alpha\,\delta_t^\top \mathbb{E}[u_t\mid \mathcal{F}_t]
+ \alpha^2\,\mathbb{E}[\|u_t\|^2\mid \mathcal{F}_t]. 
\end{equation*}

\textbf{Step 1: Drift term.}
We first introduce the following lemma:

\begin{lemma}
\label{l: gradient splitting}
The linear function $h(\theta):=\Phi^\top D (I-P) \Phi \theta$ is a gradient splitting of the quadratic function $\|\Phi\theta\|_{\rm Dir}^2$, i.e.,
\[
\langle\theta,h(\theta)\rangle=\frac{1}{2}\langle\theta,\nabla_\theta\|\Phi\theta\|_{\rm Dir}^2\rangle,\quad \forall\,\theta\in\mathbb{R}^d.
\]
As a result, we have $\|\Phi \theta\|_{\rm Dir}^2=\theta^\top h(\theta)$ for all $\theta\in\mathbb{R}^d$.
\end{lemma}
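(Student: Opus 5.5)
The plan is to reduce everything to the identity $\|v\|_{\rm Dir}^2 = v^\top D(I-P)v$ for every $v\in\mathbb{R}^{|S|}$. This identity was already verified in the proof of Lemma~\ref{l:condition_number_2_eta1_eta3}, by expanding the square in the definition \eqref{eq: dir norm} and using $\sum_{s'}P(s'|s)=1$ together with stationarity $\mu^\top P=\mu^\top$. Applying it with $v=\Phi\theta$ gives
\[
\|\Phi\theta\|_{\rm Dir}^2=\theta^\top \Phi^\top D(I-P)\Phi\,\theta=\theta^\top h(\theta).
\]
This is already the second claim of the lemma.

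For the first claim, write $M:=\Phi^\top D(I-P)\Phi$, so that $h(\theta)=M\theta$. In general $M$ is not symmetric, because $DP$ is not symmetric unless the chain is reversible. The quadratic form $\theta^\top M\theta$ only sees the symmetric part of $M$. Hence
\[
\nabla_\theta\|\Phi\theta\|_{\rm Dir}^2=\nabla_\theta(\theta^\top M\theta)=(M+M^\top)\theta .
\]
This is exactly the gradient-splitting condition $B+B^\top=2A$ from the proof-idea section, with $B=M$ and $A=\tfrac12(M+M^\top)$. It follows that
\[
\tfrac12\langle\theta,\nabla_\theta\|\Phi\theta\|_{\rm Dir}^2\rangle=\tfrac12\,\theta^\top(M+M^\top)\theta=\theta^\top M\theta=\langle\theta,h(\theta)\rangle,
\]
since $\theta^\top M^\top\theta=\theta^\top M\theta$.

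There is no real obstacle. The only substantive step is the Dirichlet-form identity, and there stationarity is needed to rewrite $\tfrac12\sum_{s,s'}\mu(s)P(s'|s)v(s')^2$ as $\tfrac12\sum_{s'}\mu(s')v(s')^2$. Since this identity is already established earlier in the paper, I would simply cite it. I would also note that the quadratic $\|\Phi\theta\|_{\rm Dir}^2$ is convex, because $M+M^\top$ is positive semidefinite: its quadratic form equals $2\|\Phi\theta\|_{\rm Dir}^2\ge 0$.
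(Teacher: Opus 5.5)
Your proposal is correct and follows essentially the same route as the paper. Both arguments establish $\|v\|_{\rm Dir}^2=v^\top D(I-P)v$ using row-stochasticity and stationarity, specialize to $v=\Phi\theta$, and compare with the gradient $\bigl(\Phi^\top D(I-P)\Phi+\Phi^\top(I-P)^\top D\Phi\bigr)\theta$. The only difference is that you cite the Dirichlet-form identity from the proof of Lemma~\ref{l:condition_number_2_eta1_eta3} rather than re-deriving it inline, which is legitimate.
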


The proof of this lemma can be found in Section \ref{sec: pof gradient splitting}.

Under i.i.d.\ sampling, \((s_t,\hat{s}_t)\) are independent of \(\mathcal{F}_t\) with marginals \(\mu\), and \(s_t'\sim P(\cdot\mid s_t)\). Therefore,
\[
\mathbb{E}[u_t\mid \mathcal{F}_t] = \bar{h}(\theta_t),
\]
where \(\bar h(\cdot)\) is defined in Eq.(\ref{eq: fg expectation}).

Using Eq.(\ref{eq: fg expectation}) with \(\delta=\delta_t\),
\begin{align*}
\delta_t^\top \bar{h}(\theta_t)
=&\ \delta_t^\top \Phi^\top D(P-I)\Phi\,\delta_t
-\delta_t^\top \Phi^\top \mu\mu^\top \Phi\,\delta_t \\
=&\ -\delta_t^\top \Phi^\top D(I-P)\Phi\,\delta_t - \|\mu^\top\Phi\delta_t\|^2 \\
=&\ -\|\Phi\delta_t\|_{\rm Dir}^2 - \|\mu^\top\Phi\delta_t\|^2 \tag*{(Lemma \ref{l: gradient splitting})}\\
\le&\ -\eta\,\|\delta_t\|^2. \tag{Eq.\,\eqref{eq: eta}}
\end{align*}

\textbf{Step 2: Second-moment term.}
By Lemma \ref{l: bound on f and g linear},
\[
\|u_t\|
\le \|f(s_t,\hat{s}_t,\theta_t)\|+\|g(s_t,s_t',\theta_t)\|
\le 3(\|\delta_t\|+r_{\max}+\|\theta^*\|). \tag*{(Lemma \ref{l: bound on f and g linear})}
\]
Hence,
\begin{align*}
\mathbb{E}[\|u_t\|^2\mid \mathcal{F}_t]
\le&\ 9(\|\delta_t\|+r_{\max}+\|\theta^*\|)^2 \\
\le&\ 18\|\delta_t\|^2 + 18(r_{\max}+\|\theta^*\|)^2. \tag{\((a+b)^2\le 2a^2+2b^2\)}
\end{align*}

\textbf{Step 3: Combine.}
Plugging the two bounds back,
\[
\mathbb{E}\!\left[\|\delta_{t+1}\|^2\mid \mathcal{F}_t\right]
\le (1-2\alpha\eta+18\alpha^2)\|\delta_t\|^2 + 18\alpha^2(r_{\max}+\|\theta^*\|)^2.
\]
Since \(\alpha\le \eta/18\), we have \(18\alpha^2\le \alpha\eta\), hence
\[
1-2\alpha\eta+18\alpha^2 \le 1-\alpha\eta,
\]
and therefore
\[
\mathbb{E}\!\left[\|\delta_{t+1}\|^2\mid \mathcal{F}_t\right]
\le (1-\alpha\eta)\|\delta_t\|^2 + 18\alpha^2(r_{\max}+\|\theta^*\|)^2.
\]
Taking expectation and iterating the recursion yields
\[
\mathbb{E}\|\delta_T\|^2
\le (1-\alpha\eta)^T\mathbb{E}\|\delta_0\|^2
+ 18\alpha^2(r_{\max}+\|\theta^*\|)^2\sum_{k=0}^{T-1}(1-\alpha\eta)^k.
\]
Using \(\sum_{k=0}^{T-1}(1-\alpha\eta)^k \le \frac{1}{\alpha\eta}\) and \(1-x\le e^{-x}\),
\[
\mathbb{E}\|\delta_T\|^2
\le e^{-\alpha\eta T}\mathbb{E}\|\delta_0\|^2
+ \frac{18\alpha(r_{\max}+\|\theta^*\|)^2}{\eta}.
\]
\end{proof}

\subsection{Proof of Lemma \ref{l: gradient splitting}}
\label{sec: pof gradient splitting}

\begin{proof}[Proof of Lemma \ref{l: gradient splitting}]
According to Eq.\,\eqref{eq: dir norm} and the stationarity \(\mu^\top=\mu^\top P\),
\begin{align*}
\|f\|_{\rm Dir}^2
=&\ \frac12\sum_{s,s'}\mu(s)P(s'|s)\big(f(s)^2+f(s')^2-2f(s)f(s')\big)\\
=&\ \sum_s\mu(s)f(s)^2-\sum_{s,s'}\mu(s)P(s'|s)f(s)f(s') \\
=&\ f^\top D(I-P)f.
\end{align*}
Taking \(f=\Phi\theta\) gives
\[
\|\Phi\theta\|_{\rm Dir}^2 = \theta^\top \Phi^\top D(I-P)\Phi\,\theta = \theta^\top h(\theta).
\]
Since \(\|\Phi\theta\|_{\rm Dir}^2\) is a quadratic form, its gradient is
\[
\nabla_\theta\|\Phi\theta\|_{\rm Dir}^2 = \big(\Phi^\top D(I-P)\Phi + \Phi^\top (I-P)^\top D\Phi\big)\theta,
\]
hence
\[
\frac12\langle\theta,\nabla_\theta\|\Phi\theta\|_{\rm Dir}^2\rangle
= \theta^\top \Phi^\top D(I-P)\Phi\,\theta
= \langle\theta,h(\theta)\rangle.
\]
\end{proof}

\subsection{Proof of Theorem \ref{t: linear markov}}

\begin{proof}[Proof of Theorem \ref{t: linear markov}]
For convenience, define
\begin{equation}
\label{eq: fg bar}
\begin{aligned}
\bar{g}(\theta_t)
:= &\ \sum_{s,s'}\mu(s)P(s'|s)\,g(s,s',\theta_t)
= \Phi^\top D (R+P \Phi \theta_t - \Phi \theta_t), \\
\bar{f}(\theta_t)
:= &\ \sum_{s,\hat{s}} \mu(s)\mu(\hat{s})\, f(s,\hat{s},\theta_t)
= -\Phi^\top \mu \mu^\top (R+\Phi \theta_t).
\end{aligned}
\end{equation}

Let \(u_t := f(s_t,\hat{s}_t,\theta_t)+g(s_t,s_t',\theta_t)\). Using that \(\delta_t\) and \(\bar f(\theta_t)+\bar g(\theta_t)\) are \(\mathcal{F}_t\)-measurable, we have
\begin{align*}
&\mathbb{E}\!\left[\delta_t^\top\!\left(\Phi^\top D (P-I) \Phi \delta_t - \Phi^\top \mu \mu^\top \Phi \delta_t - \mathbb{E}\!\left[u_t \mid \mathcal{F}_t\right]\right)\right] \\
=&\ \mathbb{E}\!\left[\delta_t^\top\!\left(\bar{g}(\theta_t)+\bar{f}(\theta_t) - \mathbb{E}\!\left[u_t \mid \mathcal{F}_t\right]\right)\right] \tag*{(Eq.\,\eqref{eq: fg expectation})}\\
=&\ \mathbb{E}\!\left[\delta_t^\top\!\left(\bar{g}(\theta_t)+\bar{f}(\theta_t) - u_t\right)\right] \\
=&\ -\mathbb{E}\!\left[\delta_t^\top\!\left((g(s_t,s_t',\theta_t)-\bar g(\theta_t))+(f(s_t,\hat s_t,\theta_t)-\bar f(\theta_t))\right)\right].
\end{align*}

Define \(B\) and \(G\) as in the theorem statement. We introduce the following lemma:

\begin{lemma}
\label{l: all parts bound}
Suppose
\[
\alpha \le \frac{\eta B^2}{(3\tau_{\rm mix}+1)G},
\]
then \(\mathbb{E} [\Vert \delta_t \Vert^2] \le B^2\) for all \(t\ge 0\).
\end{lemma}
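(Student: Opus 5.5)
We prove the bound by strong induction on $t$, establishing $\mathbb{E}\|\delta_t\|^2\le B^2$ for all $t\ge 0$. The base case is trivial, since $\|\delta_0\|^2\le B^2$ holds by the definition of $B$. The argument splits into two regimes: a short-horizon one for $t\le\tau_{\rm mix}$ and a drift-based one for $t>\tau_{\rm mix}$.

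\textbf{Short horizon, $t\le\tau_{\rm mix}$.} We use only the deterministic growth bound. Lemma~\ref{l: bound on f and g linear} gives $\|u_t\|\le 3(\|\delta_t\|+r_{\max}+\|\theta^*\|)$, so
\[
\|\delta_{t+1}\|\le(1+3\alpha)\|\delta_t\|+3\alpha(r_{\max}+\|\theta^*\|).
\]
Iterating yields $\|\delta_t\|\le(1+3\alpha)^t\|\delta_0\|+\big((1+3\alpha)^t-1\big)(r_{\max}+\|\theta^*\|)$. The step-size condition forces $\alpha\tau_{\rm mix}$ small. Indeed, $G\ge 42B^2$ gives $\alpha(3\tau_{\rm mix}+1)\le\eta/42$, and $\eta\le 2$ because $\|\Phi x\|_{\rm Dir}^2\le 2\|\Phi x\|_D^2$ and $(\mu^\top\Phi x)^2\le 1$, though these crude bounds should be rechecked. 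Consequently $(1+3\alpha)^t\le e^{3\alpha\tau_{\rm mix}}$ is close to $1$, and $\|\delta_t\|\le 2\|\delta_0\|+(r_{\max}+\|\theta^*\|)=B$ holds deterministically.

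\textbf{Drift regime, $t>\tau_{\rm mix}$.} We expand
\[
\mathbb{E}\|\delta_{t+1}\|^2=\mathbb{E}\|\delta_t\|^2+2\alpha\,\mathbb{E}[\delta_t^\top u_t]+\alpha^2\mathbb{E}\|u_t\|^2 .
\]
We split $\delta_t^\top u_t$ as $\delta_t^\top(\bar f+\bar g)(\theta_t)$ plus the bias term computed just above. By Lemma~\ref{l: gradient splitting} and the definition of $\eta$, the first part is at most $-\eta\|\delta_t\|^2$. For the bias we use the standard conditioning-back trick. We replace $\theta_t$ by $\theta_{t-\tau}$ with $\tau=\tau_{\rm mix}$, which costs $\|\delta_t-\delta_{t-\tau}\|\le 3\alpha\tau(\|\delta_{t-\tau}\|+r_{\max}+\|\theta^*\|)(1+O(\alpha\tau))$ together with the Lipschitz constants of Lemma~\ref{l: lipschitz f and g}. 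We then condition on $\mathcal{F}_{t-\tau}$. The mixing bound Eq.~\eqref{eq: mixing} makes the resulting conditional bias at most $\alpha\cdot O(\|\delta_{t-\tau}\|+r_{\max}+\|\theta^*\|)^2$; for the pair of chains $(s_t,\hat s_t)$ the total variation distance of the product is bounded by the sum of the marginal distances, so mixing applies to both. Collecting terms and using the induction hypothesis $\mathbb{E}\|\delta_k\|^2\le B^2$ for $k\le t$, the bias and second-moment terms together contribute at most $\alpha^2(3\tau_{\rm mix}+1)G$, with the constants $42B^2$ and $30(r_{\max}+\|\theta^*\|)^2$ arising from $(a+b)^2\le 2a^2+2b^2$ bookkeeping.

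This gives
\[
\mathbb{E}\|\delta_{t+1}\|^2\le(1-2\alpha\eta)\,\mathbb{E}\|\delta_t\|^2+\alpha^2(3\tau_{\rm mix}+1)G\le(1-2\alpha\eta)B^2+\alpha^2(3\tau_{\rm mix}+1)G .
\]
The right-hand side is at most $B^2$ exactly when $\alpha(3\tau_{\rm mix}+1)G\le 2\eta B^2$, which the hypothesis guarantees. This closes the induction.

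The main obstacle is the bias step. Terms such as $\mathbb{E}[\|\delta_{t-\tau}\|\,\|\delta_t\|]$ must be controlled using only second moments from the induction hypothesis, via Cauchy--Schwarz, without introducing higher moments. The constants also have to be tracked carefully enough that everything fits inside $G$ with the precise factor $3\tau_{\rm mix}+1$.
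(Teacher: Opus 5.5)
Your proposal is correct and follows the paper's proof essentially step for step: a deterministic growth bound for $t\le\tau_{\rm mix}$ (using $G\ge 42B^2$ and an absolute upper bound on $\eta$ to make $\alpha\tau_{\rm mix}$ small), followed by induction for $t>\tau_{\rm mix}$. In the inductive step, the Markov-noise bounds, which hold under the second-moment hypothesis $\mathbb{E}\|\delta_k\|^2\le B^2$ for $k\le t$, are plugged into the one-step recursion, and the step-size condition closes it. One small note on the point you flagged: your two crude bounds literally give $\eta\le 3$ rather than $\eta\le 2$; $\eta\le 3$ is exactly what the paper uses and is all you need, though $\eta\le 2$ is also true by writing $\Phi x=u+ce$ with $\langle u,e\rangle_D=0$, so that $\|\Phi x\|_{\rm Dir}^2+(\mu^\top\Phi x)^2\le 2\|u\|_D^2+c^2\le 2\|\Phi x\|_D^2\le 2$.
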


The proof of this lemma can be found in Section \ref{sec: pof all parts bound}.

Since \(\mathbb{E} [\Vert \delta_t \Vert^2] \le B^2\) for all \(t\ge 0\), we know that the following lemma holds. 

\begin{lemma}
\label{l: markov noise}
Suppose \(t \ge \tau_{\rm mix}\). With \(\bar{f}, \bar{g}\) defined in Eq.(\ref{eq: fg bar}), we have
\begin{align*}
\mathbb{E} \left[ \delta_t^\top \left(g(s_t, s_t', \theta_t) - \bar{g}(\theta_t)\right) \right]
\le &\ \alpha \left(3B^2+(r_{\max}+\|\theta^*\|)^2\right) + (42B^2 + 30(r_{\rm max}+\|\theta^*\|)^2)\,\tau_{\rm mix}\,\alpha, \\
\mathbb{E} \left[ \delta_t^\top \left(f(s_t, \hat{s}_t, \theta_t) - \bar{f}(\theta_t)\right) \right]
\le &\ \alpha \left(3B^2+(r_{\max}+\|\theta^*\|)^2\right) + (21B^2 + 15(r_{\rm max}+\|\theta^*\|)^2)\,\tau_{\rm mix}\,\alpha.
\end{align*}
\end{lemma}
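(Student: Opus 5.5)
We plan to use the standard mixing-time coupling argument in the style of \cite{bhandari2018finite, srikant2019finite}, applied separately to the $g$ and $f$ terms. We take $\tau_{\rm mix}=\tau_{\rm mix}(\alpha)$ to mean that, for every initial state, $\|p_{\tau}(\cdot\mid s)-\mu\|_1\le\alpha$ for all $\tau\ge\tau_{\rm mix}$, which is possible by Eq.~\eqref{eq: mixing}.

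\textbf{Decomposition.} For the $g$ term, set $\tau=\tau_{\rm mix}$ and $t\ge\tau$, and write
\begin{align*}
\delta_t^\top\big(g(s_t,s_t',\theta_t)-\bar g(\theta_t)\big)
=&\ \big(\delta_t-\delta_{t-\tau}\big)^\top\big(g(s_t,s_t',\theta_t)-\bar g(\theta_t)\big)\\
&+\delta_{t-\tau}^\top\big(g(s_t,s_t',\theta_t)-g(s_t,s_t',\theta_{t-\tau})-\bar g(\theta_t)+\bar g(\theta_{t-\tau})\big)\\
&+\delta_{t-\tau}^\top\big(g(s_t,s_t',\theta_{t-\tau})-\bar g(\theta_{t-\tau})\big).
\end{align*}
Each line is bounded as follows.

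\textbf{Drift over $\tau$ steps.} First we need a bound on the motion of the iterate. From the update rule and Lemma~\ref{l: bound on f and g linear},
\[
\|\theta_{k+1}-\theta_k\|\le 3\alpha\big(\|\delta_k\|+r_{\max}+\|\theta^*\|\big).
\]
A discrete Grönwall argument over $\tau$ steps, valid when $\alpha\tau$ is small (this is implied by the stepsize condition), gives
\[
\|\delta_t-\delta_{t-\tau}\|\le c\,\alpha\tau\big(\|\delta_{t-\tau}\|+r_{\max}+\|\theta^*\|\big),
\]
and also $\|\delta_t\|\le 2\|\delta_{t-\tau}\|+(r_{\max}+\|\theta^*\|)$. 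The first two lines of the decomposition are then controlled using Lemma~\ref{l: bound on f and g linear}, the Lipschitz bounds of Lemma~\ref{l: lipschitz f and g} (the function $\bar g$ is also $2$-Lipschitz), Cauchy--Schwarz, and $ab\le (a^2+b^2)/2$. This yields terms of the form
\[
\alpha\tau\big(C_1\|\delta_{t-\tau}\|^2+C_2(r_{\max}+\|\theta^*\|)^2\big).
\]
Taking expectations and using Lemma~\ref{l: all parts bound}, i.e. $\mathbb{E}\|\delta_k\|^2\le B^2$ for all $k$, turns these into $\tau\alpha\big(c_1B^2+c_2(r_{\max}+\|\theta^*\|)^2\big)$. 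The explicit constants $42$ and $30$ should emerge from this bookkeeping.

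\textbf{Mixing term.} For the third line, condition on $\mathcal{F}_{t-\tau+1}$, which contains $s_{t-\tau}$ and therefore determines $\theta_{t-\tau}$. For a fixed $\theta$,
\[
\Big\|\mathbb{E}\big[g(s_t,s_t',\theta)\mid s_{t-\tau}\big]-\bar g(\theta)\Big\|
\le \big\|p_\tau(\cdot\mid s_{t-\tau})-\mu\big\|_1\,\max_{s,s'}\|g(s,s',\theta)\|
\le 2\alpha\big(\|\delta_{t-\tau}\|+r_{\max}+\|\theta^*\|\big).
\]
Multiplying by $\|\delta_{t-\tau}\|$ and applying the same AM--GM step gives the $\alpha\big(3B^2+(r_{\max}+\|\theta^*\|)^2\big)$ contribution.

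\textbf{The $f$ term.} The argument is identical, with two differences. Here $\hat s_t$ comes from an independent chain, so we condition on both $s_{t-\tau}$ and $\hat s_{t-\tau}$. The joint law of $(s_t,\hat s_t)$ given these is a product measure, and its $\ell_1$ distance to $\mu\otimes\mu$ is at most the sum of the two marginal distances, which is at most $2\alpha$. Since the Lipschitz constant and the norm bound for $f$ are half those for $g$, the constants are halved.

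\textbf{Main obstacle.} The hardest part is the drift step. The bound on $\|\delta_t-\delta_{t-\tau}\|$ must be expressed through $\|\delta_{t-\tau}\|$, and only second moments are available, via Lemma~\ref{l: all parts bound}. This makes the Grönwall step and the constant tracking the most delicate part of the proof.
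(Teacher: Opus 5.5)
Your decomposition, the mixing-bias term, and the treatment of the $f$-term (product law of $(s_t,\hat s_t)$ at $\ell_1$-distance at most $2\alpha$ from $\mu\otimes\mu$, halved drift constants, unchanged bias constant) match the paper's argument; your middle line is the paper's $I_3+I_4$. The gap is the drift step. Routed through Gr\"onwall, it cannot deliver the stated constants $42$ and $30$.

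Write $c=r_{\max}+\|\theta^*\|$. Gr\"onwall gives $\|\theta_t-\theta_{t-\tau}\|\le\big((1+3\alpha)^{\tau}-1\big)(\|\delta_{t-\tau}\|+c)$, and $(1+3\alpha)^{\tau}-1>3\alpha\tau$ as soon as $\tau\ge 2$. Carrying this through both drift lines with the AM--GM steps you describe gives $\frac{(1+3\alpha)^{\tau}-1}{3\alpha\tau}\,(42B^2+30c^2)\,\tau\alpha$, which is strictly too large. If you also substitute $\|\delta_t\|\le 2\|\delta_{t-\tau}\|+c$ into the first line, that line alone puts a coefficient $48$ on $B^2$. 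The constant $G=42B^2+30c^2$ in Theorem~\ref{t: linear markov} (and the $126$, $90$ downstream) is built from these exact numbers. So ``the constants should emerge from this bookkeeping'' does not hold on your route.

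The fix, which is what the paper does, is to drop Gr\"onwall. The obstacle you flag is self-inflicted: Lemma~\ref{l: all parts bound} gives $\mathbb{E}\|\delta_i\|^2\le B^2$ at every index $i$, not only at $t-\tau$. Nothing needs to be expressed through $\|\delta_{t-\tau}\|$. Use $\|\theta_t-\theta_{t-\tau}\|\le 3\alpha\sum_{i=t-\tau}^{t-1}(\|\delta_i\|+c)$ and apply $ab\le(a^2+b^2)/2$ termwise:
\begin{itemize}
\item The first line is at most $12\alpha\sum_i\mathbb{E}[(\|\delta_t\|+c)(\|\delta_i\|+c)]\le 24(B^2+c^2)\tau\alpha$.
\item The Lipschitz line (constant $2$ for both $g$ and $\bar g$) is at most $12\alpha\sum_i\mathbb{E}[\|\delta_{t-\tau}\|(\|\delta_i\|+c)]\le(18B^2+6c^2)\tau\alpha$.
\end{itemize}
These sum to exactly $(42B^2+30c^2)\tau_{\rm mix}\alpha$. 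The same computation with Lipschitz constant $1$ and $\|f_t(\theta_t)-\bar f(\theta_t)\|\le 2(\|\delta_t\|+c)$ gives $21B^2+15c^2$ for $f$. This route also needs no smallness of $\alpha\tau_{\rm mix}$ inside the lemma. Incidentally, your conditioning on a $\sigma$-field containing $s_{t-\tau}$ is cleaner than the paper's indexing.
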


The proof of this lemma can be found in Section \ref{sec: pof markov noise}. 

Summing the two inequalities and multiplying by the factor \(2\alpha\), we obtain for all \(t\ge\tau_{\rm mix}\),
\begin{align*}
&-2\alpha\,\mathbb{E}\!\left[\delta_t^\top\!\left(\Phi^\top D (P-I) \Phi \delta_t - \Phi^\top \mu \mu^\top \Phi \delta_t
- \mathbb{E}[u_t\mid\mathcal{F}_t]\right)\right] \\
\le\;&
4\alpha^2\Big(3B^2+(r_{\max}+\|\theta^*\|)^2\Big)
+\alpha^2\tau_{\rm mix}\Big(126B^2+90(r_{\max}+\|\theta^*\|)^2\Big).
\end{align*}

Using the same \(I_1\) and \(I_2\) bounds as in the i.i.d.\ case (so that the drift contributes \(-(2\alpha\eta)\|\delta_t\|^2\) and the squared-norm term contributes \(18\alpha^2(\|\delta_t\|^2+(r_{\max}+\|\theta^*\|)^2)\)),
we obtain for any \(t\ge\tau_{\rm mix}\),
\begin{align*}
\mathbb{E}\|\delta_{t+1}\|^2
\le\;& (1-2\alpha\eta)\mathbb{E}\|\delta_t\|^2
+18\alpha^2\,\mathbb{E}\|\delta_t\|^2
+18\alpha^2(r_{\max}+\|\theta^*\|)^2 \\
&\quad +4\alpha^2\Big(3B^2+(r_{\max}+\|\theta^*\|)^2\Big)
+\alpha^2\tau_{\rm mix}\Big(126B^2+90(r_{\max}+\|\theta^*\|)^2\Big) \\
\le\;& (1-2\alpha\eta)\mathbb{E}\|\delta_t\|^2
+18\alpha^2 B^2
+22\alpha^2(r_{\max}+\|\theta^*\|)^2
+12\alpha^2 B^2 \\
&\quad +\alpha^2\tau_{\rm mix}\Big(126B^2+90(r_{\max}+\|\theta^*\|)^2\Big) \tag*{(Lemma \ref{l: all parts bound})}\\
\le\;& (1-2\alpha\eta)\mathbb{E}\|\delta_t\|^2
+\alpha^2\Big(42B^2+30(r_{\max}+\|\theta^*\|)^2\Big)\,(3\tau_{\rm mix}+1) \\
=\;& (1-2\alpha\eta)\mathbb{E}\|\delta_t\|^2+\alpha^2 G(3\tau_{\rm mix}+1).
\end{align*}

Iterating the above inequality from \(t=\tau_{\rm mix}\) to \(T-1\) gives
\[
\mathbb{E}\|\delta_T\|^2
\le (1-2\alpha\eta)^{T-\tau_{\rm mix}}\mathbb{E}\|\delta_{\tau_{\rm mix}}\|^2
+\alpha^2 G(3\tau_{\rm mix}+1)\sum_{k=0}^{T-\tau_{\rm mix}-1}(1-2\alpha\eta)^k.
\]
Using Lemma \ref{l: all parts bound}, \(\sum_{k\ge0}(1-2\alpha\eta)^k \le \frac{1}{2\alpha\eta}\), and \(1-x\le e^{-x}\), we obtain
\[
\mathbb{E}\|\delta_T\|^2
\le e^{-2\alpha\eta(T-\tau_{\rm mix})}B^2+\frac{\alpha G(3\tau_{\rm mix}+1)}{2\eta}.
\]
\end{proof}

\subsection{Proof of Lemma \ref{l: all parts bound}}
\label{sec: pof all parts bound}

\begin{proof}[Proof of Lemma \ref{l: all parts bound}]
We first handle \(t\le \tau_{\rm mix}\) via a pathwise bound.
From Eq.(\ref{eq: linear stochastic averaged reward update}) and Lemma \ref{l: bound on f and g linear},
\begin{align*}
\|\delta_{t+1}\|
\le&\ \|\delta_t\|+\alpha\big(\|f(s_t,\hat s_t,\theta_t)\|+\|g(s_t,s_t',\theta_t)\|\big)\\
\le&\ \|\delta_t\|+3\alpha(\|\delta_t\|+r_{\max}+\|\theta^*\|)
=(1+3\alpha)\|\delta_t\|+3\alpha(r_{\max}+\|\theta^*\|).
\end{align*}
Iterating the inequality yields, for \(t\le \tau_{\rm mix}\),
\[
\|\delta_t\|
\le (1+3\alpha)^t\|\delta_0\|
+3\alpha(r_{\max}+\|\theta^*\|)\sum_{k=0}^{t-1}(1+3\alpha)^k.
\]
Using \((1+3\alpha)^t\le e^{3\alpha t}\) and \(\sum_{k=0}^{t-1}(1+3\alpha)^k\le t(1+3\alpha)^{t}\),
we have
\[
\|\delta_t\|\le e^{3\alpha\tau_{\rm mix}}\|\delta_0\|
+3\alpha\tau_{\rm mix}e^{3\alpha\tau_{\rm mix}}(r_{\max}+\|\theta^*\|).
\]
The stepsize condition implies \(\alpha\tau_{\rm mix}\le 1/6\) (indeed \(G\ge 42B^2\) and \(\eta\le 3\) imply
\(\alpha \le \eta/(42(3\tau_{\rm mix}+1))\le 1/(6\tau_{\rm mix})\)),
so \(e^{3\alpha\tau_{\rm mix}}\le e^{1/2}<2\) and \(6\alpha\tau_{\rm mix}\le 1\). Therefore,
\[
\|\delta_t\|\le 2\|\delta_0\|+6\alpha\tau_{\rm mix}(r_{\max}+\|\theta^*\|)\le 2\|\delta_0\|+(r_{\max}+\|\theta^*\|)=B,
\quad \forall t\le \tau_{\rm mix}.
\]
Hence \(\mathbb{E}\|\delta_t\|^2\le B^2\) for all \(t\le \tau_{\rm mix}\).

Now consider \(t>\tau_{\rm mix}\) and use induction on \(t\).
Assume \(\mathbb{E}\|\delta_k\|^2\le B^2\) for all \(k\le t\).
Then we have the following lemma:

\begin{lemma}
\label{l: markov noise induction}
Suppose \(t \ge \tau_{\rm mix}\) and \(\mathbb{E} [\Vert \delta_k \Vert^2] \le B^2\) for all \(k \le t\).
Then Lemma \ref{l: markov noise} holds at time \(t\).
\end{lemma}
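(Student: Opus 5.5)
The approach is the standard mixing-time argument of \cite{bhandari2018finite,srikant2019finite}: compare the current iterate with the iterate $\tau:=\tau_{\rm mix}$ steps back, where the Markov chains have nearly mixed. We treat the $g$-term; the $f$-term is identical with $(s_t,\hat s_t)$ in place of $(s_t,s_t')$ and Lipschitz constant $1$ instead of $2$, which is why its constants are halved. We take $\tau_{\rm mix}=\tau_{\rm mix}(\alpha)$ to mean $C\beta^{\tau}\le\alpha$ in Eq.~\eqref{eq: mixing}, so that the conditional law of $(s_t,s_t')$ given $\mathcal{F}_{t-\tau}$ is within $\alpha$ in total variation of $\mu(s)P(s'|s)$, and similarly for $\hat s_t$.

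We decompose
\[
\delta_t^\top(g(s_t,s_t',\theta_t)-\bar g(\theta_t)) = \delta_{t-\tau}^\top(g(s_t,s_t',\theta_{t-\tau})-\bar g(\theta_{t-\tau})) + E_t ,
\]
where $E_t$ collects the differences obtained by replacing $\delta_t,\theta_t$ with $\delta_{t-\tau},\theta_{t-\tau}$. The first term is handled by conditioning on $\mathcal{F}_{t-\tau}$. Since $\theta_{t-\tau}$ is then fixed, its conditional expectation is at most $\|\delta_{t-\tau}\|\cdot\alpha\cdot\sup_{s,s'}\|g(s,s',\theta_{t-\tau})\|$. Lemma~\ref{l: bound on f and g linear} bounds the supremum by $2(\|\delta_{t-\tau}\|+r_{\max}+\|\theta^*\|)$. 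Young's inequality together with the hypothesis $\mathbb{E}\|\delta_{t-\tau}\|^2\le B^2$ then gives a contribution of order $\alpha(3B^2+(r_{\max}+\|\theta^*\|)^2)$, which is the first summand in Lemma~\ref{l: markov noise}.

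For $E_t$, Lemma~\ref{l: lipschitz f and g} shows that $g$ and $\bar g$ are $2$-Lipschitz in $\theta$, so
\[
|E_t|\le \|\delta_t-\delta_{t-\tau}\|\,\|g-\bar g\|(\theta_t) + 4\|\delta_{t-\tau}\|\,\|\theta_t-\theta_{t-\tau}\|.
\]
The drift over the window is controlled with Lemma~\ref{l: bound on f and g linear}: $\|\theta_{k+1}-\theta_k\|\le 3\alpha(\|\delta_k\|+r_{\max}+\|\theta^*\|)$. A pathwise Grönwall argument, identical to the one used for $t\le\tau_{\rm mix}$ in the proof of Lemma~\ref{l: all parts bound}, together with $\alpha\tau\le 1/6$, gives
\[
\|\theta_t-\theta_{t-\tau}\|\le 6\alpha\tau(\|\delta_{t-\tau}\|+r_{\max}+\|\theta^*\|).
\]
The same bound holds with $\|\delta_t\|$ in place of $\|\delta_{t-\tau}\|$, since the two norms are comparable up to a factor of $2$. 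Multiplying by $\|g\|+\|\bar g\|\le 4(\|\delta\|+r_{\max}+\|\theta^*\|)$, expanding squares with $(a+b)^2\le 2a^2+2b^2$, and taking expectations using $\mathbb{E}\|\delta_k\|^2\le B^2$ for $k\le t$ yields a bound of the form $\alpha\tau(c_1B^2+c_2(r_{\max}+\|\theta^*\|)^2)$. Careful bookkeeping should give $c_1=42$ and $c_2=30$, and half of these for $f$.

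The main obstacle is that the hypothesis only controls second moments, $\mathbb{E}\|\delta_k\|^2\le B^2$, whereas the pathwise bounds produce products such as $\|\delta_{t-\tau}\|\,\|\delta_t\|$. Every such product must therefore be reduced, via the pathwise comparison $\|\delta_t\|\le 2\|\delta_{t-\tau}\|+(r_{\max}+\|\theta^*\|)$, to quadratics in a single $\|\delta_k\|$ before taking expectations. The numerical constants require care, but they are not conceptually delicate.
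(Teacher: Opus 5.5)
\textbf{The constants do not close on your route.} Your decomposition is the paper's. With $\tau=\tau_{\rm mix}$ and $c=r_{\max}+\|\theta^*\|$, your main term is the paper's $I_2$, and your bound $\alpha(3B^2+c^2)$ matches it. Your $E_t$ is exactly $I_1+I_3+I_4$. The gap is in how you control the window drift.

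Your pathwise Gr\"{o}nwall bound $\|\theta_t-\theta_{t-\tau}\|\le 6\alpha\tau(\|\delta_{t-\tau}\|+c)$ pays a factor $e^{3\alpha\tau}$, rounded up to $2$. That factor then multiplies every $\tau$-dependent term:
\begin{itemize}
\item The first part of $E_t$ becomes $24\alpha\tau\,\mathbb{E}[(\|\delta_{t-\tau}\|+c)(\|\delta_t\|+c)]\le 48\alpha\tau(B^2+c^2)$. Reducing to a single index via $\|\delta_t\|\le 2\|\delta_{t-\tau}\|+c$, as you propose, gives $96$ instead of $48$.
\item The Lipschitz part becomes $24\alpha\tau\,\mathbb{E}[\|\delta_{t-\tau}\|^2+c\|\delta_{t-\tau}\|]\le\alpha\tau(36B^2+12c^2)$.
\end{itemize}
So you land at $(84B^2+60c^2)\alpha\tau$ at best, twice what Lemma~\ref{l: markov noise} asserts. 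No sharpening of the Gr\"{o}nwall factor reaches $42B^2+30c^2$ when $\tau\ge 2$, because any pathwise propagation replaces $\tau$ by $\sum_{j=0}^{\tau-1}(1+3\alpha)^j>\tau$. These exact constants define $G$ and the step-size condition, so ``careful bookkeeping'' does not fix this. Your route also needs $\alpha\tau\le 1/6$, which is not a hypothesis of this lemma. It holds where the lemma is invoked, but the paper's proof uses no step-size restriction at all.

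\textbf{The missing idea: your ``main obstacle'' is not one.} Write $\theta_t-\theta_{t-\tau}$ as the sum of the $\tau$ increments over the window. Bound each increment by $3\alpha(\|\delta_i\|+c)$ using Lemma~\ref{l: bound on f and g linear}, \emph{without} propagating it back to $\|\delta_{t-\tau}\|$. The resulting cross terms, $\mathbb{E}[(\|\delta_t\|+c)(\|\delta_i\|+c)]$ and $\mathbb{E}[\|\delta_{t-\tau}\|(\|\delta_i\|+c)]$, are handled directly by $ab\le\frac12(a^2+b^2)$ together with $\mathbb{E}\|\delta_i\|^2\le B^2$ at each index $i$. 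This is exactly why the lemma assumes the second-moment bound for every $k\le t$ rather than at a single time. It yields $I_1\le 24(B^2+c^2)\alpha\tau$ and $I_3,I_4\le(9B^2+3c^2)\alpha\tau$ each, which sum to $(42B^2+30c^2)\alpha\tau$.

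\textbf{A smaller point on the $f$-term.} The conditional law of $(s_t,\hat s_t)$ is within $2\alpha$ of $\mu\otimes\mu$, not $\alpha$. This is offset by $\sup\|f\|\le\|\delta\|+c$, which is half the bound for $g$, so the first summand is still $\alpha(3B^2+c^2)$. The halved $\tau$-constants come from two sources, not just the Lipschitz constant $1$: also $\|f-\bar f\|\le 2(\|\delta_t\|+c)$ in place of $4(\|\delta_t\|+c)$.
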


The proof of this lemma can be found in Section \ref{sec: pof markov noise induction}. 

Plugging those bounds into the recursion in the proof of Theorem \ref{t: linear markov} gives
\[
\mathbb{E}\|\delta_{t+1}\|^2 \le (1-2\alpha\eta)B^2 + \alpha^2 G(3\tau_{\rm mix}+1).
\]
Using the stepsize condition \(\alpha \le \frac{\eta B^2}{(3\tau_{\rm mix}+1)G}\), we have
\(\alpha^2 G(3\tau_{\rm mix}+1)\le \alpha\eta B^2\), hence
\[
\mathbb{E}\|\delta_{t+1}\|^2 \le (1-2\alpha\eta)B^2+\alpha\eta B^2 \le B^2.
\]
This completes the induction and proves the claim for all \(t\ge 0\).
\end{proof}

\subsection{Proof of Lemma \ref{l: markov noise}}
\label{sec: pof markov noise}

\begin{proof}[Proof of Lemma~\ref{l: markov noise}]
Fix any \(t\ge \tau_{\rm mix}\).
Lemma~\ref{l: all parts bound} gives \(\mathbb{E}\|\delta_k\|^2\le B^2\) for all \(k\le t\),
so the assumptions of Lemma~\ref{l: markov noise induction} are satisfied.
Lemma~\ref{l: markov noise induction} therefore implies the two inequalities stated in Lemma~\ref{l: markov noise}.
\end{proof}

\subsection{Proof of Lemma \ref{l: markov noise induction}}
\label{sec: pof markov noise induction}

\begin{proof}[Proof of Lemma \ref{l: markov noise induction}]
For simplicity, denote
\[
g_t(\theta) := g(s_t,s_t',\theta),\qquad f_t(\theta):=f(s_t,\hat s_t,\theta).
\]

\paragraph{Step 1: the \(g\)-term.}
Decompose
\begin{align*}
&\mathbb{E}\left[(\theta_t-\theta^*)^\top\left(g_t(\theta_t)-\bar g(\theta_t)\right)\right]\\
=&\underbrace{\mathbb{E}\left[(\theta_t-\theta_{t-\tau_{\rm mix}})^\top\left(g_t(\theta_t)-\bar g(\theta_t)\right)\right]}_{I_1}
+\underbrace{\mathbb{E}\left[(\theta_{t-\tau_{\rm mix}}-\theta^*)^\top\left(g_t(\theta_{t-\tau_{\rm mix}})-\bar g(\theta_{t-\tau_{\rm mix}})\right)\right]}_{I_2}\\
&\quad+\underbrace{\mathbb{E}\left[(\theta_{t-\tau_{\rm mix}}-\theta^*)^\top\left(g_t(\theta_t)-g_t(\theta_{t-\tau_{\rm mix}})\right)\right]}_{I_3}
+\underbrace{\mathbb{E}\left[(\theta_{t-\tau_{\rm mix}}-\theta^*)^\top\left(\bar g(\theta_{t-\tau_{\rm mix}})-\bar g(\theta_t)\right)\right]}_{I_4}.
\end{align*}

\textbf{Term \(I_1\).}
We have
\[
\theta_t-\theta_{t-\tau_{\rm mix}}=\sum_{i=t-\tau_{\rm mix}+1}^{t}\alpha\big(g_i(\theta_i)+f_i(\theta_i)\big),
\]
so by Lemma \ref{l: bound on f and g linear},
\[
\|\theta_t-\theta_{t-\tau_{\rm mix}}\|\le 3\sum_{i=t-\tau_{\rm mix}+1}^{t}\alpha\big(\|\delta_i\|+r_{\max}+\|\theta^*\|\big).
\]
Also \(\|g_t(\theta_t)-\bar g(\theta_t)\|\le \|g_t(\theta_t)\|+\|\bar g(\theta_t)\|
\le 4(\|\delta_t\|+r_{\max}+\|\theta^*\|)\).
Therefore, since \(2ab\le a^2+b^2\), we have
\begin{align*}
I_1
&\le \mathbb{E}\!\left[\|\theta_t-\theta_{t-\tau_{\rm mix}}\|\,\|g_t(\theta_t)-\bar g(\theta_t)\|\right]\\
&\le \mathbb{E}\!\left[12(\|\delta_t\|+c)\sum_{i=t-\tau_{\rm mix}+1}^{t}\alpha(\|\delta_i\|+c)\right]\\
&\le 12\sum_{i=t-\tau_{\rm mix}+1}^{t}\alpha\,
\mathbb{E}\!\left[\|\delta_t\|^2+\|\delta_i\|^2+2c^2\right] \\
&\le 24\,(B^2+c^2)\sum_{i=t-\tau_{\rm mix}+1}^{t}\alpha.
\end{align*}

\textbf{Term \(I_2\).}
Condition on \(\mathcal{F}_{t-\tau_{\rm mix}}\).
Given \(\mathcal{F}_{t-\tau_{\rm mix}}\), the law of \((s_t,s_t')\) differs from \(\mu(s)P(s'|s)\) by at most
\(\|P_{\tau_{\rm mix}}(\cdot|s_{t-\tau_{\rm mix}})-\mu\|_1\),
and by definition of \(\tau_{\rm mix}=\tau_{\rm mix}(\alpha)\) we have \(\|P_{\tau_{\rm mix}}(\cdot|s)-\mu\|_1\le \alpha\) for all \(s\).
Using Lemma \ref{l: bound on f and g linear}, \(\sup_{s,s'}\|g(s,s',\theta_{t-\tau_{\rm mix}})\|\le 2(\|\delta_{t-\tau_{\rm mix}}\|+c)\), hence
\[
\left\|\mathbb{E}\left[g_t(\theta_{t-\tau_{\rm mix}})-\bar g(\theta_{t-\tau_{\rm mix}})\mid \mathcal{F}_{t-\tau_{\rm mix}}\right]\right\|
\le 2\alpha(\|\delta_{t-\tau_{\rm mix}}\|+c).
\]
Therefore,
\begin{align*}
I_2
&= \mathbb{E}\!\left[\mathbb{E}\left[(\theta_{t-\tau_{\rm mix}}-\theta^*)^\top\left(g_t(\theta_{t-\tau_{\rm mix}})-\bar g(\theta_{t-\tau_{\rm mix}})\right)\mid \mathcal{F}_{t-\tau_{\rm mix}}\right]\right]\\
&\le \mathbb{E}\!\left[\|\delta_{t-\tau_{\rm mix}}\|\cdot 2\alpha(\|\delta_{t-\tau_{\rm mix}}\|+c)\right]\\
&\le \alpha\,\mathbb{E}\!\left[3\|\delta_{t-\tau_{\rm mix}}\|^2+c^2\right]
\le \alpha\left(3B^2+c^2\right).
\end{align*}

\textbf{Terms \(I_3\) and \(I_4\).}
By Lemma \ref{l: lipschitz f and g},
\[
\|g_t(\theta_t)-g_t(\theta_{t-\tau_{\rm mix}})\|\le 2\|\theta_t-\theta_{t-\tau_{\rm mix}}\|
\le 6\sum_{i=t-\tau_{\rm mix}+1}^{t}\alpha(\|\delta_i\|+c).
\]
Hence
\begin{align*}
I_3
&\le 6\sum_{i=t-\tau_{\rm mix}+1}^{t}\alpha\,
\mathbb{E}\!\left[\|\delta_{t-\tau_{\rm mix}}\|(\|\delta_i\|+c)\right]\\
&\le 6\sum_{i=t-\tau_{\rm mix}+1}^{t}\alpha\,
\mathbb{E}\!\left[\|\delta_{t-\tau_{\rm mix}}\|\|\delta_i\|+\|\delta_{t-\tau_{\rm mix}}\|c\right]\\
&\le \sum_{i=t-\tau_{\rm mix}+1}^{t}\alpha\,\mathbb{E}\!\left[6\cdot\frac{\|\delta_{t-\tau_{\rm mix}}\|^2+\|\delta_i\|^2}{2}
+6\cdot\frac{\|\delta_{t-\tau_{\rm mix}}\|^2+c^2}{2}\right]\\
&\le \left(9B^2+3c^2\right)\sum_{i=t-\tau_{\rm mix}+1}^{t}\alpha.
\end{align*}
The same bound applies to \(I_4\) (since \(\bar g(\cdot)\) is also \(2\)-Lipschitz in \(\theta\) by Lemma \ref{l: lipschitz f and g} and linearity of expectation).

Combining \(I_1\)–\(I_4\),
\[
\mathbb{E}\left[(\theta_t-\theta^*)^\top\left(g_t(\theta_t)-\bar g(\theta_t)\right)\right]
\le \alpha(3B^2+c^2) + (42B^2+30c^2)\sum_{i=t-\tau_{\rm mix}+1}^{t}\alpha.
\]
With constant stepsize, \(\sum_{i=t-\tau_{\rm mix}+1}^{t}\alpha = \tau_{\rm mix}\alpha\), proving the first inequality in Lemma \ref{l: markov noise}.

\paragraph{Step 2: the \(f\)-term.}
Same as before, we can do the same decomposition so that
\[
\mathbb{E}\left[(\theta_t-\theta^*)^\top\left(f_t(\theta_t)-\bar f(\theta_t)\right)\right] = I_1'+I_2'+I_3'+I_4'.
\]
The only change is in \(I_2'\): given \(\mathcal{F}_{t-\tau_{\rm mix}}\), the joint law of \((s_t,\hat s_t)\) equals
\(P_{\tau_{\rm mix}}(\cdot|s_{t-\tau_{\rm mix}})\otimes P_{\tau_{\rm mix}}(\cdot|\hat s_{t-\tau_{\rm mix}})\),
whose \(\ell_1\) distance to \(\mu\otimes\mu\) is at most
\(\|P_{\tau_{\rm mix}}(\cdot|s_{t-\tau_{\rm mix}})-\mu\|_1 + \|P_{\tau_{\rm mix}}(\cdot|\hat s_{t-\tau_{\rm mix}})-\mu\|_1 \le 2\alpha\).

Using the same analysis as function \(g\), we conclude
\[
\mathbb{E}\left[(\theta_t-\theta^*)^\top\left(f_t(\theta_t)-\bar f(\theta_t)\right)\right]
\le \alpha(3B^2+c^2) + (21B^2+15c^2)\sum_{i=t-\tau_{\rm mix}+1}^{t}\alpha,
\]
which proves the second inequality in Lemma \ref{l: markov noise}.
\end{proof}

%% file: sections/double_decaying.tex
\section{Analysis of Double Chain Algorithm with Decaying Stepsize}
\label{appendix: double chain with decaying}

In this section, we give a detailed proof of Theorem
\ref{t: linear markov with decaying stepsize}. We first state the full version of the theorem with all the constants:

\begin{theorem}[Restatement of Theorem \ref{t: linear markov with decaying stepsize}]
Let \(a>0\), \(c_0>0\), and \(\alpha_t = \frac{a}{(t+c_0)^{\xi}}\) with \(\xi\in(0,1]\). Denote
\begin{align*}
\beta_1 := & 2\mathbb{E}\Big[\big(\|\theta_0\|+r_{\max}+2\|\theta^*\|\big)^2\Big]
          +2\mathbb{E}\big[\|\theta_0-\theta^*\|^2\big], \\
\beta_2 := & 946\big(r_{\max}+3\|\theta^*\|\big)^2,    
\end{align*}
and
\begin{align*}
L_1 := & \max\left\{1,\ \frac{\log C + 1}{\log(1/\beta)}\right\} \\
\beta(T) := & 4aL_1\beta_2\big(\log(T+c_0)-\log a+1\big)
\end{align*}
Assume \(c_0\ge \max\{c_{0,1}(a,\xi),\,c_{0,2}(a,\xi)\}\), where \(c_{0,1},c_{0,2}\) are chosen so that
Lemma~\ref{l: sum alpha bound} holds.

\begin{enumerate}
\item If \(\xi\in(0,1)\) and \(c_0 \ge \big(\frac{2\xi}{a\eta}\big)^{\frac{1}{1-\xi}}\), then
\begin{small}
\begin{align*}
& \mathbb{E}\big[\|\delta_T\|^2\big] \\
\le\;
&\beta_1 \exp\!\left(-\frac{\eta a}{1-\xi}
\Big( (T+c_0)^{1-\xi}-(\tau_{\rm mix}+c_0)^{1-\xi}\Big)\right) \\
& \quad \;+\;\frac{\beta(T)}{\eta (T+c_0)^{\xi}}.
\end{align*}
\end{small}
\item If \(\xi=1\) and \(c_0\ge \eta a\), then
\[
\mathbb{E}\big[\|\delta_T\|^2\big]
\le
\beta_1\left(\frac{\tau_{\rm mix}+c_0}{T+c_0}\right)^{\eta a}
+\frac{\Gamma(T)}{(T+c_0)^q},
\]
where \(q=\min\{1,\eta a\}\), \(\tilde\beta(T):=2L_1\beta_2\big(\log(T+c_0)-\log a+1\big)\) and
\begin{small}
\[
\Gamma(T) :=
\begin{cases}
\displaystyle \frac{4a^2}{1-\eta a}\,\tilde\beta(T), & 0<a<1/\eta,\\[6pt]
\displaystyle 4a^2\log(T+c_0)\,\tilde\beta(T), & a=1/\eta,\\[6pt]
\displaystyle \frac{4ea^2}{\eta a-1}\,\tilde\beta(T), & a>1/\eta.
\end{cases}
\]
\end{small}
\end{enumerate}
\end{theorem}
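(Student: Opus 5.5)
We follow the same template as the proof of Theorem~\ref{t: linear markov}, now tracking the time-varying stepsize throughout. The starting point is the one-step expansion
\[
\|\delta_{t+1}\|^2=\|\delta_t\|^2+2\alpha_t\delta_t^\top u_t+\alpha_t^2\|u_t\|^2,
\qquad u_t=f(s_t,\hat s_t,\theta_t)+g(s_t,s_t',\theta_t).
\]
We split $\delta_t^\top u_t$ into two pieces. The mean-field part $\delta_t^\top\bar h(\theta_t)\le-\eta\|\delta_t\|^2$ comes from Lemma~\ref{l: gradient splitting} and Eq.~\eqref{eq: eta}. The Markov-noise parts are $\delta_t^\top(g_t-\bar g)$ and $\delta_t^\top(f_t-\bar f)$. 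The second-moment term is bounded by $18\alpha_t^2(\|\delta_t\|^2+(r_{\max}+\|\theta^*\|)^2)$ using Lemma~\ref{l: bound on f and g linear}.

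\textbf{Markov noise.} For the noise terms we redo the $I_1$--$I_4$ decomposition of Lemma~\ref{l: markov noise induction}, with $\tau=\tau_{\rm mix}(\alpha_T)$. In that decomposition every $\tau_{\rm mix}\alpha$ becomes $\sum_{i=t-\tau+1}^{t}\alpha_i$, and the bias term $I_2$ is at most $\alpha_T\le\alpha_t$ times a moment. We use the geometric mixing~\eqref{eq: mixing} to bound $\tau_{\rm mix}(\alpha_T)\le L_1(\log(1/\alpha_T)+1)\lesssim L_1(\log(T+c_0)-\log a+1)$, which is where $L_1$ and the logarithm in $\beta(T)$ come from.

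\textbf{Uniform moment bound.} Instead of the constant $B^2$ of Lemma~\ref{l: all parts bound}, we need a uniform moment bound for decaying stepsizes. For $t\le\tau$ we use the pathwise growth bound $\|\delta_t\|\le e^{3\sum\alpha_i}\|\delta_0\|+\dots$. The requirement that $c_0$ be large (Lemma~\ref{l: sum alpha bound}) ensures that $\sum_{i=t-\tau+1}^{t}\alpha_i\le\min\{1/6,\,2\tau\alpha_{t}\}$. Together these give $\mathbb{E}\|\delta_t\|^2\lesssim\beta_1$ initially. We then extend the bound for $t>\tau$ by induction, exactly as in Lemma~\ref{l: all parts bound}, so that all moments can be replaced by constants of the form $\beta_1$ or $(r_{\max}+3\|\theta^*\|)^2$.

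\textbf{Recursion.} Collecting terms gives, for $t\ge\tau$,
\[
\mathbb{E}\|\delta_{t+1}\|^2\le(1-2\eta\alpha_t+O(\alpha_t^2))\,\mathbb{E}\|\delta_t\|^2+\alpha_t^2\,\tilde\beta(T),
\]
where the $O(\alpha_t^2)$ coefficient is absorbed into $-\eta\alpha_t$ for $c_0$ large. This yields
\[
\mathbb{E}\|\delta_{t+1}\|^2\le(1-\eta\alpha_t)\,\mathbb{E}\|\delta_t\|^2+\alpha_t^2\,\tilde\beta(T).
\]

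\textbf{Unrolling.} Unrolling from $\tau$ to $T$ gives
\[
\mathbb{E}\|\delta_T\|^2\le\prod_{t=\tau}^{T-1}(1-\eta\alpha_t)\,\beta_1+\tilde\beta(T)\sum_{k=\tau}^{T-1}\alpha_k^2\prod_{j=k+1}^{T-1}(1-\eta\alpha_j).
\]
We bound the product by $\exp(-\eta\sum\alpha_t)$ and compare the sum with an integral: $\sum_{t=\tau}^{T-1}(t+c_0)^{-\xi}\ge\frac{(T+c_0)^{1-\xi}-(\tau+c_0)^{1-\xi}}{1-\xi}$ when $\xi<1$, and $\log\frac{T+c_0}{\tau+c_0}$ when $\xi=1$. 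This produces the first terms of both cases.

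\textbf{Main obstacle.} We expect the hard step to be the weighted sum $\sum_k\alpha_k^2\prod_{j>k}(1-\eta\alpha_j)$. When $\xi=1$ it equals $a^2(T+c_0)^{-\eta a}\sum_k(k+c_0)^{\eta a-2}$ up to a factor $e$. The three regimes $\eta a<1$, $=1$, $>1$ then give the three forms of $\Gamma(T)$, with $q=\min\{1,\eta a\}$. When $\xi<1$ we would argue by induction that this sum is at most $\frac{2\alpha_T}{\eta}$. The induction needs $\alpha_{k}/\alpha_{k+1}\le1+\frac{\eta}{2}\alpha_{k+1}$, which is exactly what the condition $c_0\ge(2\xi/(a\eta))^{1/(1-\xi)}$ guarantees. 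It gives the term $\beta(T)/(\eta(T+c_0)^\xi)$. The sample-complexity corollary with $a=\Theta(1/\eta)$ then follows by inverting these bounds.
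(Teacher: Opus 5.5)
\textbf{Gap: how the moments inside the Markov-noise terms are handled.} Your treatment of the drift, the second-moment term, the bias term $I_2$, the initial segment $t\le\tau_{\rm mix}$, and both weighted sums matches the paper. That includes the induction for $\xi<1$ under $c_0\ge(2\xi/(a\eta))^{1/(1-\xi)}$, and the factor controlled by $c_0\ge\eta a$ when $\xi=1$.

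You propose to run the $I_1$--$I_4$ decomposition of Lemma~\ref{l: markov noise induction} with forward displacement bounds, and then replace every $\mathbb{E}\|\delta_k\|^2$, $k\in[t-\tau_{\rm mix},t]$, by a uniform constant obtained by induction as in Lemma~\ref{l: all parts bound}. Done that way, the noise contributes an additive term of the form $\alpha_t\bigl(\sum_i\alpha_i\bigr)\cdot O\bigl(\beta_1+(r_{\max}+3\|\theta^*\|)^2\bigr)$; this is the analogue of the $B^2$ inside $G$ in the constant-step theorem. Your claimed recursion $\mathbb{E}\|\delta_{t+1}\|^2\le(1-\eta\alpha_t)\mathbb{E}\|\delta_t\|^2+\alpha_t^2\tilde\beta(T)$ therefore does not follow from your route. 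In the statement, $\tilde\beta(T)$, $\beta(T)$ and $\Gamma(T)$ are built only from $\beta_2=946(r_{\max}+3\|\theta^*\|)^2$ and have no dependence on $\theta_0$. Your argument puts the initial error into the steady-state term. Carried out, it proves a weaker variant whose variance term scales with $\beta_1$, not the theorem as stated.

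\textbf{Missing idea.} You need to keep the current second moment in the noise bound and absorb it into the drift. The paper does this with the backward displacement bound of Lemma~\ref{l: theta difference decaying}(2),
$\|\theta_t-\theta_{t-\tau_{\rm mix}}\|\le 12(\|\delta_t\|+r_{\max}+3\|\theta^*\|)\sum_{i=t-\tau_{\rm mix}}^{t-1}\alpha_i$,
which also gives, pathwise, $\|\delta_{t-\tau_{\rm mix}}\|\le 2\|\delta_t\|+r_{\max}+3\|\theta^*\|$. With these two bounds, Lemma~\ref{l: markov noise decaying} controls both noise terms by $\bigl(c\,\mathbb{E}\|\delta_t\|^2+c'(r_{\max}+3\|\theta^*\|)^2\bigr)\sum_i\alpha_i$, involving only time-$t$ quantities.

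The resulting term $2674\,\alpha_t\bigl(\sum_i\alpha_i\bigr)\mathbb{E}\|\delta_t\|^2$ is then absorbed into $-2\eta\alpha_t\mathbb{E}\|\delta_t\|^2$. This uses the $\eta$-dependent smallness $\sum_i\alpha_i\le\eta/2694$ from Lemma~\ref{l: sum alpha bound}(2), which your condition $\sum_i\alpha_i\le\min\{1/6,2\tau\alpha_t\}$ does not supply. What remains is $\beta_2\,\alpha_t\sum_i\alpha_i\le\alpha_t^2\tilde\beta(T)$, by Lemma~\ref{l: sum alpha bound}(1). This makes the uniform moment bound and its induction unnecessary, and it is what yields a $\theta_0$-independent steady-state constant.
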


Throughout, we use the same notations as in Appendix \ref{appendix: constant proof}.

\subsection{Proof of Theorem \ref{t: linear markov with decaying stepsize}}

\begin{proof}[Proof of Theorem \ref{t: linear markov with decaying stepsize}]
To control Markov noise, we need an upper bound on \(\sum_{i=t-\tau_{\rm mix}}^{t-1}\alpha_i\).
This is captured by the following lemma.

\begin{lemma}
\label{l: sum alpha bound}
There exist constants \(c_{0,1}=c_{0,1}(a,\xi)>0\) and \(c_{0,2}=c_{0,2}(a,\xi)>0\)
such that if \(c_0\ge \max\{c_{0,1},c_{0,2}\}\), then for all \(t\in[\tau_{\rm mix},T]\),
\begin{enumerate}
\item \(\displaystyle
\sum_{i=t-\tau_{\rm mix}}^{t-1} \alpha_i
\le 2 L_1 \alpha_t\big(\log(1/\alpha_T)+1\big);\)
\item \(\displaystyle
\sum_{i=t-\tau_{\rm mix}}^{t-1} \alpha_i
\le \min\{1/12,\ \eta / 2694\}.
\)
\end{enumerate}
\end{lemma}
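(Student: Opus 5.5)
The plan is to work directly from the stepsize formula $\alpha_t = a/(t+c_0)^{\xi}$ together with an explicit bound on $\tau_{\rm mix}=\tau_{\rm mix}(\alpha_T)$. From the mixing bound \eqref{eq: mixing}, $\|p_\tau(\cdot|s)-\mu\|_1\le C\beta^\tau\le \alpha_T$ as soon as $\tau\ge (\log C+\log(1/\alpha_T))/\log(1/\beta)$. Hence
\[
\tau_{\rm mix}\le \frac{\log C+\log(1/\alpha_T)}{\log(1/\beta)}+1\le L_1\big(\log(1/\alpha_T)+1\big),
\]
where the last step uses $L_1\ge \max\{1,(\log C+1)/\log(1/\beta)\}$. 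Here we assume $\alpha_T\le 1$, which can be ensured by taking $c_0$ large. This gives a logarithmic bound on the window length, and it is the only place where $L_1$ enters.

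\textbf{Part 1.} Since $\alpha_i$ is decreasing, $\sum_{i=t-\tau_{\rm mix}}^{t-1}\alpha_i\le \tau_{\rm mix}\,\alpha_{t-\tau_{\rm mix}}$. It then suffices to show that $\alpha_{t-\tau_{\rm mix}}\le 2\alpha_t$, i.e.
\[
\Big(\frac{t+c_0}{t-\tau_{\rm mix}+c_0}\Big)^{\xi}\le 2 .
\]
Because $\xi\le1$ and $t\ge\tau_{\rm mix}$, this ratio is at most $(\tau_{\rm mix}+c_0)/c_0$ raised to the power $\xi$, which is at most $2$ once $c_0\ge\tau_{\rm mix}$.

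The hard part is that $\tau_{\rm mix}$ depends on $T$ through $\log(1/\alpha_T)=\xi\log(T+c_0)-\log a$. I would therefore define $c_{0,1}(a,\xi)$ so that $c_0\ge L_1(\xi\log(T+c_0)-\log a+1)$. As stated, this condition depends on $T$. I would handle it either by noting that $c_0\gtrsim \log c_0$ is easy but $\log T$ is unbounded, or, more honestly, by permitting $c_0$ to be chosen with knowledge of the horizon $T$ (the theorem is stated for fixed $T$, so $c_0\ge c\,L_1\log(T/a)$ is acceptable). The alternative is to prove the bound with a constant that grows mildly. I would state the condition as "$c_0$ large enough relative to $L_1\log(1/\alpha_T)$" and absorb it into the definition of $c_{0,1}$.

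\textbf{Part 2.} Using Part 1, the window sum is at most $2L_1\alpha_t(\log(1/\alpha_T)+1)\le 2L_1 a c_0^{-\xi}\big(\xi\log(T+c_0)-\log a+1\big)$. Since $x^{-\xi}\log x\to0$, taking $c_0\ge c_{0,2}$ makes this at most $\min\{1/12,\eta/2694\}$. This has the same caveat as Part 1: the $\log T$ factor must be dominated, which again requires $c_0$ to grow polylogarithmically in $T$, or equivalently to be chosen after $T$ is fixed.

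I expect this $T$-dependence hidden in $\tau_{\rm mix}(\alpha_T)$ to be the main obstacle. My plan is to make the choice of $c_0$ explicitly allowed to depend on $T$ only through $\log T$, which keeps the final rates unchanged up to the $\tilde O$ factors.
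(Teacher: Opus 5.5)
Your proposal is correct and follows the paper's proof essentially step for step. Both use a logarithmic bound on $\tau_{\rm mix}(\alpha_T)$ from the geometric mixing bound, the monotonicity estimate $\sum_{i=t-\tau_{\rm mix}}^{t-1}\alpha_i\le \tau_{\rm mix}\alpha_{t-\tau_{\rm mix}}$ with the ratio $\alpha_{t-\tau_{\rm mix}}/\alpha_t\le 2$ for large $c_0$ (your explicit condition $c_0\ge \tau_{\rm mix}$ is a nice sharpening), and then Part 2 from Part 1 by letting $c_0\to\infty$ for fixed $T$. That last step relies on a $T$-dependence of $c_{0,1},c_{0,2}$ which the paper also uses (``for fixed $T$'') but hides in its notation, and you are right to make it explicit. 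One small fix: the step $\frac{\log C+\log(1/\alpha_T)}{\log(1/\beta)}+1\le L_1\big(\log(1/\alpha_T)+1\big)$ does not follow from the definition of $L_1$ when $\alpha_T$ is near $1$ (e.g.\ $\log C=3$, $\log(1/\beta)=2.5$, $\alpha_T=1$), but it does hold once $\alpha_T\le 1/e$, which taking $c_0$ large again guarantees.
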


The proof of Lemma \ref{l: sum alpha bound} can be found in
Section \ref{sec: pof sum alpha bound}.

Denote
\begin{align*}
\bar{g}(\theta_t)
:=\;& \sum_{s_t,s_t'}\mu(s_t)P(s_t'|s_t)g(s_t,s_t',\theta_t)
= \Phi^\top D(R+P\Phi\theta_t-\Phi\theta_t),\\
\bar{f}(\theta_t)
:=\;& \sum_{s_t,\hat{s}_t}\mu(s_t)\mu(\hat{s}_t)f(s_t,\hat{s}_t,\theta_t)
= -\Phi^\top \mu\mu^\top (R+\Phi\theta_t).
\end{align*}

We will also need the following two lemmas.

\begin{lemma}
\label{l: theta difference decaying}
Suppose \(t_1\le t_2\) and \(\sum_{i=t_1}^{t_2-1}\alpha_i\le 1/12\). Then
\begin{enumerate}
\item \(\displaystyle
\|\theta_{t_2}-\theta_{t_1}\|
\le 6\big(\|\theta_{t_1}\|+r_{\rm max}+2\|\theta^*\|\big)\sum_{i=t_1}^{t_2-1}\alpha_i;
\)
\item \(\displaystyle
\|\theta_{t_2}-\theta_{t_1}\|
\le 12\big(\|\theta_{t_2}\|+r_{\rm max}+2\|\theta^*\|\big)\sum_{i=t_1}^{t_2-1}\alpha_i.
\)
\end{enumerate}
\end{lemma}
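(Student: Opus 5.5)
The plan is a discrete Grönwall-type argument built on the one-step increment bound of Lemma~\ref{l: bound on f and g linear}. For every $i$ the update gives
\[
\|\theta_{i+1}-\theta_i\| \le \alpha_i\big(\|f_i(\theta_i)\|+\|g_i(\theta_i)\|\big) \le 3\alpha_i\big(\|\delta_i\|+r_{\max}+\|\theta^*\|\big) \le 3\alpha_i\big(\|\theta_i\|+r_{\max}+2\|\theta^*\|\big),
\]
where the last step uses $\|\delta_i\|\le\|\theta_i\|+\|\theta^*\|$. Set $c:=r_{\max}+2\|\theta^*\|$ and $x_i:=\|\theta_i\|+c$. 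Then the increment bound gives $x_{i+1}\le(1+3\alpha_i)x_i$, and the same inequality with the roles reversed gives $x_i\le x_{i+1}+3\alpha_i x_i$.

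\textbf{Part 1.} Iterating from $t_1$, for $t_1\le i\le t_2$ we get
\[
x_i\le \prod_{k=t_1}^{i-1}(1+3\alpha_k)\,x_{t_1}\le \exp\Big(3\sum_{k=t_1}^{t_2-1}\alpha_k\Big)x_{t_1}\le e^{1/4}x_{t_1}\le 2x_{t_1},
\]
since the sum is at most $1/12$. Summing the increments then gives
\[
\|\theta_{t_2}-\theta_{t_1}\|\le \sum_{i=t_1}^{t_2-1}3\alpha_i x_i\le 6x_{t_1}\sum_{i=t_1}^{t_2-1}\alpha_i,
\]
which is the first claim.

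\textbf{Part 2.} Apply the first claim with the triangle inequality $\|\theta_{t_1}\|\le\|\theta_{t_2}\|+\|\theta_{t_2}-\theta_{t_1}\|$, so $x_{t_1}\le x_{t_2}+\|\theta_{t_2}-\theta_{t_1}\|$. Writing $S:=\sum_{i=t_1}^{t_2-1}\alpha_i$ and $\Delta:=\|\theta_{t_2}-\theta_{t_1}\|$, this yields
\[
\Delta\le 6S\,(x_{t_2}+\Delta).
\]
Since $6S\le 1/2$, we can absorb $\Delta$ to obtain $\Delta\le 12S\,x_{t_2}$, which is the second claim.

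The only delicate point is verifying the constants. We need $e^{3S}\le 2$ in Part 1, and we need $6S\le1/2$ (so that $1/(1-6S)\le 2$) in Part 2. Both follow from $S\le1/12$, so no real obstacle is expected.
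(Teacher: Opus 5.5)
Your proof is correct and follows the paper's argument essentially step by step. You use the one-step bound $x_{i+1}\le(1+3\alpha_i)x_i$ from Lemma~\ref{l: bound on f and g linear}, bound the product by an exponential to get $x_i\le 2x_{t_1}$ (the paper goes through $e^x\le 1+2x$ where you use $e^{1/4}\le 2$, which makes no difference), sum the increments for Part 1, and absorb $\|\theta_{t_2}-\theta_{t_1}\|$ using $6S\le 1/2$ for Part 2. The reversed inequality $x_i\le x_{i+1}+3\alpha_i x_i$ that you mention is never used and can simply be dropped.
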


\begin{lemma}
\label{l: markov noise decaying}
Suppose \(t\in[\tau_{\rm mix},T]\) and \(\sum_{i=t-\tau_{\rm mix}}^{t-1}\alpha_i\le 1/12\).
Then
\begin{align*}
\mathbb{E} \Big[ (\theta_t-\theta^*)^\top\big(g(s_t,s_t',\theta_t)-\bar{g}(\theta_t)\big) \Big]
\le\;& \Big(880 \mathbb{E}\|\delta_t\|^2 + 304(r_{\rm max}+3\|\theta^*\|)^2\Big)
\sum_{i=t-\tau_{\rm mix}}^{t-1}\alpha_i,\\
\mathbb{E} \Big[ (\theta_t-\theta^*)^\top\big(f(s_t,\hat{s}_t,\theta_t)-\bar{f}(\theta_t)\big) \Big]
\le\;& \Big(448 \mathbb{E}\|\delta_t\|^2 + 160(r_{\rm max}+3\|\theta^*\|)^2\Big)
\sum_{i=t-\tau_{\rm mix}}^{t-1}\alpha_i.
\end{align*}
\end{lemma}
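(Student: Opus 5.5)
The statement to prove is Lemma~\ref{l: markov noise decaying}. I would follow the four-term decomposition from the proof of Lemma~\ref{l: markov noise induction}, with $\tau_{\rm mix}$ as the lag, and make two changes. First, every sum of stepsizes becomes $\sum_{i=t-\tau_{\rm mix}}^{t-1}\alpha_i$. Second, instead of invoking an a priori bound $\mathbb{E}\|\delta_k\|^2\le B^2$, I would express every intermediate iterate in terms of $\theta_t$ using Lemma~\ref{l: theta difference decaying}(2).

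Write $c:=r_{\max}+3\|\theta^*\|$ and $S:=\sum_{i=t-\tau_{\rm mix}}^{t-1}\alpha_i\le 1/12$. Lemma~\ref{l: theta difference decaying}(2) gives
\[
\|\theta_t-\theta_{t-\tau_{\rm mix}}\|\le 12(\|\theta_t\|+r_{\max}+2\|\theta^*\|)S\le 12(\|\delta_t\|+c)S.
\]
The same bound holds for $\|\theta_t-\theta_k\|$ for every $k$ in the window. Since $12S\le 1$, this also yields $\|\delta_{t-\tau_{\rm mix}}\|\le 2\|\delta_t\|+c$. Thus every quantity in the window is controlled pathwise by $\|\delta_t\|+c$, and taking expectations leaves only $\mathbb{E}\|\delta_t\|^2$ and $c^2$.

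For the $g$-term I would split
\[
\mathbb{E}\bigl[\delta_t^\top(g_t(\theta_t)-\bar g(\theta_t))\bigr]=I_1+I_2+I_3+I_4
\]
exactly as before.
\begin{itemize}
\item $I_1$: bound it by $\|\theta_t-\theta_{t-\tau_{\rm mix}}\|\cdot 4(\|\delta_t\|+c)$, which is at most $48(\|\delta_t\|+c)^2S$.
\item $I_3$ and $I_4$: use the $2$-Lipschitz property (Lemma~\ref{l: lipschitz f and g}) together with $\|\delta_{t-\tau_{\rm mix}}\|\le 2\|\delta_t\|+c$. Each is then $O((\|\delta_t\|+c)^2)S$.
\item $I_2$: condition on $\mathcal{F}_{t-\tau_{\rm mix}}$. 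Here $\tau_{\rm mix}=\tau_{\rm mix}(\alpha_T)$, so the conditional bias is at most $2\alpha_T(\|\delta_{t-\tau_{\rm mix}}\|+c)$, and $I_2\le \alpha_T\cdot O\bigl((\|\delta_{t-\tau_{\rm mix}}\|+c)^2\bigr)$. Since $\alpha_T\le\alpha_{t-1}\le S$ (the stepsizes are decreasing and $\tau_{\rm mix}\ge1$), this is absorbed into $S$. Converting $\delta_{t-\tau_{\rm mix}}$ back to $\delta_t$ as above gives an $O\bigl(\mathbb{E}\|\delta_t\|^2+c^2\bigr)S$ bound.
\end{itemize}
Summing, and using $(a+b)^2\le 2a^2+2b^2$, gives a bound of the form $(C_1\mathbb{E}\|\delta_t\|^2+C_2c^2)S$. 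The stated constants $880$ and $304$ should come out of this bookkeeping.

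The $f$-term is handled identically. The only differences are that the Lipschitz constant is $1$ and the sup-norm bound on $f$ is $\|\delta\|+c$, which roughly halves the constants. In $I_2'$ the $\ell_1$ distance of the product law $(s_t,\hat s_t)$ to $\mu\otimes\mu$ is at most $2\alpha_T$, as in the constant-stepsize proof.

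The main obstacle is the $I_2$ term together with the back-conversion of $\delta_{t-\tau_{\rm mix}}$. The stationary reference must be computed at the $\mathcal{F}_{t-\tau_{\rm mix}}$-measurable point $\theta_{t-\tau_{\rm mix}}$. Yet the final bound has to be in terms of $\mathbb{E}\|\delta_t\|^2$ alone, with no $B^2$. This works only because of the reverse-direction estimate, Lemma~\ref{l: theta difference decaying}(2), under $S\le 1/12$. The remaining work is tracking the constants carefully.
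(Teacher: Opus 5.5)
Your proposal is correct and is essentially the paper's own proof. It uses the same four-term split with lag $\tau_{\rm mix}$ and the same pathwise use of Lemma~\ref{l: theta difference decaying}(2) to get $\|\theta_t-\theta_{t-\tau_{\rm mix}}\|\le 12(\|\delta_t\|+c)S$ and $\|\delta_{t-\tau_{\rm mix}}\|\le 2\|\delta_t\|+c$. It also absorbs the mixing accuracy $\alpha_T$ into your $S$ in the same way: the paper writes $\alpha_T\le\alpha_t\le S$, while you go through $\alpha_{t-1}$. Carrying out the bookkeeping as you describe, for example with $(2x+c)(x+c)\le 4(x^2+c^2)$, gives $304(\mathbb{E}\|\delta_t\|^2+c^2)S$ for the $g$-term and $160(\mathbb{E}\|\delta_t\|^2+c^2)S$ for the $f$-term, both within the stated constants.
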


The proof of Lemma \ref{l: theta difference decaying} can be found in
Section \ref{sec: pof theta difference decaying}, and the proof of
Lemma \ref{l: markov noise decaying} can be found in
Section \ref{sec: pof markov noise decaying}.

\medskip
Using Lemma \ref{l: markov noise decaying} and summing the two bounds, we have
\begin{align*}
& 2\alpha_t \mathbb{E} \left[
\delta_t^\top\left(\bar{g}(\theta_t)+\bar{f}(\theta_t)
-\big(g(s_t,s_t',\theta_t)+f(s_t,\hat{s}_t,\theta_t)\big)\right)\right]\\
\le\;&
2\alpha_t\Big(1328 \mathbb{E}\|\delta_t\|^2 + 464(r_{\rm max}+3\|\theta^*\|)^2\Big)
\sum_{i=t-\tau_{\rm mix}}^{t-1}\alpha_i.
\end{align*}

Combining the basic expansion of \(\|\delta_{t+1}\|^2\) in Appendix \ref{appendix: constant proof} with the bounds derived above, for any \(t\in[\tau_{\rm mix},T]\),
\begin{align*}
\mathbb{E}\|\delta_{t+1}\|^2
\le\;& (1-2\eta\alpha_t)\mathbb{E}\|\delta_t\|^2
+18\alpha_t^2\Big(\mathbb{E}\|\delta_t\|^2+(r_{\rm max}+\|\theta^*\|)^2\Big) \\
&\quad +2\alpha_t\Big(1328 \mathbb{E}\|\delta_t\|^2 + 464(r_{\rm max}+3\|\theta^*\|)^2\Big)
\sum_{i=t-\tau_{\rm mix}}^{t-1}\alpha_i.
\end{align*}
Since \(\alpha_t\) is nonincreasing and \(\tau_{\rm mix}\ge 1\),
\[
\sum_{i=t-\tau_{\rm mix}}^{t-1}\alpha_i \ge \alpha_{t-1}\ge \alpha_t
\quad\Rightarrow\quad
\alpha_t^2\le \alpha_t\sum_{i=t-\tau_{\rm mix}}^{t-1}\alpha_i.
\]
Moreover, \((r_{\rm max}+\|\theta^*\|)^2\le (r_{\rm max}+3\|\theta^*\|)^2\).
Therefore,
\begin{align*}
\mathbb{E}\|\delta_{t+1}\|^2
\le\;& (1-2\eta\alpha_t)\mathbb{E}\|\delta_t\|^2
+\alpha_t\Big(2674\,\mathbb{E}\|\delta_t\|^2 + 946(r_{\rm max}+3\|\theta^*\|)^2\Big)
\sum_{i=t-\tau_{\rm mix}}^{t-1}\alpha_i.
\end{align*}

By Lemma \ref{l: sum alpha bound}(2), we have
\(\sum_{i=t-\tau_{\rm mix}}^{t-1}\alpha_i\le \eta/2694\), which implies
\(2674\sum_{i=t-\tau_{\rm mix}}^{t-1}\alpha_i \le \eta\). Hence,
\[
\mathbb{E}\|\delta_{t+1}\|^2
\le (1-\eta\alpha_t)\mathbb{E}\|\delta_t\|^2
+\beta_2\,\alpha_t\sum_{i=t-\tau_{\rm mix}}^{t-1}\alpha_i.
\]
Define \(\hat{\alpha}_t=\alpha_t\sum_{i=t-\tau_{\rm mix}}^{t-1}\alpha_i\). Then
\[
\mathbb{E}\|\delta_{t+1}\|^2
\le (1-\eta\alpha_t)\mathbb{E}\|\delta_t\|^2 + \beta_2\hat{\alpha}_t,
\qquad \forall\, t\in[\tau_{\rm mix},T].
\]
Recursively applying the above inequality from \(\tau_{\rm mix}\) to \(T\), we obtain
\begin{align*}
\mathbb{E}\|\delta_T\|^2
\le\;& \mathbb{E}\|\delta_{\tau_{\rm mix}}\|^2 \prod_{j=\tau_{\rm mix}}^{T-1}(1-\eta\alpha_j)
+\beta_2 \sum_{k=\tau_{\rm mix}}^{T-1}\hat{\alpha}_k\prod_{j=k+1}^{T-1}(1-\eta\alpha_j).
\end{align*}

\paragraph{Step 1: bound \(\mathbb{E}\|\delta_{\tau_{\rm mix}}\|^2\).}
By Lemma \ref{l: sum alpha bound}(2) applied at \(t=\tau_{\rm mix}\),
\(\sum_{i=0}^{\tau_{\rm mix}-1}\alpha_i\le 1/12\). Then Lemma \ref{l: theta difference decaying}
(with \(t_1=0,t_2=\tau_{\rm mix}\)) yields
\[
\|\theta_{\tau_{\rm mix}}-\theta_0\|
\le 12(\|\theta_0\|+r_{\rm max}+2\|\theta^*\|)\sum_{i=0}^{\tau_{\rm mix}-1}\alpha_i
\le \|\theta_0\|+r_{\rm max}+2\|\theta^*\|.
\]
Therefore,
\begin{align*}
\mathbb{E}\|\delta_{\tau_{\rm mix}}\|^2
&\le 2\mathbb{E}\|\theta_{\tau_{\rm mix}}-\theta_0\|^2 + 2\mathbb{E}\|\theta_0-\theta^*\|^2\\
&\le 2 \mathbb{E}\Big[ \big( \|\theta_0\| + r_{\rm max} + 2 \|\theta^*\| \big)^2 \Big]
  + 2 \mathbb{E}\big[\|\theta_0 - \theta^*\|^2\big]
= \beta_1.
\end{align*}

\paragraph{Step 2: bound the product term.}
Let
\[
I_1:=\prod_{j=\tau_{\rm mix}}^{T-1}(1-\eta\alpha_j).
\]
Using \(1-x\le e^{-x}\), we have
\begin{align*}
I_1
\le \exp\left(-\eta\sum_{j=\tau_{\rm mix}}^{T-1}\alpha_j\right)
= \exp\left(-\eta a \sum_{j=\tau_{\rm mix}}^{T-1}\frac{1}{(j+c_0)^{\xi}}\right)
\le \exp\left(-\eta a \int_{\tau_{\rm mix}}^{T}\frac{dx}{(x+c_0)^{\xi}}\right).
\end{align*}
Thus,
\[
I_1 \le
\begin{cases}
\left(\dfrac{\tau_{\rm mix}+c_0}{T+c_0}\right)^{\eta a}, & \xi=1,\\[8pt]
\exp\!\left(
-\dfrac{\eta a}{1-\xi}\Big((T+c_0)^{1-\xi}-(\tau_{\rm mix}+c_0)^{1-\xi}\Big)
\right), & \xi\in(0,1).
\end{cases}
\]

\paragraph{Step 3: bound the sum term.}
Let
\[
I_2:=\sum_{k=\tau_{\rm mix}}^{T-1}\hat{\alpha}_k\prod_{j=k+1}^{T-1}(1-\eta\alpha_j).
\]
By Lemma \ref{l: sum alpha bound}(1),
\[
\hat{\alpha}_k
=\alpha_k\sum_{i=k-\tau_{\rm mix}}^{k-1}\alpha_i
\le 2L_1\alpha_k^2\big(\log(1/\alpha_T)+1\big),
\qquad \forall\,k\in[\tau_{\rm mix},T].
\]
Hence,
\begin{align*}
\beta_2 I_2
&\le 2L_1\beta_2\big(\log(1/\alpha_T)+1\big)
\sum_{k=\tau_{\rm mix}}^{T-1}\alpha_k^2\prod_{j=k+1}^{T-1}(1-\eta\alpha_j).
\end{align*}
Since \(\log(1/\alpha_T)=\xi\log(T+c_0)-\log a \le \log(T+c_0)-\log a\), we can define
\[
\tilde{\beta}(T):=2L_1\beta_2\big(\log(T+c_0)-\log a+1\big)
\]
so that
\[
\beta_2 I_2
\le \tilde{\beta}(T)\cdot
\underbrace{\sum_{k=\tau_{\rm mix}}^{T-1}\alpha_k^2\prod_{j=k+1}^{T-1}(1-\eta\alpha_j)}_{=:I_3}.
\]
We next bound \(I_3\) in two cases.

\medskip
\noindent\textbf{Case 1: \(\xi=1\).}
In this case \(\alpha_k=\frac{a}{k+c_0}\), and we assume \(c_0\ge a\eta\).
Using \(1-x\le e^{-x}\),
\[
\prod_{j=k+1}^{T-1}\left(1-\frac{\eta a}{j+c_0}\right)
\le \exp\left(-\eta a\sum_{j=k+1}^{T-1}\frac{1}{j+c_0}\right)
\le \left(\frac{k+1+c_0}{T+c_0}\right)^{\eta a}.
\]
Therefore,
\begin{align*}
I_3
&=\sum_{k=\tau_{\rm mix}}^{T-1}\frac{a^2}{(k+c_0)^2}
\prod_{j=k+1}^{T-1}\left(1-\frac{\eta a}{j+c_0}\right)\\
&\le \sum_{k=\tau_{\rm mix}}^{T-1}\frac{a^2}{(k+c_0)^2}
\left(\frac{k+1+c_0}{T+c_0}\right)^{\eta a}
= \frac{a^2}{(T+c_0)^{\eta a}}
\sum_{k=\tau_{\rm mix}}^{T-1}\left(\frac{k+1+c_0}{k+c_0}\right)^2 (k+1+c_0)^{\eta a-2}\\
&\le \frac{4a^2}{(T+c_0)^{\eta a}}
\sum_{k=\tau_{\rm mix}}^{T-1}(k+1+c_0)^{\eta a-2}.
\end{align*}
Standard summation bounds give
\[
I_3 \le
\begin{cases}
\displaystyle \frac{4a^2}{1-\eta a}\cdot \frac{1}{(T+c_0)^{\eta a}},
& \eta a<1,\\[10pt]
\displaystyle 4a^2\frac{\log(T+c_0)}{T+c_0},
& \eta a=1,\\[10pt]
\displaystyle \frac{4ea^2}{\eta a-1}\cdot \frac{1}{T+c_0},
& \eta a>1.
\end{cases}
\]
Define \(q=\min\{1,\eta a\}\) and
\[
\Gamma(T):=
\begin{cases}
\displaystyle \frac{4a^2}{1-\eta a}\,\tilde{\beta}(T), & \eta a<1,\\[10pt]
\displaystyle 4a^2\log(T+c_0)\,\tilde{\beta}(T), & \eta a=1,\\[10pt]
\displaystyle \frac{4ea^2}{\eta a-1}\,\tilde{\beta}(T), & \eta a>1.
\end{cases}
\]
Then \(\tilde{\beta}(T)I_3 \le \Gamma(T)/(T+c_0)^q\). Combining with the bound on \(I_1\),
we obtain the \(\xi=1\) claim.

\medskip
\noindent\textbf{Case 2: \(\xi\in(0,1)\).}
Consider the sequence \(\{u_t\}_{t\ge \tau_{\rm mix}}\) defined by
\[
u_{\tau_{\rm mix}}=0,\qquad
u_{t+1}=\left(1-\frac{\eta a}{(t+c_0)^\xi}\right)u_t + \frac{a^2}{(t+c_0)^{2\xi}}.
\]
One can check that \(I_3=u_T\). To bound \(u_T\), we use the following lemma.

\begin{lemma}
\label{l: dynamic system with decaying stepsize}
Given a sequence \(\{x_t\}_{t\ge \tau}\) and positive constants \(c_0,c_1,c_2,\xi\),
consider the recursion
\[
x_{t+1}
=\left(1-\frac{c_1 c_2}{(t+c_0)^\xi}\right)x_t + \frac{c_2^2}{(t+c_0)^{2\xi}},
\]
with initial condition
\(
x_{\tau}\le \frac{2c_2}{c_1}\frac{1}{(\tau+c_0)^\xi}.
\)
Then \(x_t\le \frac{2c_2}{c_1}\frac{1}{(t+c_0)^\xi}\) for all \(t\ge \tau\) if either:
\begin{enumerate}
\item \(\xi=1\) and \(c_1c_2\ge 2\);
\item \(\xi\in(0,1)\) and \(\tau \ge (2\xi/(c_1c_2))^{1/(1-\xi)}-c_0\).
\end{enumerate}
\end{lemma}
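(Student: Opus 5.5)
The proof is by induction on $t\ge\tau$. The base case $t=\tau$ is exactly the assumed initial condition. For the inductive step, assume $x_t\le \frac{2c_2}{c_1}(t+c_0)^{-\xi}$. I would use that the multiplier $1-\frac{c_1c_2}{(t+c_0)^\xi}$ is nonnegative, which holds as soon as $(t+c_0)^\xi\ge c_1c_2$. In the application ($c_1=\eta$, $c_2=a$) this is guaranteed by the standing assumption that $c_0$ is large, e.g.\ $c_0\ge \eta a$ when $\xi=1$. I would state this requirement explicitly, since the recursion is only monotone in $x_t$ under it. Given it, substitute the inductive bound into the recursion:
\begin{align*}
x_{t+1}\le \Big(1-\frac{c_1c_2}{(t+c_0)^\xi}\Big)\frac{2c_2}{c_1(t+c_0)^\xi}+\frac{c_2^2}{(t+c_0)^{2\xi}}
=\frac{2c_2}{c_1(t+c_0)^\xi}-\frac{c_2^2}{(t+c_0)^{2\xi}} .
\end{align*}

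It then suffices to show that this right-hand side is at most $\frac{2c_2}{c_1}(t+1+c_0)^{-\xi}$. Rearranged, the target inequality is
\begin{align*}
\frac{2}{c_1}\Big[(t+c_0)^{-\xi}-(t+1+c_0)^{-\xi}\Big]\le \frac{c_2}{(t+c_0)^{2\xi}} .
\end{align*}
The function $x\mapsto x^{-\xi}$ is convex and decreasing. By the mean value theorem, the bracket is therefore at most $\xi(t+c_0)^{-\xi-1}$. So it is enough to have $\frac{2\xi}{c_1}(t+c_0)^{-\xi-1}\le c_2(t+c_0)^{-2\xi}$, i.e.
\begin{align*}
(t+c_0)^{\xi-1}\le \frac{c_1c_2}{2\xi}.
\end{align*}

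Now split into the two cases of the statement. If $\xi=1$, the left side equals $1$, so the condition reduces to $c_1c_2\ge 2$, which is case 1. If $\xi\in(0,1)$, the left side is decreasing in $t$. The condition is then equivalent to $t+c_0\ge (2\xi/(c_1c_2))^{1/(1-\xi)}$. This holds for every $t\ge\tau$ by the assumption on $\tau$ in case 2. In both cases the inductive step closes, which completes the proof.

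The only delicate point is the sign of the multiplier $1-c_1c_2/(t+c_0)^\xi$. Without nonnegativity, the upper bound on $x_t$ cannot be propagated through the recursion. I would handle this by recording the mild largeness condition on $c_0$ explicitly, as above. Alternatively, when $x_t\ge 0$ (as for $u_t$, which starts at $0$) and the multiplier is negative, one gets the direct bound $x_{t+1}\le c_2^2(t+c_0)^{-2\xi}$. This is at most $\frac{2c_2}{c_1}(t+1+c_0)^{-\xi}$ under the same kind of condition, but requires a separate check.
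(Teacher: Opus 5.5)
This is the paper's proof: induction, substitute the bound on $x_t$, and use $(t+c_0)^{-\xi}-(t+1+c_0)^{-\xi}\le \xi (t+c_0)^{-\xi-1}$ to reach the same conditions $c_1c_2\ge 2$ and $(t+c_0)^{1-\xi}\ge 2\xi/(c_1c_2)$ (the paper gets this estimate from $(1+\frac{1}{t+c_0})^{-\xi}\ge e^{-\xi/(t+c_0)}\ge 1-\frac{\xi}{t+c_0}$ rather than the mean value theorem). Your flag on the sign of $1-c_1c_2/(t+c_0)^\xi$ is warranted: the paper's first inequality silently assumes it, and without it the lemma as stated is false (e.g.\ $\xi=1$, $c_1=1$, $c_2=100$, $c_0=1$, $\tau=0$, $x_0=0$ gives $x_1=10^4>100$), so your added hypothesis $(\tau+c_0)^\xi\ge c_1c_2$ is genuinely needed; it does hold where the paper applies the lemma, since there $c_0$ is taken large enough that $\eta\alpha_t<1$.
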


The proof of Lemma \ref{l: dynamic system with decaying stepsize} can be found in
Section \ref{sec: pof dynamic system with decaying stepsize}.

Applying Lemma \ref{l: dynamic system with decaying stepsize} with
\(c_1=\eta\), \(c_2=a\), \(\tau=\tau_{\rm mix}\), and noting that
\(u_{\tau_{\rm mix}}=0\le \frac{2a}{\eta}\frac{1}{(\tau_{\rm mix}+c_0)^\xi}\),
we get (under the condition \(c_0\ge (\frac{2\xi}{\eta a})^{1/(1-\xi)}\))
\[
I_3=u_T\le \frac{2a}{\eta (T+c_0)^\xi}.
\]
Therefore,
\[
\tilde{\beta}(T)I_3
\le \frac{2a\,\tilde{\beta}(T)}{\eta (T+c_0)^\xi}
= \frac{\beta(T)}{\eta (T+c_0)^\xi},
\]
where \(\beta(T)=4aL_1\beta_2(\log(T+c_0)-\log a+1)\).
Combining with the bound on \(I_1\) proves the \(\xi\in(0,1)\) claim.
\end{proof}


\subsection{Proof of Lemma \ref{l: sum alpha bound}}
\label{sec: pof sum alpha bound}

\begin{proof}[Proof of Lemma \ref{l: sum alpha bound}]
By definition of \(\tau_{\rm mix}=\tau_{\rm mix}(\alpha_T)\), we have \(C\beta^{\tau_{\rm mix}}\le \alpha_T\),
which implies
\[
\tau_{\rm mix}
\le \frac{\log C + \log(1/\alpha_T)}{\log(1/\beta)}
\le L_1\big(\log(1/\alpha_T)+1\big),
\qquad
L_1=\max\left\{1,\frac{\log C+1}{\log(1/\beta)}\right\}.
\]
Since \(\alpha_t\) is nonincreasing,
\[
\sum_{i=t-\tau_{\rm mix}}^{t-1}\alpha_i
\le \tau_{\rm mix}\alpha_{t-\tau_{\rm mix}}
= \tau_{\rm mix}\alpha_t\left(\frac{t+c_0}{t-\tau_{\rm mix}+c_0}\right)^\xi.
\]
Note that \(\left(\frac{t+c_0}{t-\tau_{\rm mix}+c_0}\right)^\xi\to 1\) as \(c_0\to\infty\)
uniformly over \(t\in[\tau_{\rm mix},T]\). Hence, there exists \(c_{0,1}(a,\xi)\) large enough such that
for all \(t\in[\tau_{\rm mix},T]\),
\[
\left(\frac{t+c_0}{t-\tau_{\rm mix}+c_0}\right)^\xi \le 2,
\qquad \forall\, c_0\ge c_{0,1}(a,\xi).
\]
Using \(\tau_{\rm mix}\le L_1(\log(1/\alpha_T)+1)\), we obtain
\[
\sum_{i=t-\tau_{\rm mix}}^{t-1}\alpha_i
\le 2L_1\alpha_t\big(\log(1/\alpha_T)+1\big),
\qquad \forall\, t\in[\tau_{\rm mix},T],
\]
which proves part (1).

For part (2), since \(\alpha_t\to 0\) as \(c_0\to\infty\) (for fixed \(T\)) and the right-hand side in part (1)
is \(O(\alpha_t\log(1/\alpha_T))\), there exists \(c_{0,2}(a,\xi)\) large enough such that for all
\(c_0\ge c_{0,2}(a,\xi)\),
\[
2L_1\alpha_t\big(\log(1/\alpha_T)+1\big)\le \min\{1/12,\eta/2694\},
\qquad \forall\, t\in[\tau_{\rm mix},T].
\]
Combining with part (1) proves part (2).
\end{proof}

\subsection{Proof of Lemma \ref{l: theta difference decaying}}
\label{sec: pof theta difference decaying}

\begin{proof}[Proof of Lemma \ref{l: theta difference decaying}]
By Lemma \ref{l: bound on f and g linear},
\[
\|\theta_{t+1}\|-\|\theta_t\|
\le \|\theta_{t+1}-\theta_t\|
\le \alpha_t\|f(s_t,\hat{s}_t,\theta_t)+g(s_t,s_t',\theta_t)\|
\le 3\alpha_t(\|\theta_t\|+r_{\rm max}+2\|\theta^*\|).
\]
Thus,
\[
\|\theta_{t+1}\|+r_{\rm max}+2\|\theta^*\|
\le (1+3\alpha_t)(\|\theta_t\|+r_{\rm max}+2\|\theta^*\|).
\]
For any \(t\in[t_1,t_2]\),
\begin{align*}
\|\theta_t\|+r_{\rm max}+2\|\theta^*\|
&\le \prod_{i=t_1}^{t-1}(1+3\alpha_i)(\|\theta_{t_1}\|+r_{\rm max}+2\|\theta^*\|)\\
&\le \exp\left(3\sum_{i=t_1}^{t-1}\alpha_i\right)(\|\theta_{t_1}\|+r_{\rm max}+2\|\theta^*\|)\\
&\le \left(1+6\sum_{i=t_1}^{t-1}\alpha_i\right)(\|\theta_{t_1}\|+r_{\rm max}+2\|\theta^*\|),
\end{align*}
where the last step uses \(e^x\le 1+2x\) for \(x\le 1/2\), and here
\(x=3\sum_{i=t_1}^{t-1}\alpha_i\le 3\cdot(1/12)=1/4\).
In particular, since \(\sum_{i=t_1}^{t_2-1}\alpha_i\le 1/12\),
\[
\|\theta_t\|+r_{\rm max}+2\|\theta^*\|
\le 2(\|\theta_{t_1}\|+r_{\rm max}+2\|\theta^*\|).
\]
Therefore,
\begin{align*}
\|\theta_{t_2}-\theta_{t_1}\|
&\le \sum_{i=t_1}^{t_2-1}\|\theta_{i+1}-\theta_i\|
\le \sum_{i=t_1}^{t_2-1}3\alpha_i(\|\theta_i\|+r_{\rm max}+2\|\theta^*\|)\\
&\le 6(\|\theta_{t_1}\|+r_{\rm max}+2\|\theta^*\|)\sum_{i=t_1}^{t_2-1}\alpha_i,
\end{align*}
which proves part (1). For part (2), using \(\|\theta_{t_1}\|\le \|\theta_{t_2}\|+\|\theta_{t_2}-\theta_{t_1}\|\),
\begin{align*}
\|\theta_{t_2}-\theta_{t_1}\|
&\le 6(\|\theta_{t_1}\|+r_{\rm max}+2\|\theta^*\|)\sum_{i=t_1}^{t_2-1}\alpha_i\\
&\le 6(\|\theta_{t_2}\|+\|\theta_{t_2}-\theta_{t_1}\|+r_{\rm max}+2\|\theta^*\|)\sum_{i=t_1}^{t_2-1}\alpha_i\\
&\le \frac{1}{2}\|\theta_{t_2}-\theta_{t_1}\|
+6(\|\theta_{t_2}\|+r_{\rm max}+2\|\theta^*\|)\sum_{i=t_1}^{t_2-1}\alpha_i,
\end{align*}
where we used \(\sum_{i=t_1}^{t_2-1}\alpha_i\le 1/12\). Rearranging gives
\[
\|\theta_{t_2}-\theta_{t_1}\|
\le 12(\|\theta_{t_2}\|+r_{\rm max}+2\|\theta^*\|)\sum_{i=t_1}^{t_2-1}\alpha_i,
\]
which proves part (2).
\end{proof}

\subsection{Proof of Lemma \ref{l: markov noise decaying}}
\label{sec: pof markov noise decaying}

\begin{proof}[Proof of Lemma \ref{l: markov noise decaying}]
For simplicity, denote
\[
g_t(\theta)=g(s_t,s_t',\theta),\qquad f_t(\theta)=f(s_t,\hat{s}_t,\theta).
\]
Throughout this proof we fix \(t\in[\tau_{\rm mix},T]\) and assume
\(\sum_{i=t-\tau_{\rm mix}}^{t-1}\alpha_i\le 1/12\).

By Lemma \ref{l: theta difference decaying} (part (2)),
\begin{align*}
\|\theta_t-\theta_{t-\tau_{\rm mix}}\|
&\le 12\big(\|\theta_t\|+r_{\rm max}+2\|\theta^*\|\big)\sum_{i=t-\tau_{\rm mix}}^{t-1}\alpha_i\\
&\le 12\big(\|\delta_t\|+r_{\rm max}+3\|\theta^*\|\big)\sum_{i=t-\tau_{\rm mix}}^{t-1}\alpha_i.
\end{align*}

\paragraph{Step 1: bound the noise term for \(g\).}
Decompose
\begin{align*}
&\mathbb{E}\Big[(\theta_t-\theta^*)^\top\big(g_t(\theta_t)-\bar g(\theta_t)\big)\Big]\\
=\;&
\underbrace{\mathbb{E}\Big[(\theta_t-\theta_{t-\tau_{\rm mix}})^\top\big(g_t(\theta_t)-\bar g(\theta_t)\big)\Big]}_{I_1}
+\underbrace{\mathbb{E}\Big[(\theta_{t-\tau_{\rm mix}}-\theta^*)^\top\big(g_t(\theta_{t-\tau_{\rm mix}})-\bar g(\theta_{t-\tau_{\rm mix}})\big)\Big]}_{I_2}\\
&\quad
+\underbrace{\mathbb{E}\Big[(\theta_{t-\tau_{\rm mix}}-\theta^*)^\top\big(g_t(\theta_t)-g_t(\theta_{t-\tau_{\rm mix}})\big)\Big]}_{I_3}
+\underbrace{\mathbb{E}\Big[(\theta_{t-\tau_{\rm mix}}-\theta^*)^\top\big(\bar g(\theta_{t-\tau_{\rm mix}})-\bar g(\theta_t)\big)\Big]}_{I_4}.
\end{align*}

\textbf{Term \(I_1\).}
By Lemma \ref{l: bound on f and g linear},
\(\|g_t(\theta_t)\|\le 2(\|\delta_t\|+r_{\rm max}+\|\theta^*\|)\), hence
\(\|g_t(\theta_t)-\bar g(\theta_t)\|\le 4(\|\delta_t\|+r_{\rm max}+\|\theta^*\|)\).
Thus,
\begin{align*}
I_1
&\le \mathbb{E}\Big[\|\theta_t-\theta_{t-\tau_{\rm mix}}\|\cdot \|g_t(\theta_t)-\bar g(\theta_t)\|\Big]\\
&\le \mathbb{E}\Big[
12\sum_{i=t-\tau_{\rm mix}}^{t-1}\alpha_i(\|\delta_t\|+r_{\rm max}+3\|\theta^*\|)
\cdot 4(\|\delta_t\|+r_{\rm max}+\|\theta^*\|)
\Big]\\
&\le 48\sum_{i=t-\tau_{\rm mix}}^{t-1}\alpha_i\,
\mathbb{E}\Big[(\|\delta_t\|+r_{\rm max}+3\|\theta^*\|)^2\Big]\\
&\le 96\sum_{i=t-\tau_{\rm mix}}^{t-1}\alpha_i\,
\Big(\mathbb{E}\|\delta_t\|^2+(r_{\rm max}+3\|\theta^*\|)^2\Big).
\end{align*}

\textbf{Term \(I_2\).}
First note that
\begin{align*}
\|\delta_{t-\tau_{\rm mix}}\|
&\le \|\delta_t\|+\|\theta_t-\theta_{t-\tau_{\rm mix}}\|\\
&\le \|\delta_t\| + 12(\|\delta_t\|+r_{\rm max}+3\|\theta^*\|)\sum_{i=t-\tau_{\rm mix}}^{t-1}\alpha_i\\
&\le 2\|\delta_t\|+r_{\rm max}+3\|\theta^*\|,
\end{align*}
where we used \(\sum_{i=t-\tau_{\rm mix}}^{t-1}\alpha_i\le 1/12\).

Next, by Eq.(\ref{eq: mixing})
\[
\sup_{s}\|P_{\tau_{\rm mix}}(\cdot\mid s)-\mu\|_1
\le C\beta^{\tau_{\rm mix}}\le \alpha_T\le \alpha_t.
\]
Therefore,
\begin{align*}
\Big\|\mathbb{E}\big[g_t(\theta_{t-\tau_{\rm mix}})-\bar g(\theta_{t-\tau_{\rm mix}})\mid \mathcal{F}_{t-\tau_{\rm mix}}\big]\Big\|
&\le \sup_{s}\|P_{\tau_{\rm mix}}(\cdot\mid s)-\mu\|_1 \cdot \sup_{s,s'}\|g(s,s',\theta_{t-\tau_{\rm mix}})\|\\
&\le \alpha_t \cdot 2(\|\delta_{t-\tau_{\rm mix}}\|+r_{\rm max}+\|\theta^*\|)\\
&\le 4\alpha_t(\|\delta_t\|+r_{\rm max}+2\|\theta^*\|).
\end{align*}
Hence,
\begin{align*}
I_2
&\le \mathbb{E}\Big[\|\delta_{t-\tau_{\rm mix}}\|\cdot
\Big\|\mathbb{E}\big[g_t(\theta_{t-\tau_{\rm mix}})-\bar g(\theta_{t-\tau_{\rm mix}})\mid \mathcal{F}_{t-\tau_{\rm mix}}\big]\Big\|\Big]\\
&\le \mathbb{E}\Big[(2\|\delta_t\|+r_{\rm max}+3\|\theta^*\|)\cdot
4\alpha_t(\|\delta_t\|+r_{\rm max}+2\|\theta^*\|)\Big]\\
&\le 8\alpha_t\,\mathbb{E}\Big[(\|\delta_t\|+r_{\rm max}+2\|\theta^*\|)^2\Big]\\
&\le 16\alpha_t\Big(\mathbb{E}\|\delta_t\|^2+(r_{\rm max}+2\|\theta^*\|)^2\Big).
\end{align*}

\textbf{Terms \(I_3\) and \(I_4\).}
By Lemma \ref{l: lipschitz f and g}, \(g(\cdot)\) is \(2\)-Lipschitz in \(\theta\), and
\(\bar g(\cdot)\) is also \(2\)-Lipschitz. Thus,
\begin{align*}
I_3
&\le \mathbb{E}\Big[\|\delta_{t-\tau_{\rm mix}}\|\cdot \|g_t(\theta_t)-g_t(\theta_{t-\tau_{\rm mix}})\|\Big]
\le 2\mathbb{E}\Big[\|\delta_{t-\tau_{\rm mix}}\|\cdot \|\theta_t-\theta_{t-\tau_{\rm mix}}\|\Big],\\
I_4
&\le \mathbb{E}\Big[\|\delta_{t-\tau_{\rm mix}}\|\cdot \|\bar g(\theta_t)-\bar g(\theta_{t-\tau_{\rm mix}})\|\Big]
\le 2\mathbb{E}\Big[\|\delta_{t-\tau_{\rm mix}}\|\cdot \|\theta_t-\theta_{t-\tau_{\rm mix}}\|\Big].
\end{align*}
Using the bounds on \(\|\delta_{t-\tau_{\rm mix}}\|\) and \(\|\theta_t-\theta_{t-\tau_{\rm mix}}\|\),
\begin{align*}
I_3
&\le \mathbb{E}\Big[(2\|\delta_t\|+r_{\rm max}+3\|\theta^*\|)
\cdot 24(\|\delta_t\|+r_{\rm max}+3\|\theta^*\|)\sum_{i=t-\tau_{\rm mix}}^{t-1}\alpha_i\Big]\\
&\le 48\sum_{i=t-\tau_{\rm mix}}^{t-1}\alpha_i\,
\mathbb{E}\Big[(2\|\delta_t\|+r_{\rm max}+3\|\theta^*\|)^2\Big]\\
&\le 96\sum_{i=t-\tau_{\rm mix}}^{t-1}\alpha_i\,
\Big(4\mathbb{E}\|\delta_t\|^2+(r_{\rm max}+3\|\theta^*\|)^2\Big),
\end{align*}
and the same bound holds for \(I_4\).

Combining \(I_1\)--\(I_4\), we get
\begin{align*}
\mathbb{E}\Big[(\theta_t-\theta^*)^\top(g_t(\theta_t)-\bar g(\theta_t))\Big]
\le\;& 16\alpha_t\Big(\mathbb{E}\|\delta_t\|^2+(r_{\rm max}+2\|\theta^*\|)^2\Big)\\
&\quad +\Big(864\mathbb{E}\|\delta_t\|^2+288(r_{\rm max}+3\|\theta^*\|)^2\Big)
\sum_{i=t-\tau_{\rm mix}}^{t-1}\alpha_i.
\end{align*}
Since \(\alpha_t\le \sum_{i=t-\tau_{\rm mix}}^{t-1}\alpha_i\), the above implies
\[
\mathbb{E}\Big[(\theta_t-\theta^*)^\top(g_t(\theta_t)-\bar g(\theta_t))\Big]
\le \Big(880 \mathbb{E}\|\delta_t\|^2 + 304(r_{\rm max}+3\|\theta^*\|)^2\Big)
\sum_{i=t-\tau_{\rm mix}}^{t-1}\alpha_i.
\]

\paragraph{Step 2: bound the noise term for \(f\).}
Repeat the same decomposition for \(f\):
\begin{align*}
&\mathbb{E}\Big[(\theta_t-\theta^*)^\top\big(f_t(\theta_t)-\bar f(\theta_t)\big)\Big]\\
=\;&
\underbrace{\mathbb{E}\Big[(\theta_t-\theta_{t-\tau_{\rm mix}})^\top\big(f_t(\theta_t)-\bar f(\theta_t)\big)\Big]}_{I_1'}
+\underbrace{\mathbb{E}\Big[(\theta_{t-\tau_{\rm mix}}-\theta^*)^\top\big(f_t(\theta_{t-\tau_{\rm mix}})-\bar f(\theta_{t-\tau_{\rm mix}})\big)\Big]}_{I_2'}\\
&\quad
+\underbrace{\mathbb{E}\Big[(\theta_{t-\tau_{\rm mix}}-\theta^*)^\top\big(f_t(\theta_t)-f_t(\theta_{t-\tau_{\rm mix}})\big)\Big]}_{I_3'}
+\underbrace{\mathbb{E}\Big[(\theta_{t-\tau_{\rm mix}}-\theta^*)^\top\big(\bar f(\theta_{t-\tau_{\rm mix}})-\bar f(\theta_t)\big)\Big]}_{I_4'}.
\end{align*}

\textbf{Term \(I_1'\).}
By Lemma \ref{l: bound on f and g linear},
\(\|f_t(\theta_t)\|\le \|\delta_t\|+r_{\rm max}+\|\theta^*\|\), hence
\(\|f_t(\theta_t)-\bar f(\theta_t)\|\le 2(\|\delta_t\|+r_{\rm max}+\|\theta^*\|)\).
Thus,
\begin{align*}
I_1'
&\le \mathbb{E}\Big[\|\theta_t-\theta_{t-\tau_{\rm mix}}\|\cdot \|f_t(\theta_t)-\bar f(\theta_t)\|\Big]\\
&\le \mathbb{E}\Big[
12\sum_{i=t-\tau_{\rm mix}}^{t-1}\alpha_i(\|\delta_t\|+r_{\rm max}+3\|\theta^*\|)
\cdot 2(\|\delta_t\|+r_{\rm max}+\|\theta^*\|)\Big]\\
&\le 48\sum_{i=t-\tau_{\rm mix}}^{t-1}\alpha_i\,
\Big(\mathbb{E}\|\delta_t\|^2+(r_{\rm max}+\|\theta^*\|)^2\Big).
\end{align*}

\textbf{Term \(I_2'\).}
By Eq.(\ref{eq: mixing}) (for both chains \(s_t\) and \(\hat s_t\)),
\[
\sup_{s}\|P_{\tau_{\rm mix}}(\cdot\mid s)-\mu\|_1 \le \alpha_t,
\qquad
\sup_{\hat s}\|P_{\tau_{\rm mix}}(\cdot\mid \hat s)-\mu\|_1 \le \alpha_t.
\]
Hence,
\begin{align*}
\Big\|\mathbb{E}\big[f_t(\theta_{t-\tau_{\rm mix}})-\bar f(\theta_{t-\tau_{\rm mix}})\mid \mathcal{F}_{t-\tau_{\rm mix}}\big]\Big\|
&\le \Big(\sup_s\|P_{\tau_{\rm mix}}(\cdot\mid s)-\mu\|_1
+\sup_{\hat s}\|P_{\tau_{\rm mix}}(\cdot\mid \hat s)-\mu\|_1\Big)\cdot \sup_{s,\hat s}\|f(s,\hat s,\theta_{t-\tau_{\rm mix}})\|\\
&\le 2\alpha_t\cdot (\|\delta_{t-\tau_{\rm mix}}\|+r_{\rm max}+\|\theta^*\|)\\
&\le 4\alpha_t(\|\delta_t\|+r_{\rm max}+2\|\theta^*\|).
\end{align*}
Therefore,
\begin{align*}
I_2'
&\le \mathbb{E}\Big[\|\delta_{t-\tau_{\rm mix}}\|\cdot
\Big\|\mathbb{E}\big[f_t(\theta_{t-\tau_{\rm mix}})-\bar f(\theta_{t-\tau_{\rm mix}})\mid \mathcal{F}_{t-\tau_{\rm mix}}\big]\Big\|\Big]\\
&\le 16\alpha_t\Big(\mathbb{E}\|\delta_t\|^2+(r_{\rm max}+2\|\theta^*\|)^2\Big),
\end{align*}
where we used the same bound \(\|\delta_{t-\tau_{\rm mix}}\|\le 2\|\delta_t\|+r_{\rm max}+3\|\theta^*\|\).

\textbf{Terms \(I_3'\) and \(I_4'\).}
By Lemma \ref{l: lipschitz f and g}, \(f(\cdot)\) is \(1\)-Lipschitz in \(\theta\), and \(\bar f(\cdot)\)
is also \(1\)-Lipschitz, hence
\[
I_3'\le \mathbb{E}\big[\|\delta_{t-\tau_{\rm mix}}\|\cdot \|\theta_t-\theta_{t-\tau_{\rm mix}}\|\big],
\qquad
I_4'\le \mathbb{E}\big[\|\delta_{t-\tau_{\rm mix}}\|\cdot \|\theta_t-\theta_{t-\tau_{\rm mix}}\|\big].
\]
Thus,
\begin{align*}
I_3'
&\le \mathbb{E}\Big[(2\|\delta_t\|+r_{\rm max}+3\|\theta^*\|)
\cdot 12(\|\delta_t\|+r_{\rm max}+3\|\theta^*\|)\sum_{i=t-\tau_{\rm mix}}^{t-1}\alpha_i\Big]\\
&\le 24\sum_{i=t-\tau_{\rm mix}}^{t-1}\alpha_i\,
\mathbb{E}\Big[(2\|\delta_t\|+r_{\rm max}+3\|\theta^*\|)^2\Big]\\
&\le 48\sum_{i=t-\tau_{\rm mix}}^{t-1}\alpha_i\,
\Big(4\mathbb{E}\|\delta_t\|^2+(r_{\rm max}+3\|\theta^*\|)^2\Big),
\end{align*}
and the same bound holds for \(I_4'\).

Combining \(I_1'\)--\(I_4'\), we obtain
\begin{align*}
\mathbb{E}\Big[(\theta_t-\theta^*)^\top(f_t(\theta_t)-\bar f(\theta_t))\Big]
\le\;& 16\alpha_t\Big(\mathbb{E}\|\delta_t\|^2+(r_{\rm max}+2\|\theta^*\|)^2\Big)\\
&\quad +\Big(432\mathbb{E}\|\delta_t\|^2+144(r_{\rm max}+3\|\theta^*\|)^2\Big)
\sum_{i=t-\tau_{\rm mix}}^{t-1}\alpha_i.
\end{align*}
Using \(\alpha_t\le \sum_{i=t-\tau_{\rm mix}}^{t-1}\alpha_i\) gives the desired bound for \(f\), and
this completes the proof of Lemma \ref{l: markov noise decaying}.
\end{proof}

\subsection{Proof of Lemma \ref{l: dynamic system with decaying stepsize}}
\label{sec: pof dynamic system with decaying stepsize}

\begin{proof}[Proof of Lemma \ref{l: dynamic system with decaying stepsize}]
We use induction. The base case is assumed in the statement. Now suppose
\(x_t\le \frac{2c_2}{c_1}\frac{1}{(t+c_0)^\xi}\). Then
\begin{align*}
\frac{2c_2}{c_1}\frac{1}{(t+1+c_0)^\xi} - x_{t+1}
&= \frac{2c_2}{c_1}\frac{1}{(t+1+c_0)^\xi}
-\left(1-\frac{c_1c_2}{(t+c_0)^\xi}\right)x_t - \frac{c_2^2}{(t+c_0)^{2\xi}}\\
&\ge \frac{2c_2}{c_1}\frac{1}{(t+1+c_0)^\xi}
-\left(1-\frac{c_1c_2}{(t+c_0)^\xi}\right)\frac{2c_2}{c_1}\frac{1}{(t+c_0)^\xi}
-\frac{c_2^2}{(t+c_0)^{2\xi}}\\
&= \frac{2c_2}{c_1}\left[
\frac{1}{(t+1+c_0)^\xi}-\frac{1}{(t+c_0)^\xi}+\frac{c_1c_2}{2}\frac{1}{(t+c_0)^{2\xi}}
\right]\\
&=\frac{2c_2}{c_1}\frac{1}{(t+c_0)^{2\xi}}
\left[
\frac{c_1c_2}{2}-(t+c_0)^\xi\left(1-\left(\frac{t+c_0}{t+1+c_0}\right)^\xi\right)
\right].
\end{align*}
Observe that
\[
\left(\frac{t+c_0}{t+1+c_0}\right)^\xi
=\left(1+\frac{1}{t+c_0}\right)^{-\xi}
\ge \exp\left(-\frac{\xi}{t+c_0}\right)\ge 1-\frac{\xi}{t+c_0},
\]
where we used \(e^x\ge 1+x\). Hence,
\[
(t+c_0)^\xi\left(1-\left(\frac{t+c_0}{t+1+c_0}\right)^\xi\right)
\le (t+c_0)^\xi\cdot \frac{\xi}{t+c_0}
= \frac{\xi}{(t+c_0)^{1-\xi}}.
\]
Therefore,
\[
\frac{2c_2}{c_1}\frac{1}{(t+1+c_0)^\xi}-x_{t+1}
\ge \frac{2c_2}{c_1}\frac{1}{(t+c_0)^{2\xi}}
\left(\frac{c_1c_2}{2}-\frac{\xi}{(t+c_0)^{1-\xi}}\right).
\]
The last term is nonnegative under either condition in the lemma statement, which yields
\(x_{t+1}\le \frac{2c_2}{c_1}\frac{1}{(t+1+c_0)^\xi}\). This completes the induction.
\end{proof}

%% file: sections/one_constant.tex
\section{Analysis of Single Markov Chain Case with Constant Step-size}
\label{appendix: one chain constant}

Recall the single-chain update components in Eq.~(\ref{eq: onechain stochastic averaged reward update}):
\[
f(s,w):=\phi(s)-w,
\qquad
g(s,s',w,\theta):=\big(r_s+\phi(s')^\top\theta-\phi(s)^\top\theta\big)\phi(s) - \big(r_s+\phi(s)^\top\theta\big)w .
\]
Under stationarity \((s,s')\sim(\mu,P)\), define the mean fields
\begin{align}
\bar f(w) &:= \mathbb{E}_{s\sim\mu}[f(s,w)] = \Phi^\top\mu - w = w^*-w, \label{eq: barf_onechain}\\
\bar g(w,\theta) &:= \mathbb{E}_{s\sim\mu,\ s'\sim P(\cdot|s)}[g(s,s',w,\theta)]
= \Phi^\top D\big(R+P\Phi\theta-\Phi\theta\big) - \mu^\top(R+\Phi\theta)\,w. \label{eq: barg_onechain}
\end{align}
The fixed points are \(w^*=\Phi^\top\mu\), and \(\theta^*\) satisfies
\begin{equation}
\bar g(w^*,\theta^*)=0.
\label{eq: theta_star_onechain}
\end{equation}
First, we introduce some useful basic properties. 

\begin{lemma}
\label{l: delta bound one chain constant}
For all \(t\ge 0\), \(\|w_t\|\le 1\) and \(\|\delta_t^w\|\le 2\).
\end{lemma}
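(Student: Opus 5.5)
The plan is to prove $\|w_t\|\le 1$ by induction on $t$, and then obtain $\|\delta_t^w\|\le 2$ from the triangle inequality together with $\|w^*\|\le 1$.

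\textbf{Base case and convexity step.} We assume, as is standard, that $w_0$ is initialized inside the unit ball; for instance $w_0=0$ or $w_0=\phi(s_0)$. The pre-projection iterate can be written as
\[
w_t+\beta_t f(s_t,w_t)=(1-\beta_t)w_t+\beta_t\phi(s_t).
\]
This holds because $f(s_t,w_t)=\phi(s_t)-w_t$. If $\beta_t\in[0,1]$, which holds in the regime $\beta\le\alpha<1/(2\zeta)$ with small stepsizes, then this vector is a convex combination of $w_t$ and $\phi(s_t)$. By the inductive hypothesis $\|w_t\|\le 1$, and by the feature normalization $\|\phi(s_t)\|\le 1$. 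Convexity of the norm then gives $\|(1-\beta_t)w_t+\beta_t\phi(s_t)\|\le 1$.

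\textbf{Projection step.} The projection $\textbf{Proj}_{R_w}$ onto the Euclidean ball of radius $R_w\ge 1$ (see Appendix~\ref{appendix: projection radius}) is the identity on points of norm at most $1$. Even in general, it never increases the norm, since it maps $x$ to $x\min\{1,R_w/\|x\|\}$. Therefore $\|w_{t+1}\|\le 1$, which completes the induction.

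\textbf{Bound on $\delta_t^w$.} Since $w^*=\Phi^\top\mu=\sum_s\mu(s)\phi(s)$ is a convex combination of feature vectors, $\|w^*\|\le 1$, as already noted in Appendix~\ref{appendix: projection radius}. It follows that $\|\delta_t^w\|\le\|w_t\|+\|w^*\|\le 2$.

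\textbf{Main obstacle.} The only delicate point is the requirement $\beta_t\le 1$ together with a suitable initialization. If $\beta_t$ could exceed $1$, then convexity would fail, and we would instead fall back on the projection to get $\|w_t\|\le R_w$. In that case the bound would read $\|\delta_t^w\|\le R_w+1$, and with the choice $R_w=1$ this again gives the stated constants.
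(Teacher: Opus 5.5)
Your argument is correct and is essentially the paper's proof: you induct on $\|w_{t+1}\|\le(1-\beta)\|w_t\|+\beta\|\phi(s_t)\|\le 1$ starting from $\|w_0\|\le 1$, then use $\|w^*\|=\|\Phi^\top\mu\|\le 1$ and the triangle inequality. You are slightly more careful than the paper, which silently drops the projection and never states that it needs $\beta\le 1$; note, though, that $\alpha<1/(2\zeta)$ does not by itself guarantee $\beta\le 1$ (since $\zeta$ is small), so $\beta\le 1$ is an extra assumption in both your version and the paper's.
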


\begin{proof}[Proof of Lemma~\ref{l: delta bound one chain constant}]
The update \(w_{t+1}=(1-\beta)w_t+\beta\phi(s_t)\) implies
\[
\|w_{t+1}\|\le (1-\beta)\|w_t\|+\beta\|\phi(s_t)\|\le (1-\beta)\|w_t\|+\beta.
\]
Since \(\|w_0\|\le 1\), induction gives \(\|w_t\|\le 1\) for all \(t\).
Also \(\|w^*\|=\|\mathbb{E}_{s\sim\mu}[\phi(s)]\|\le \mathbb{E}\|\phi(s)\|\le 1\), hence \(\|\delta_t^w\|\le 2\).
\end{proof}

\begin{lemma}
\label{l: bound on f and g onechain}
For all \(t\ge 0\),
\[
\|f(s_t,w_t)\|\le 2,
\qquad
\|g(s_t,s_t',w_t,\theta_t)\|\le 2r_{\max}+6R_\theta = 2(r_{\max}+3R_\theta).
\]
\end{lemma}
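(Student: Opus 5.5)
The bound on $f$ should follow from the triangle inequality. By definition $f(s_t,w_t)=\phi(s_t)-w_t$, so $\|f(s_t,w_t)\|\le\|\phi(s_t)\|+\|w_t\|$. The feature normalization gives $\|\phi(s_t)\|\le 1$, and Lemma~\ref{l: delta bound one chain constant} gives $\|w_t\|\le 1$. (If $R_w$ is chosen larger than $1$, the same argument still works: the update for $w$ is a convex combination of vectors of norm at most $1$, so the projection never activates.) Together these give $\|f\|\le 2$.

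For $g$, I would split it into its two pieces,
\[
g(s_t,s_t',w_t,\theta_t)=\big(r_{s_t}+\phi(s_t')^\top\theta_t-\phi(s_t)^\top\theta_t\big)\phi(s_t)-\big(r_{s_t}+\phi(s_t)^\top\theta_t\big)w_t,
\]
and bound each one separately.

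\begin{itemize}
\item \textbf{First piece.} By Cauchy--Schwarz and $\|\phi(\cdot)\|\le 1$, its norm is at most $|r_{s_t}|+2\|\theta_t\|$.
\item \textbf{Second piece.} Using $\|w_t\|\le 1$, its norm is at most $|r_{s_t}|+\|\theta_t\|$.
\item \textbf{Iterate bound.} The projection step $\textbf{Proj}_{R_\theta}$ in Eq.~\eqref{eq: onechain stochastic averaged reward update} guarantees $\|\theta_t\|\le R_\theta$ for all $t$. We also have $|r_s|\le r_{\max}$, since $r_s$ is an average of rewards under $\pi$.
\end{itemize}

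Summing the two pieces gives $\|g\|\le 2r_{\max}+3R_\theta$. This is at most $2(r_{\max}+3R_\theta)$, which is the stated bound; the stated constant leaves some slack.

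The argument has no real obstacle. The only point needing care is the uniform bound $\|\theta_t\|\le R_\theta$. It holds for $t\ge1$ because of the projection, and at $t=0$ provided the initialization $\theta_0$ is taken inside the ball of radius $R_\theta$, which I would assume as part of the algorithm's setup.
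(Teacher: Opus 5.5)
Your proposal is correct and follows the paper's proof essentially line for line: you use the triangle inequality with $\|\phi(\cdot)\|\le 1$ and $\|w_t\|\le 1$ for $f$, and split $g$ into its two pieces to get $2r_{\max}+3\|\theta_t\|$. The paper then loosens this to $2r_{\max}+6R_\theta$ purely as slack, whereas you use $\|\theta_t\|\le R_\theta$ from the projection directly. Your remark that $\theta_0$ (and $w_0$) must be initialized inside the respective balls is an assumption the paper leaves implicit.
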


\begin{proof}[Proof of Lemma~\ref{l: bound on f and g onechain}]
For \(f(s,w)=\phi(s)-w\), \(\|f(s_t,w_t)\|\le \|\phi(s_t)\|+\|w_t\|\le 2\).
For \(g\), using \(\|\phi(\cdot)\|\le 1\) and \(\|w_t\|\le 1\),
\begin{align*}
\|g(s_t,s_t',w_t,\theta_t)\|
&\le \big|r_{s_t}+(\phi(s_t')-\phi(s_t))^\top\theta_t\big|\cdot\|\phi(s_t)\|
+\big|r_{s_t}+\phi(s_t)^\top\theta_t\big|\cdot\|w_t\|\\
&\le \big(r_{\max}+2\|\theta_t\|\big)+\big(r_{\max}+\|\theta_t\|\big)
\le 2r_{\max}+3\|\theta_t\|\le 2r_{\max}+6R_\theta.
\end{align*}
\end{proof}

\begin{lemma}
\label{l: lipschitz of f and g one chain}
For all \(s,s'\) and all \(w,w_1,w_2,\theta,\theta_1,\theta_2\),
\[
\|f(s,w_1)-f(s,w_2)\|\le \|w_1-w_2\|,
\]
\[
\|g(s,s',w,\theta_1)-g(s,s',w,\theta_2)\|\le 3\|\theta_1-\theta_2\|,
\]
and
\[
\|g(s,s',w_1,\theta)-g(s,s',w_2,\theta)\|
\le (r_{\max}+2R_\theta)\,\|w_1-w_2\|.
\]
\end{lemma}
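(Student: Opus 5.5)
The plan is to compute each difference explicitly from the definitions in Eq.~(\ref{eq: definition of f and g in onechain case}). Every term is affine in the variable being perturbed, so the difference is a single structured vector. We then bound its norm using $\|\phi(\cdot)\|\le 1$, together with $\|w\|\le 1$ from Lemma~\ref{l: delta bound one chain constant} and $\|\theta\|\le R_\theta$, which holds because of the projection $\textbf{Proj}_{R_\theta}$. We read the lemma as applying to the arguments that actually occur: $w$ in the unit ball and $\theta$ in the $R_\theta$-ball. This is also how the lemma is used later.

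\textbf{The $f$-bound.} Since $f(s,w)=\phi(s)-w$, we have $f(s,w_1)-f(s,w_2)=-(w_1-w_2)$, so the 1-Lipschitz bound holds with equality.

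\textbf{$\theta$-Lipschitzness of $g$.} We write
\[
g(s,s',w,\theta_1)-g(s,s',w,\theta_2)=\big((\phi(s')-\phi(s))^\top(\theta_1-\theta_2)\big)\phi(s)-\big(\phi(s)^\top(\theta_1-\theta_2)\big)w .
\]
By Cauchy--Schwarz, the first term has norm at most $\|\phi(s')-\phi(s)\|\,\|\phi(s)\|\,\|\theta_1-\theta_2\|\le 2\|\theta_1-\theta_2\|$. The second term has norm at most $\|\phi(s)\|\,\|w\|\,\|\theta_1-\theta_2\|\le\|\theta_1-\theta_2\|$, using $\|w\|\le 1$. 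Adding the two gives the constant $3$.

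\textbf{$w$-Lipschitzness of $g$.} Only the last term of $g$ depends on $w$, so
\[
g(s,s',w_1,\theta)-g(s,s',w_2,\theta)=-(r_s+\phi(s)^\top\theta)(w_1-w_2).
\]
Its norm is at most $(r_{\max}+\|\theta\|)\|w_1-w_2\|$. Since $\|\theta\|\le R_\theta$, this is at most $(r_{\max}+2R_\theta)\|w_1-w_2\|$; the stated constant is slightly loose but valid.

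The only point that needs care is the scope of the statement. The inequalities are not literally true for arbitrary $w$ and $\theta$: the $\theta$-Lipschitz constant requires $\|w\|\le1$, and the $w$-Lipschitz constant requires $\|\theta\|\le R_\theta$. So the proof will state explicitly that $w$ ranges over the unit ball, which contains all iterates by Lemma~\ref{l: delta bound one chain constant} and also contains $w^*$, and that $\theta$ ranges over the projection ball, which contains all iterates $\theta_t$ and also $\theta^*$ by the choice of $R_\theta$ in Appendix~\ref{appendix: projection radius}. These are exactly the arguments at which the lemma will be applied.
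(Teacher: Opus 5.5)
Your proof is correct and follows the paper's argument step by step: the same explicit differences, bounded by Cauchy--Schwarz with $\|\phi(\cdot)\|\le 1$, $\|w\|\le 1$, and $\|\theta\|\le R_\theta$. Your explicit restriction of $w$ to the unit ball and $\theta$ to the $R_\theta$-ball is a worthwhile clarification. The paper states the lemma for all arguments but relies on these bounds without saying so, and the iterates, $w^*$ and $\theta^*$ all satisfy them.
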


\begin{proof}[Proof of Lemma~\ref{l: lipschitz of f and g one chain}]
The \(f\) claim is immediate:
\(\|f(s,w_1)-f(s,w_2)\|=\|w_1-w_2\|\).

For \(\theta\),
\begin{align*}
g(s,s',w,\theta_1)-g(s,s',w,\theta_2)
&=\big((\phi(s')-\phi(s))^\top(\theta_1-\theta_2)\big)\phi(s)
-\big(\phi(s)^\top(\theta_1-\theta_2)\big)w,
\end{align*}
so
\[
\|g(\cdot,\theta_1)-g(\cdot,\theta_2)\|
\le \|\phi(s')-\phi(s)\|\cdot\|\theta_1-\theta_2\|\cdot\|\phi(s)\|
+\|\phi(s)\|\cdot\|\theta_1-\theta_2\|\cdot\|w\|
\le (2+1)\|\theta_1-\theta_2\|.
\]
For \(w\),
\[
g(s,s',w_1,\theta)-g(s,s',w_2,\theta)=-(r_s+\phi(s)^\top\theta)(w_1-w_2),
\]
hence \(\|g(\cdot,w_1)-g(\cdot,w_2)\|\le (r_{\max}+\|\theta\|)\|w_1-w_2\|\le (r_{\max}+2R_\theta)\|w_1-w_2\|\).
\end{proof}

\subsection{Proof of Theorem~\ref{t: one chain constant}}

We first restate the theorem with all the constants.

\begin{theorem}[Full statement of Theorem \ref{t: one chain constant}]
Consider the single-chain algorithm in Eq.~(\ref{eq: onechain stochastic averaged reward update}) with Markov sampling.
Assume constant stepsizes \(\alpha_t\equiv \alpha>0\), \(\beta_t\equiv \beta>0\) and \(\rho_0:=\beta/\alpha\le 1\). Let \(\lambda>0\) satisfy
\(
0<\lambda^2< \frac{2\eta}{r_{\max}+2R_\theta}
\)
and define
\begin{align*}
\zeta := & \eta-\frac{\lambda^2(r_{\max}+2R_\theta)}{2}>0, \\
\kappa := & 1-2\alpha\zeta \in (0,1), \\
G_1:= & e^{-2\rho_0\alpha}.    
\end{align*}
Then for any \(t\ge \tau_{\rm mix}\),
\begin{align*}
\mathbb{E}\|\delta_t^\theta\|^2
\le\;&
\frac{4\alpha (r_{\max}+2R_\theta)\,(t-\tau_{\rm mix})\,\max\{\kappa,G_1\}^{\,t-\tau_{\rm mix}}}{\lambda^2} \\
& \quad \;+\; 4R_\theta^2\,\kappa^{\,t-\tau_{\rm mix}}
\;+\;\frac{\alpha\,G_{\rm const}}{2\zeta},
\end{align*}
where
\begin{align*}
G_{\rm const}
:= & \frac{(16\tau_{\rm mix}+6)(r_{\max}+2R_\theta)}{\lambda^2} \\
& \quad \;+\; (88\tau_{\rm mix}+4)\,(r_{\max}+3R_\theta)^2. 
\end{align*}
\end{theorem}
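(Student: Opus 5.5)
I would analyze the single-chain iteration as a two-timescale-like coupled recursion in which the $w$-error is controlled on its own and then fed into the $\theta$-recursion as a perturbation. The argument has four steps: a standalone bound for the $w$-error, a drift inequality for the $\theta$-error, control of the Markov noise, and a final unrolling.

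\emph{Step 1: the $w$-recursion.} I would first bound $\mathbb{E}\|\delta_t^w\|^2$. Since $\bar f(w)=w^*-w$ is exactly contractive, the update gives
\[
\delta^w_{t+1}=(1-\beta)\delta^w_t+\beta\bigl(\phi(s_t)-w^*\bigr).
\]
By Lemma~\ref{l: delta bound one chain constant}, and because the projection is nonexpansive with $w^*$ in the ball, I would expand $\|\delta^w_{t+1}\|^2$. The Markov cross term $\mathbb{E}[\delta_t^{w\top}(\phi(s_t)-w^*)]$ would be handled with the same $\tau_{\rm mix}$-lag decomposition ($I_1$--$I_4$) used in the proof of Lemma~\ref{l: markov noise induction}. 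The resulting bound is
\[
\mathbb{E}\|\delta^w_t\|^2\lesssim e^{-2\beta(t-\tau_{\rm mix})}\cdot 4+O(\beta\tau_{\rm mix}).
\]
Writing $\beta=\rho_0\alpha$ gives the factor $G_1^{t-\tau_{\rm mix}}$.

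\emph{Step 2: drift inequality for $\theta$.} For $\theta$, nonexpansiveness of $\textbf{Proj}_{R_\theta}$ (with $\theta^*$ feasible by Appendix~\ref{appendix: projection radius}) gives
\[
\|\delta^\theta_{t+1}\|^2\le\|\delta^\theta_t\|^2+2\alpha\,\delta_t^{\theta\top}g_t+\alpha^2\|g_t\|^2.
\]
Lemma~\ref{l: bound on f and g onechain} bounds the last term by $4\alpha^2(r_{\max}+3R_\theta)^2$. I would split the drift as
\[
\bar g(w_t,\theta_t)=\bar g(w^*,\theta_t)+\bigl(\bar g(w_t,\theta_t)-\bar g(w^*,\theta_t)\bigr).
\]
The first piece is the double-chain mean field, so the gradient-splitting identity (Lemma~\ref{l: gradient splitting} together with Eq.~\eqref{eq: eta}) gives $\delta_t^{\theta\top}\bar g(w^*,\theta_t)\le-\eta\|\delta_t^\theta\|^2$. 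The second piece equals $-\mu^\top(R+\Phi\theta_t)\,\delta^w_t$ and has norm at most $(r_{\max}+2R_\theta)\|\delta^w_t\|$ by Lemma~\ref{l: lipschitz of f and g one chain}. Young's inequality with parameter $\lambda$ bounds its contribution by
\[
\tfrac{\lambda^2}{2}(r_{\max}+2R_\theta)\|\delta^\theta_t\|^2+\tfrac{1}{2\lambda^2}(r_{\max}+2R_\theta)\|\delta^w_t\|^2 .
\]
Together these produce the contraction rate $\zeta$ and the recursion
\[
\mathbb{E}\|\delta^\theta_{t+1}\|^2\le\kappa\,\mathbb{E}\|\delta^\theta_t\|^2+\tfrac{\alpha(r_{\max}+2R_\theta)}{\lambda^2}\mathbb{E}\|\delta^w_t\|^2+(\text{Markov noise})+O(\alpha^2).
\]

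\emph{Step 3: Markov noise.} This I expect to be the main obstacle. The term $\mathbb{E}[\delta_t^{\theta\top}(g_t-\bar g(w_t,\theta_t))]$ now involves both $\theta$ and $w$ moving over the window $[t-\tau_{\rm mix},t]$, and $w_t$ is correlated with $s_t$. I would again do the four-term decomposition, freezing both $\theta_{t-\tau_{\rm mix}}$ and $w_{t-\tau_{\rm mix}}$. The drift of $w$ over the window is at most $2\beta\tau_{\rm mix}\le2\alpha\tau_{\rm mix}$, and the drift of $\theta$ is at most $2\alpha\tau_{\rm mix}(r_{\max}+3R_\theta)$. The Lipschitz constants from Lemma~\ref{l: lipschitz of f and g one chain} convert these drifts into errors of order $\alpha\tau_{\rm mix}(r_{\max}+3R_\theta)^2$. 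The $w$-Lipschitz part carries an extra factor $(r_{\max}+2R_\theta)$, which I would route through the same $\lambda$-weighted Young step to get the $(16\tau_{\rm mix}+6)(r_{\max}+2R_\theta)/\lambda^2$ piece of $G_{\rm const}$. Boundedness from the projections removes the need for the induction used in Lemma~\ref{l: all parts bound}.

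\emph{Step 4: unrolling.} I would unroll the $\theta$-recursion from $\tau_{\rm mix}$, with $\|\delta^\theta_{\tau_{\rm mix}}\|^2\le4R_\theta^2$ giving the $4R_\theta^2\kappa^{t-\tau_{\rm mix}}$ term. The forcing from Step 1 contributes
\[
\sum_k\kappa^{t-1-k}G_1^{k-\tau_{\rm mix}}\le(t-\tau_{\rm mix})\max\{\kappa,G_1\}^{t-\tau_{\rm mix}},
\]
which yields the first term of the statement. The constant forcing contributes $\alpha^2G_{\rm const}\sum\kappa^k\le\alpha G_{\rm const}/(2\zeta)$.
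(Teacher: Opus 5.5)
Your plan matches the paper's proof step for step. It has the same four ingredients: a standalone $\tau_{\rm mix}$-lag bound $\mathbb{E}\|\delta_t^w\|^2\le 4G_1^{t-\tau_{\rm mix}}+(16\tau_{\rm mix}+6)\beta$; the split of $\bar g(w_t,\theta_t)$ into the double-chain mean field plus $-\mu^\top(R+\Phi\theta_t)\,\delta_t^w$, with a $\lambda$-weighted Young step giving the rate $\zeta$; a Markov-noise decomposition that freezes both $w_{t-\tau_{\rm mix}}$ and $\theta_{t-\tau_{\rm mix}}$ and uses the projection radii instead of an induction; and a final unrolling from $\tau_{\rm mix}$.

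The one thing to correct is your explanation in Step~3 of where the $(16\tau_{\rm mix}+6)(r_{\max}+2R_\theta)/\lambda^2$ part of $G_{\rm const}$ comes from. It does not come from the Markov noise. It is the steady-state part $(16\tau_{\rm mix}+6)\beta$ of your Step~1 bound, multiplied by the coefficient $\alpha(r_{\max}+2R_\theta)/\lambda^2$ from your Step~2 Young step, using $\beta=\rho_0\alpha\le\alpha$.

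The $w$-Lipschitz piece of the Markov noise should simply be bounded directly: $2R_\theta(r_{\max}+2R_\theta)\cdot 2\beta\tau_{\rm mix}\le 4\alpha\tau_{\rm mix}(r_{\max}+3R_\theta)^2$. It then lands in the $88\tau_{\rm mix}$ part of $(88\tau_{\rm mix}+4)(r_{\max}+3R_\theta)^2$; the paper's total Markov-noise bound is $44\alpha\tau_{\rm mix}(r_{\max}+3R_\theta)^2$. A $\lambda$-weighted Young step there would instead create a term $\alpha\lambda^2(r_{\max}+2R_\theta)\|\delta^\theta_{t-\tau_{\rm mix}}\|^2$. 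That term is not $O(\alpha^2)$ and would have to be paid for out of the contraction, so it would not reproduce the stated $\zeta$.

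Finally, there is a small slip you share with the paper. In $\sum_k\kappa^{t-1-k}G_1^{k-\tau_{\rm mix}}$ the exponents add to $t-1-\tau_{\rm mix}$, so the correct bound is $(t-\tau_{\rm mix})\max\{\kappa,G_1\}^{t-\tau_{\rm mix}-1}$. Already at $t=\tau_{\rm mix}+1$ the sum equals $1$.
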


\begin{proof}[Proof of Theorem~\ref{t: one chain constant}]
By non-expansiveness of projection,
\begin{equation}
\|\delta_{t+1}^\theta\|^2 \le \|\delta_t^\theta+\alpha\,g(s_t,s_t',w_t,\theta_t)\|^2.
\label{eq: theta_basic_expand}
\end{equation}
Taking expectation and expanding,
\begin{align}
\mathbb{E}\|\delta_{t+1}^\theta\|^2
\le\;&
\mathbb{E}\|\delta_t^\theta\|^2
+2\alpha\,\mathbb{E}\Big[{\delta_t^\theta}^\top g(s_t,s_t',w_t,\theta_t)\Big]
+\alpha^2\mathbb{E}\|g(s_t,s_t',w_t,\theta_t)\|^2 \nonumber\\
=\;&
\mathbb{E}\|\delta_t^\theta\|^2
+2\alpha\,\mathbb{E}\Big[{\delta_t^\theta}^\top\big(g(s_t,s_t',w_t,\theta_t)-\bar g(w_t,\theta_t)\big)\Big]
+2\alpha\,\mathbb{E}\Big[{\delta_t^\theta}^\top \bar g(w_t,\theta_t)\Big]
+\alpha^2\mathbb{E}\|g(s_t,s_t',w_t,\theta_t)\|^2. \label{eq: theta_decomp}
\end{align}

\textbf{Step 1: Markov-noise term.}

\begin{lemma}
\label{l: markov noise one chain}
For any \(t\ge \tau_{\rm mix}\),
\[
\mathbb{E}\Big[{\delta_t^\theta}^\top\big(g(s_t,s_t',w_t,\theta_t)-\bar g(w_t,\theta_t)\big)\Big]
\;\le\; 44\,\alpha\,\tau_{\rm mix}\,(r_{\max}+3R_\theta)^2.
\]
\end{lemma}

The proof of this lemma can be found in Section \ref{sec: pof markov noise one chain}. 

By Lemma~\ref{l: markov noise one chain}, for \(t\ge \tau_{\rm mix}\),
\begin{equation}
2\alpha\,\mathbb{E}\Big[{\delta_t^\theta}^\top(g(s_t,s_t',w_t,\theta_t)-\bar g(w_t,\theta_t))\Big]
\le 88\,\alpha^2\,\tau_{\rm mix}\,(r_{\max}+3R_\theta)^2.
\label{eq: theta_noise_term}
\end{equation}

\textbf{Step 2: Mean-drift term.}
Using \(\bar g(w^*,\theta^*)=0\),
\begin{align*}
{\delta_t^\theta}^\top \bar g(w_t,\theta_t)
&=
-\|\Phi\delta_t^\theta\|_{\rm Dir}^2
-(\mu^\top\Phi\delta_t^\theta)^2
-\big(\mu^\top(R+\Phi\theta_t)\big)\,({\delta_t^\theta}^\top \delta_t^w).
\end{align*}
By Eq.\eqref{eq: eta} and Young's inequality, for any \(\lambda>0\),
\[
\big|\mu^\top(R+\Phi\theta_t)\big|\,\big|{\delta_t^\theta}^\top \delta_t^w\big|
\le (r_{\max}+2R_\theta)\cdot \frac{1}{2}\Big(\lambda^2\|\delta_t^\theta\|^2+\|\delta_t^w\|^2/\lambda^2\Big).
\]
Therefore, with \(\zeta=\eta-\lambda^2(r_{\max}+2R_\theta)/2>0\),
\begin{equation}
2\alpha\,\mathbb{E}\big[{\delta_t^\theta}^\top \bar g(w_t,\theta_t)\big]
\le
-2\alpha\zeta\,\mathbb{E}\|\delta_t^\theta\|^2
+\alpha\,\frac{r_{\max}+2R_\theta}{\lambda^2}\,\mathbb{E}\|\delta_t^w\|^2.
\label{eq: theta_drift_term}
\end{equation}

\textbf{Step 3: Second-moment term.}
By Lemma~\ref{l: bound on f and g onechain},
\begin{equation}
\alpha^2\mathbb{E}\|g\|^2 \le 4\alpha^2\,(r_{\max}+3R_\theta)^2.
\label{eq: theta_second_moment}
\end{equation}

Combining \eqref{eq: theta_decomp}--\eqref{eq: theta_second_moment}, for \(t\ge \tau_{\rm mix}\),
\begin{align}
\mathbb{E}\|\delta_{t+1}^\theta\|^2
\le\;&
(1-2\alpha\zeta)\,\mathbb{E}\|\delta_t^\theta\|^2
+\alpha\,\frac{r_{\max}+2R_\theta}{\lambda^2}\,\mathbb{E}\|\delta_t^w\|^2
+\alpha^2(88\tau_{\rm mix}+4)(r_{\max}+3R_\theta)^2. \label{eq: theta_recursion_pre_w}
\end{align}

To deal with \(\mathbb{E}\|\delta_t^w\|^2\), we introduce the following lemma:

\begin{lemma}
\label{l: bound on delta w}
For any \(t\ge \tau_{\rm mix}\),
\[
\mathbb{E}\|\delta_t^w\|^2 \;\le\; 4 e^{-2\beta(t-\tau_{\rm mix})} + (16\tau_{\rm mix}+6)\beta
=4\,G_1^{\,t-\tau_{\rm mix}}+(16\tau_{\rm mix}+6)\beta.
\]
\end{lemma}

The proof of this lemma can be found in Section \ref{sec: pof bound on delta w}. 

Now apply Lemma~\ref{l: bound on delta w} and use \(\beta=\rho_0\alpha\le \alpha\):
\begin{align}
\mathbb{E}\|\delta_{t+1}^\theta\|^2
\le\;&
\kappa\,\mathbb{E}\|\delta_t^\theta\|^2
+\frac{4\alpha(r_{\max}+2R_\theta)}{\lambda^2}\,G_1^{\,t-\tau_{\rm mix}}
+\rho_0\alpha^2\frac{(16\tau_{\rm mix}+6)(r_{\max}+2R_\theta)}{\lambda^2}
+\alpha^2(88\tau_{\rm mix}+4)(r_{\max}+3R_\theta)^2. \label{eq: theta_recursion_final}
\end{align}

Iterate \eqref{eq: theta_recursion_final} from \(\tau_{\rm mix}\) to \(t\) and use \(\|\delta_{\tau_{\rm mix}}^\theta\|\le 2R_\theta\):
\begin{align*}
\mathbb{E}\|\delta_t^\theta\|^2
\le\;&
\kappa^{t-\tau_{\rm mix}}\mathbb{E}\|\delta_{\tau_{\rm mix}}^\theta\|^2
+\frac{4\alpha(r_{\max}+2R_\theta)}{\lambda^2}\sum_{j=\tau_{\rm mix}}^{t-1}\kappa^{t-1-j}G_1^{\,j-\tau_{\rm mix}}
+\Big(\alpha^2 G_{\rm const}\Big)\sum_{j=\tau_{\rm mix}}^{t-1}\kappa^{t-1-j},
\end{align*}
where
\[
G_{\rm const}:=
\frac{\rho_0 (16\tau_{\rm mix}+6)(r_{\max}+2R_\theta)}{\lambda^2}
+(88\tau_{\rm mix}+4)(r_{\max}+3R_\theta)^2.
\]
Use
\[
\sum_{j=\tau_{\rm mix}}^{t-1}\kappa^{t-1-j}G_1^{\,j-\tau_{\rm mix}}
\le (t-\tau_{\rm mix})\,\max\{\kappa,G_1\}^{\,t-\tau_{\rm mix}},
\qquad
\sum_{j=\tau_{\rm mix}}^{t-1}\kappa^{t-1-j}\le \frac{1}{1-\kappa}=\frac{1}{2\alpha\zeta}.
\]
Then we complete the proof. 
\end{proof}

\subsection{Proof of Lemma~\ref{l: markov noise one chain}}
\label{sec: pof markov noise one chain}

\begin{proof}[Proof of Lemma~\ref{l: markov noise one chain}]
For brevity write \(g_t:=g(s_t,s_t',w_t,\theta_t)\) and \(\bar g_t:=\bar g(w_t,\theta_t)\).
Using \(\delta_t^\theta=\delta_{t-\tau_{\rm mix}}^\theta+(\theta_t-\theta_{t-\tau_{\rm mix}})\),
\begin{align*}
\mathbb{E}\big[{\delta_t^\theta}^\top(g_t-\bar g_t)\big]
&=
\mathbb{E}\big[{\delta_{t-\tau_{\rm mix}}^\theta}^\top(g_t-\bar g_t)\big]
+\mathbb{E}\big[(\theta_t-\theta_{t-\tau_{\rm mix}})^\top(g_t-\bar g_t)\big].
\end{align*}

We first note the crude drift bounds over the last \(\tau_{\rm mix}\) steps:
\begin{align}
\|\theta_t-\theta_{t-\tau_{\rm mix}}\|
&\le \sum_{i=t-\tau_{\rm mix}}^{t-1}\|\theta_{i+1}-\theta_i\|
\le \alpha\sum_{i=t-\tau_{\rm mix}}^{t-1}\|g(s_i,s_i',w_i,\theta_i)\|
\le 2\alpha\tau_{\rm mix}(r_{\max}+3R_\theta), \label{eq: theta_last_tau_onechain}\\
\|w_t-w_{t-\tau_{\rm mix}}\|
&\le \sum_{i=t-\tau_{\rm mix}}^{t-1}\|w_{i+1}-w_i\|
= \beta\sum_{i=t-\tau_{\rm mix}}^{t-1}\|f(s_i,w_i)\|
\le 2\beta\tau_{\rm mix}. \label{eq: w_last_tau_onechain}
\end{align}

Now decompose
\begin{align*}
{\delta_{t-\tau_{\rm mix}}^\theta}^\top(g_t-\bar g_t)
&=
{\delta_{t-\tau_{\rm mix}}^\theta}^\top\big(g_t-g(s_t,s_t',w_{t-\tau_{\rm mix}},\theta_{t-\tau_{\rm mix}})\big) \\
&\quad +{\delta_{t-\tau_{\rm mix}}^\theta}^\top\big(\bar g(w_{t-\tau_{\rm mix}},\theta_{t-\tau_{\rm mix}})-\bar g_t\big) \\
&\quad +{\delta_{t-\tau_{\rm mix}}^\theta}^\top\big(g(s_t,s_t',w_{t-\tau_{\rm mix}},\theta_{t-\tau_{\rm mix}})-\bar g(w_{t-\tau_{\rm mix}},\theta_{t-\tau_{\rm mix}})\big).
\end{align*}

Using Lemma~\ref{l: lipschitz of f and g one chain}, Lemma~\ref{l: delta bound one chain constant}, and \eqref{eq: theta_last_tau_onechain}--\eqref{eq: w_last_tau_onechain},
\begin{align*}
\mathbb{E}\Big[{\delta_{t-\tau_{\rm mix}}^\theta}^\top\big(g_t-g(s_t, s_t',w_{t-\tau_{\rm mix}},\theta_{t-\tau_{\rm mix}})\big)\Big]
&\le \mathbb{E}\Big[\|\delta_{t-\tau_{\rm mix}}^\theta\|\big(3\|\theta_t-\theta_{t-\tau_{\rm mix}}\|+(r_{\max}+2R_\theta)\|w_t-w_{t-\tau_{\rm mix}}\|\big)\Big]\\
&\le 2R_\theta\Big(3\cdot 2\alpha\tau_{\rm mix}(r_{\max}+3R_\theta) + (r_{\max}+2R_\theta)\cdot 2\beta\tau_{\rm mix}\Big)\\
&\le 16\alpha\tau_{\rm mix}(r_{\max}+3R_\theta)^2,
\end{align*}
and the same bound holds with \(g\) replaced by \(\bar g\).

For the remaining bias term, by the definition of \(\tau_{\rm mix}(\beta)\) we have
\(\sup_s\|P_{\tau_{\rm mix}}(\cdot|s)-\mu\|_1\le \beta\), hence
\begin{align*}
\Big\|\mathbb{E}\big[g(s_t,s_t',w_{t-\tau_{\rm mix}},\theta_{t-\tau_{\rm mix}})-\bar g(w_{t-\tau_{\rm mix}},\theta_{t-\tau_{\rm mix}})\mid \mathcal F_{t-\tau_{\rm mix}}\big]\Big\|
\le \beta\cdot \sup_{s,s'}\|g(s,s',w_{t-\tau_{\rm mix}},\theta_{t-\tau_{\rm mix}})\|
\le 2\beta(r_{\max}+3R_\theta),
\end{align*}
so
\[
\mathbb{E}\Big[{\delta_{t-\tau_{\rm mix}}^\theta}^\top\big(g(s_t, s_t',w_{t-\tau_{\rm mix}},\theta_{t-\tau_{\rm mix}})-\bar g(w_{t-\tau_{\rm mix}},\theta_{t-\tau_{\rm mix}})\big)\Big]
\le 2R_\theta\cdot 2\beta(r_{\max}+3R_\theta)
\le 4\alpha(r_{\max}+3R_\theta)^2.
\]

Finally, for the second main term,
\[
\mathbb{E}\big[(\theta_t-\theta_{t-\tau_{\rm mix}})^\top(g_t-\bar g_t)\big]
\le \|\theta_t-\theta_{t-\tau_{\rm mix}}\|\cdot \mathbb{E}\big[\|g_t\|+\|\bar g_t\|\big]
\le 2\alpha\tau_{\rm mix}(r_{\max}+3R_\theta)\cdot 4(r_{\max}+3R_\theta)
=8\alpha\tau_{\rm mix}(r_{\max}+3R_\theta)^2.
\]

Summing the pieces,
\[
\mathbb{E}\big[{\delta_t^\theta}^\top(g_t-\bar g_t)\big]
\le (16+16+4+8)\alpha\tau_{\rm mix}(r_{\max}+3R_\theta)^2
=44\alpha\tau_{\rm mix}(r_{\max}+3R_\theta)^2.
\]
\end{proof}

\subsection{Proof of Lemma~\ref{l: bound on delta w}}
\label{sec: pof bound on delta w}

\begin{proof}[Proof of Lemma~\ref{l: bound on delta w}]
Using non-expansiveness of projection, for \(t\ge 0\),
\begin{align}
\mathbb{E}\|\delta_{t+1}^w\|^2
&\le \mathbb{E}\|\delta_t^w+\beta f(s_t,w_t)\|^2 \nonumber\\
&= \mathbb{E}\|\delta_t^w\|^2 + 2\beta\,\mathbb{E}\big[(\delta_t^w)^\top f(s_t,w_t)\big] + \beta^2\mathbb{E}\|f(s_t,w_t)\|^2. \label{eq: w_expand_onechain}
\end{align}
By Lemma~\ref{l: bound on f and g onechain}, \(\beta^2\mathbb{E}\|f(s_t,w_t)\|^2\le 4\beta^2\).

For \(t\ge \tau_{\rm mix}\), write \(\bar f(w_t)=w^*-w_t=-\delta_t^w\) and decompose
\[
\mathbb{E}\big[(\delta_t^w)^\top f(s_t,w_t)\big]
=
\mathbb{E}\big[(\delta_t^w)^\top (f(s_t,w_t)-\bar f(w_t))\big] + \mathbb{E}\big[(\delta_t^w)^\top \bar f(w_t)\big]
=
\mathbb{E}\big[(\delta_t^w)^\top (f(s_t,w_t)-\bar f(w_t))\big] - \mathbb{E}\|\delta_t^w\|^2.
\]
A standard \(\tau_{\rm mix}\)-step decomposition together with
\(\|w_t-w_{t-\tau_{\rm mix}}\|\le 2\beta\tau_{\rm mix}\),
\(\|f(s_t,w_t)\|\le 2\),
\(\|\delta_t^w\|\le 2\),
and the fact \(\sup_s\|P_{\tau_{\rm mix}}(\cdot|s)-\mu\|_1\le \beta\)
yields
\[
\mathbb{E}\big[(\delta_t^w)^\top (f(s_t,w_t)-\bar f(w_t))\big]\le (16\tau_{\rm mix}+4)\beta.
\]
Therefore, for \(t\ge \tau_{\rm mix}\),
\[
\mathbb{E}\big[(\delta_t^w)^\top f(s_t,w_t)\big]
\le -\mathbb{E}\|\delta_t^w\|^2 + (16\tau_{\rm mix}+4)\beta.
\]
Plugging into \eqref{eq: w_expand_onechain} gives
\[
\mathbb{E}\|\delta_{t+1}^w\|^2
\le (1-2\beta)\mathbb{E}\|\delta_t^w\|^2 + (32\tau_{\rm mix}+12)\beta^2.
\]
Iterating from \(t=\tau_{\rm mix}\) and using \(\mathbb{E}\|\delta_{\tau_{\rm mix}}^w\|^2\le 4\),
\[
\mathbb{E}\|\delta_t^w\|^2
\le (1-2\beta)^{t-\tau_{\rm mix}}\cdot 4 + (32\tau_{\rm mix}+12)\beta^2\sum_{i=0}^{t-\tau_{\rm mix}-1}(1-2\beta)^i
\le 4e^{-2\beta(t-\tau_{\rm mix})} + (16\tau_{\rm mix}+6)\beta.
\]
\end{proof}

%% file: sections/one_decaying.tex
\section{Analysis of Single Markov Chain Algorithm with decaying step-size}
\label{appendix: one chain with decaying}
In this section, we provide the proof to Theorem~\ref{t: one chain}.

We now restate and prove the theorem for the single-chain algorithm with decaying step-size.

\begin{theorem} \label{t: one chain}
Suppose Assumption~\ref{a: markov chain} holds. Let $a,c_0>0$, $\rho_0\in(0,1]$, and choose step-sizes
\[
\alpha_t=\frac{a}{(t+c_0)^\xi},\qquad \beta_t=\rho_0\alpha_t=\frac{\rho_0 a}{(t+c_0)^\xi}.
\]
Let
\begin{align*}
c_{1,\lambda}:=& \eta-\frac{\lambda^2(r_{\rm max}+2R_\theta)}{2}, \\
c_{2,\lambda}:=& \frac{r_{\rm max}+2R_\theta}{\lambda^2},\\
\lambda^2\le & \frac{2\eta}{r_{\rm max}+2R_\theta}.    
\end{align*}
Define
\(
\Delta_T:=(T+c_0)^{1-\xi}-(\tau_{\rm mix}+c_0)^{1-\xi},
\)
and \(G_1(T)\) as in \eqref{eq: def G1}, \(G_2(T)\) as in \eqref{eq:G2}, \(\Gamma_0\) as in \eqref{eq:Gamma0}, \(\Gamma_1\) as in \eqref{eq:Gamma1}. Then:
\begin{enumerate}
\item If $\xi\in(0,1)$ and
\begin{small}
\[
c_0 \;\ge\; \max\left\{\left(\frac{\xi}{a c_{1,\lambda}}\right)^{\!\frac{1}{1-\xi}},\;
\left(\frac{\xi}{\rho_0 a}\right)^{\!\frac{1}{1-\xi}}\right\},
\]
\end{small}
then
\begin{small}
\begin{align*}
& \mathbb{E}\big[\|\delta_T^\theta\|^2\big]\\
\le\;&
4R_\theta^2\cdot
\exp\!\left(-\frac{a c_{1,\lambda}}{1-\xi}\Delta_T\right)
\;+\;
\frac{a\big(G_1(T)+\rho_0 c_{2,\lambda}G_2(T)\big)}{c_{1,\lambda}(T+c_0)^\xi}\\
&\quad
+
\frac{4a c_{2,\lambda}}{1-\xi}
\exp\!\left(\frac{a c_{1,\lambda}}{c_0^\xi}\right)
\Delta_T
\exp\!\left(
-\frac{a}{1-\xi}\rho_{\rm min}\Delta_T
\right).
\end{align*}
\end{small}
where \(\rho_{\rm min} = \min\{c_{1,\lambda},\,2\rho_0\}\)

\item If $\xi=1$, $a\eta<1$, and $\lambda^2\ge \frac{2\eta-4\rho_0}{r_{\rm max}+2R_\theta}$, then
\begin{small}
\begin{align*}
\mathbb{E}\big[\|\delta_T^\theta\|^2\big]
\le\;&
4R_\theta^2\cdot \left(\frac{\tau_{\rm mix}+c_0}{T+c_0}\right)^{a c_{1,\lambda}}
\;+\;
\frac{\Gamma_1+\Gamma_0G_1(T)}{(T+c_0)^{a c_{1,\lambda}}},
\end{align*}
\end{small}
where $\Gamma_0$ and $\Gamma_1$ are defined in Appendix~\ref{appendix: one chain with decaying}.
\end{enumerate}
\end{theorem}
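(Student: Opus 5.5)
The plan is to rerun the constant-stepsize single-chain argument of Theorem~\ref{t: one chain constant}, now with $\alpha_t=a/(t+c_0)^\xi$ and $\beta_t=\rho_0\alpha_t$, and then solve the resulting coupled scalar recursions with decaying stepsizes. This is the same route taken for the double-chain case in Theorem~\ref{t: linear markov with decaying stepsize}.

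\textbf{The $w$-recursion.} First I prove a decaying analogue of Lemma~\ref{l: bound on delta w}. Non-expansiveness of the projection, the mean field $\bar f(w)=-\delta^w$, and the $\tau_{\rm mix}$-step decomposition give
\[
\mathbb{E}\|\delta_{t+1}^w\|^2\le(1-2\beta_t)\mathbb{E}\|\delta_t^w\|^2+c\,\beta_t\sum_{i=t-\tau_{\rm mix}}^{t-1}\beta_i .
\]
The decomposition uses $\|w_t-w_{t-\tau_{\rm mix}}\|\le 2\sum_i\beta_i$, the bound $\|\delta^w\|\le 2$ from Lemma~\ref{l: delta bound one chain constant}, and mixing accuracy $\alpha_T\le\alpha_t$. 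I then use Lemma~\ref{l: sum alpha bound} to replace $\sum_i\beta_i$ by $O(L_1\beta_t\log(1/\alpha_T))$. Unrolling as in Step 2 of the double-chain proof gives two parts: a transient $4\exp(-\frac{2\rho_0 a}{1-\xi}\Delta_t)$, and a steady-state part of order $G_2(T)\alpha_t$. The steady-state part comes from Lemma~\ref{l: dynamic system with decaying stepsize} with $c_1=2\rho_0$, $c_2=a$, which is where the condition $c_0\ge(\xi/(\rho_0 a))^{1/(1-\xi)}$ enters.

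\textbf{The $\theta$-recursion.} The expansion \eqref{eq: theta_decomp} stays the same. Step 2 of the constant-stepsize proof (gradient splitting plus Young's inequality with parameter $\lambda$) gives the drift
\[
-2\alpha_t c_{1,\lambda}\mathbb{E}\|\delta^\theta_t\|^2+\alpha_t c_{2,\lambda}\mathbb{E}\|\delta^w_t\|^2 .
\]
The Markov-noise term is handled as in Lemma~\ref{l: markov noise one chain}, with $\alpha\tau_{\rm mix}$ replaced by $\sum_{i=t-\tau_{\rm mix}}^{t-1}\alpha_i$; this is legitimate because projection keeps every quantity bounded by $R_\theta$ and $R_w$. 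The second moment is $O(\alpha_t^2)$. Together these give
\[
\mathbb{E}\|\delta_{t+1}^\theta\|^2\le(1-c_{1,\lambda}\alpha_t)\mathbb{E}\|\delta_t^\theta\|^2+\alpha_t c_{2,\lambda}\mathbb{E}\|\delta_t^w\|^2+G_1(T)\alpha_t^2 .
\]
I keep only $c_{1,\lambda}$ rather than $2c_{1,\lambda}$ to leave slack. Substituting the $w$-bound yields three forcing terms, which I solve in turn.
\begin{itemize}
\item The initial condition $\|\delta^\theta_{\tau_{\rm mix}}\|\le 2R_\theta$ gives the factor $4R_\theta^2\prod_j(1-c_{1,\lambda}\alpha_j)$, bounded by an integral comparison.
\item The $\alpha_t^2$ forcing, together with $\rho_0 c_{2,\lambda}G_2\alpha_t^2$, is handled by Lemma~\ref{l: dynamic system with decaying stepsize} with $c_1=c_{1,\lambda}$. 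This gives $a(G_1+\rho_0c_{2,\lambda}G_2)/(c_{1,\lambda}(T+c_0)^\xi)$.
\item The cross term is $\sum_k\alpha_k c_{2,\lambda}4e^{-2\rho_0 A_k}e^{-c_{1,\lambda}(A_T-A_{k+1})}$, where $A_k=\sum_{j<k}\alpha_j$.
\end{itemize}

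\textbf{The main obstacle} is this cross term. I would bound each exponent from above by $\rho_{\min}$ times the total elapsed "time", so the sum is at most $e^{-\rho_{\min}(A_T-A_{\tau_{\rm mix}})}\sum_k\alpha_k e^{c_{1,\lambda}\alpha_k}$. The extra factor $e^{c_{1,\lambda}\alpha_k}$ comes from the index shift $k$ versus $k+1$, and gives the $\exp(ac_{1,\lambda}/c_0^\xi)$ factor. Bounding $\sum\alpha_k\le a\Delta_T/(1-\xi)$ then reproduces the stated $\Delta_T e^{-\frac{a}{1-\xi}\rho_{\min}\Delta_T}$ form.

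\textbf{The case $\xi=1$.} Here the products become powers $((k+c_0)/(T+c_0))^{ac_{1,\lambda}}$. The assumptions $a\eta<1$ and $\lambda^2\ge(2\eta-4\rho_0)/(r_{\max}+2R_\theta)$ ensure $ac_{1,\lambda}<1$ and $c_{1,\lambda}\le 2\rho_0$. The second condition means the $w$-transient decays at least as fast as the $\theta$-contraction. Hence every forcing sum is dominated by a constant times $(T+c_0)^{-ac_{1,\lambda}}$, and the resulting constants are collected into $\Gamma_0$ and $\Gamma_1$.
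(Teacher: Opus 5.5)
Your proposal follows the paper's proof essentially step for step. The shared pieces are: the decaying-stepsize $w$-lemma (transient $4\exp(-\frac{2\rho_0 a}{1-\xi}\Delta_t)$ plus a $\rho_0 G_2(T)\alpha_t$ floor via Lemma~\ref{l: dynamic system with decaying stepsize}); the gradient-splitting/Young drift with $c_{1,\lambda},c_{2,\lambda}$; the Markov-noise bound with $\sum_{i=t-\tau_{\rm mix}}^{t-1}\alpha_i$ in place of $\alpha\tau_{\rm mix}$; the relaxation to $1-c_{1,\lambda}\alpha_t$; the three-way split of the unrolled recursion; the convex-combination bound on the cross-term exponent; and the power-law sums for $\xi=1$ using $ac_{1,\lambda}<1$ and $c_{1,\lambda}\le 2\rho_0$. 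Doing the cross-term comparison in cumulative-stepsize coordinates $A_k$ rather than in $u(t)=(t+c_0)^{1-\xi}$ (as in Lemma~\ref{l: super linear}) is only cosmetic.

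A few constant-level slips, which the paper's own write-up shares, keep your sketch from giving the statement exactly as written.

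\textbf{The factor of 2 in $S_T$.} Once you relax to $1-c_{1,\lambda}\alpha_t$, Lemma~\ref{l: dynamic system with decaying stepsize} with $c_1=c_{1,\lambda}$, $c_2=a$ gives $S_T\le 2a/(c_{1,\lambda}(T+c_0)^\xi)$, not $a/(c_{1,\lambda}(T+c_0)^\xi)$. It also needs $(\tau_{\rm mix}+c_0)^{1-\xi}\ge 2\xi/(ac_{1,\lambda})$, which the stated hypothesis on $c_0$ does not imply in general. To get exactly the stated term under the stated hypothesis:
\begin{itemize}
\item unroll the unrelaxed recursion with factor $1-2c_{1,\lambda}\alpha_t$;
\item apply the lemma to the whole $\alpha_t^2$-forcing (both $G_1(T)\alpha_t^2$ and $\rho_0 c_{2,\lambda}G_2(T)\alpha_t^2$) with $c_1=2c_{1,\lambda}$;
\item weaken $\prod(1-2c_{1,\lambda}\alpha_i)\le\prod(1-c_{1,\lambda}\alpha_i)$ only in the initial-condition and exponential cross terms. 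This is legitimate once $2c_{1,\lambda}\alpha_t\le 1$, which large $c_0$ guarantees.
\end{itemize}

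\textbf{The direction of the stepsize-sum bound.} Because $\alpha_k$ is decreasing, $\sum_{k=\tau_{\rm mix}}^{T-1}\alpha_k\ge a\Delta_T/(1-\xi)$, so the last step of your cross-term bound goes the wrong way. The correct estimate is $\sum_{k=\tau_{\rm mix}}^{T-1}\alpha_k\le \alpha_{\tau_{\rm mix}}+a\Delta_T/(1-\xi)$, which adds at most $a/c_0^\xi$ to the prefactor. Your conversion $A_T-A_{\tau_{\rm mix}}\ge a\Delta_T/(1-\xi)$ inside the exponent is in the correct direction and is fine.

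\textbf{The boundary case for $\xi=1$.} When $\rho_0 a<1/2$, the argument needs $c_{1,\lambda}<2\rho_0$ strictly. At equality the $w$-transient sum behaves like $\sum_j (j+c_0)^{-1}\sim\log T$, which is why $\Gamma_1$ carries the factor $1/(2\rho_0 a-ac_{1,\lambda})$.

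None of these affects the rates or the structure of your argument.
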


\begin{proof}[Proof of Theorem~\ref{t: one chain}]
As before, the dynamic of $\theta_t$ satisfies
\begin{align*}
\mathbb{E}\|\delta_{t+1}^\theta\|^2
\le\;& \mathbb{E}\|\delta_t^\theta\|^2
+2\alpha_t\mathbb{E}\!\left[{\delta_t^\theta}^\top g(s_t,s_t',w_t,\theta_t)\right]
+\alpha_t^2\mathbb{E}\!\left[\|g(s_t,s_t',w_t,\theta_t)\|^2\right]\\
=\;& \mathbb{E}\|\delta_t^\theta\|^2
+\underbrace{2\alpha_t\mathbb{E}\!\left[{\delta_t^\theta}^\top\big(g(s_t,s_t',w_t,\theta_t)-\bar g(w_t,\theta_t)\big)\right]}_{I_1}\\
&\quad +\underbrace{2\alpha_t\mathbb{E}\!\left[{\delta_t^\theta}^\top\bar g(w_t,\theta_t)\right]}_{I_2}
+\underbrace{\alpha_t^2\mathbb{E}\!\left[\|g(s_t,s_t',w_t,\theta_t)\|^2\right]}_{I_3},
\end{align*}
where
\[
\bar g(w_t,\theta_t)
:=\mathbb{E}_{s\sim\mu}\, g(s,s',w_t,\theta_t)
= \Phi^\top D\big(R+P\Phi\theta_t-\Phi\theta_t\big)-\mu^\top(R+\Phi\theta_t)w_t.
\]

\textbf{Term $I_1$:}
We use the following Markov-noise bound.

\begin{lemma}
\label{l: markov noise one chain decaying}
Suppose $t\ge \tau_{\rm mix}$. Then
\[
\mathbb{E}\!\left[{\delta_t^\theta}^\top\big(g(s_t,s_t',w_t,\theta_t)-\bar g(w_t,\theta_t)\big)\right]
\le \rho_1 (r_{\rm max}+3R_\theta)^2 \sum_{i=t-\tau_{\rm mix}}^{t-1}\alpha_i,
\]
where $\rho_1:=28+8\rho_0$.
\end{lemma}

The proof is deferred to Section~\ref{sec: pof markov noise one chain decaying}. Using Lemma~\ref{l: markov noise one chain decaying},
\[
I_1 \le 2\rho_1\,\alpha_t (r_{\rm max}+3R_\theta)^2 \sum_{i=t-\tau_{\rm mix}}^{t-1}\alpha_i.
\]

\textbf{Term $I_2$:}
By the same argument as Eq.~(\ref{eq: theta_drift_term}) in Appendix~\ref{appendix: one chain constant}, we have
\[
I_2 \le -2\alpha_t c_{1,\lambda}\,\mathbb{E}\|\delta_t^\theta\|^2
+\alpha_t c_{2,\lambda}\,\mathbb{E}\|\delta_t^w\|^2,
\]
where $c_{1,\lambda}=\eta-\lambda^2(r_{\rm max}+2R_\theta)/2$ and $c_{2,\lambda}=(r_{\rm max}+2R_\theta)/\lambda^2$.

\textbf{Term $I_3$:}
By Lemma~\ref{l: bound on f and g onechain},
\[
I_3 \le 4\alpha_t^2 (r_{\rm max}+3R_\theta)^2.
\]

Combining the three terms yields, for $t\ge \tau_{\rm mix}$,
\begin{align*}
\mathbb{E}\|\delta_{t+1}^\theta\|^2
\le\;& (1-2\alpha_t c_{1,\lambda})\,\mathbb{E}\|\delta_t^\theta\|^2
+\alpha_t c_{2,\lambda}\,\mathbb{E}\|\delta_t^w\|^2
+4\alpha_t^2(r_{\rm max}+3R_\theta)^2\\
&\quad +2\rho_1\,\alpha_t (r_{\rm max}+3R_\theta)^2 \sum_{i=t-\tau_{\rm mix}}^{t-1}\alpha_i.
\end{align*}
Using Lemma~\ref{l: sum alpha bound}, $\sum_{i=t-\tau_{\rm mix}}^{t-1}\alpha_i \le 2L_1\alpha_t(\log(1/\alpha_t)+1)$,
we further obtain
\begin{align}
\label{eq: theta recursion decaying}
\mathbb{E}\|\delta_{t+1}^\theta\|^2
\le\;& (1-2\alpha_t c_{1,\lambda})\,\mathbb{E}\|\delta_t^\theta\|^2
+\alpha_t c_{2,\lambda}\,\mathbb{E}\|\delta_t^w\|^2
+G_1(T)\alpha_t^2,
\end{align}
where we used the monotonicity of $\alpha_t$ to bound $\log(1/\alpha_t)\le \log(1/\alpha_T)$ for all $t\le T$, and defined
\begin{align}
\label{eq: def G1}
G_1(T)
:=\Big(4\rho_1 L_1\big(\log(T+c_0)-\log a+1\big)+4\Big)\,(r_{\rm max}+3R_\theta)^2.
\end{align}

For convenience in the iteration, we relax the contraction factor as
\[
(1-2\alpha_t c_{1,\lambda}) \le (1-\alpha_t c_{1,\lambda})
\qquad(\text{since }\alpha_t c_{1,\lambda}\ge 0),
\]
so from \eqref{eq: theta recursion decaying},
\begin{align}
\label{eq: theta recursion decaying relaxed}
\mathbb{E}\|\delta_{t+1}^\theta\|^2
\le (1-\alpha_t c_{1,\lambda})\,\mathbb{E}\|\delta_t^\theta\|^2
+\alpha_t c_{2,\lambda}\,\mathbb{E}\|\delta_t^w\|^2
+G_1(T)\alpha_t^2.
\end{align}

Iterating \eqref{eq: theta recursion decaying relaxed} from $\tau_{\rm mix}$ to $T$ gives
\begin{align}
\label{eq: theta unroll}
\mathbb{E}\|\delta_T^\theta\|^2
\le\;&
\underbrace{\prod_{i=\tau_{\rm mix}}^{T-1}(1-c_{1,\lambda}\alpha_i)}_{I_1'}
\,\mathbb{E}\|\delta_{\tau_{\rm mix}}^\theta\|^2
+\sum_{j=\tau_{\rm mix}}^{T-1}\left(\prod_{i=j+1}^{T-1}(1-c_{1,\lambda}\alpha_i)\right)\alpha_j c_{2,\lambda}\,\mathbb{E}\|\delta_j^w\|^2\nonumber\\
&\quad +G_1(T)\sum_{j=\tau_{\rm mix}}^{T-1}\alpha_j^2\left(\prod_{i=j+1}^{T-1}(1-c_{1,\lambda}\alpha_i)\right).
\end{align}

We first bound $I_1'$. Using $1-x\le e^{-x}$,
\begin{align}
\label{eq: I1prime bound}
I_1'
\le \exp\!\left(-c_{1,\lambda}\sum_{i=\tau_{\rm mix}}^{T-1}\alpha_i\right)
=\exp\!\left(-a c_{1,\lambda}\sum_{i=\tau_{\rm mix}}^{T-1}\frac{1}{(i+c_0)^\xi}\right)
\le \exp\!\left(-a c_{1,\lambda}\int_{\tau_{\rm mix}}^{T}\frac{dx}{(x+c_0)^\xi}\right),
\end{align}
hence
\begin{align}
\label{eq: I1prime closed form}
I_1'
\le
\begin{cases}
\left(\frac{\tau_{\rm mix}+c_0}{T+c_0}\right)^{a c_{1,\lambda}}, & \xi=1,\\[2mm]
\exp\!\left(-\dfrac{a c_{1,\lambda}}{1-\xi}\Big((T+c_0)^{1-\xi}-(\tau_{\rm mix}+c_0)^{1-\xi}\Big)\right), & \xi\in(0,1).
\end{cases}
\end{align}

Next we control $\mathbb{E}\|\delta_j^w\|^2$.

\begin{lemma}
\label{l: bound on delta w decaying}
Let
\begin{equation}
\label{eq:G2}
G_2(T):=64L_1\big(\log(T+c_0)-\log(\rho_0 a)+1\big)+12.    
\end{equation}
Then for all $t\in[\tau_{\rm mix},T]$:
\begin{enumerate}
\item If $\xi\in(0,1)$ and $c_0\ge \left(\frac{\xi}{\rho_0 a}\right)^{\!\frac{1}{1-\xi}}$, then
\[
\mathbb{E}\|\delta_t^w\|^2
\le
4\exp\!\left(-\frac{2\rho_0 a}{1-\xi}\Big((t+c_0)^{1-\xi}-(\tau_{\rm mix}+c_0)^{1-\xi}\Big)\right)
\;+\; G_2(T)\cdot \frac{\rho_0 a}{(t+c_0)^\xi}.
\]
\item If $\xi=1$, then:
\begin{enumerate}
\item If $\rho_0 a\in(0,1/2)$, then
\[
\mathbb{E}\|\delta_t^w\|^2
\le
4\left(\frac{\tau_{\rm mix}+c_0}{t+c_0}\right)^{2\rho_0 a}
\;+\;
G_2(T)\cdot \frac{4\rho_0^2a^2}{(1-2\rho_0 a)(t+c_0)^{2\rho_0 a}}.
\]
\item If $\rho_0 a=1/2$, then
\[
\mathbb{E}\|\delta_t^w\|^2
\le
4\left(\frac{\tau_{\rm mix}+c_0}{t+c_0}\right)^{2\rho_0 a}
\;+\;
G_2(T)\cdot \frac{4\rho_0^2a^2\log(t+c_0)}{t+c_0}.
\]
\item If $\rho_0 a\in(1/2,\infty)$, then
\[
\mathbb{E}\|\delta_t^w\|^2
\le
4\left(\frac{\tau_{\rm mix}+c_0}{t+c_0}\right)^{2\rho_0 a}
\;+\;
G_2(T)\cdot \frac{4e\rho_0^2a^2}{(2\rho_0 a-1)(t+c_0)}.
\]
\end{enumerate}
\end{enumerate}
\end{lemma}

The proof is deferred to Section~\ref{sec: pof bound on delta w decaying}.

We now split into two cases.

\paragraph{Case 1: $\xi=1$.}
Using \eqref{eq: theta unroll} and \eqref{eq: I1prime closed form} (with $\xi=1$),
\begin{align*}
\mathbb{E}\|\delta_T^\theta\|^2
\le\;&
\left(\frac{\tau_{\rm mix}+c_0}{T+c_0}\right)^{a c_{1,\lambda}}
\mathbb{E}\|\delta_{\tau_{\rm mix}}^\theta\|^2
+ \underbrace{\sum_{j=\tau_{\rm mix}}^{T-1}\left(\frac{j+1+c_0}{T+c_0}\right)^{a c_{1,\lambda}}
\frac{a c_{2,\lambda}}{j+c_0}\,\mathbb{E}\|\delta_j^w\|^2}_{I_2'}\\
&\quad + G_1(T)\underbrace{\sum_{j=\tau_{\rm mix}}^{T-1}\frac{a^2}{(j+c_0)^2}
\left(\frac{j+1+c_0}{T+c_0}\right)^{a c_{1,\lambda}}}_{I_3'}.
\end{align*}

\emph{Bound on $I_3'$.}
Since $(j+1+c_0)/(j+c_0)\le 1+1/c_0\le 2$ when $c_0\ge 1$,
\begin{align*}
I_3'
&=
\frac{a^2}{(T+c_0)^{a c_{1,\lambda}}}\sum_{j=\tau_{\rm mix}}^{T-1}
\frac{(j+1+c_0)^{a c_{1,\lambda}}}{(j+c_0)^2}
\le
\frac{4a^2}{(T+c_0)^{a c_{1,\lambda}}}\sum_{j=\tau_{\rm mix}}^{T-1}(j+1+c_0)^{a c_{1,\lambda}-2}\\
&\le
\frac{4a^2}{(1-a c_{1,\lambda})(T+c_0)^{a c_{1,\lambda}}}
\qquad(a c_{1,\lambda}\le a\eta<1).
\end{align*}
Define
\begin{equation}
\label{eq:Gamma0}   
\Gamma_0:=\frac{4a^2}{1-a c_{1,\lambda}}.
\end{equation}
Then $I_3'\le \Gamma_0/(T+c_0)^{a c_{1,\lambda}}$.

\emph{Bound on $I_2'$.}
Apply Lemma~\ref{l: bound on delta w decaying} (case $\xi=1$). If $\rho_0 a\in(0,1/2)$, then
\begin{align*}
I_2'
\le\;&
\sum_{j=\tau_{\rm mix}}^{T-1}
\left(\frac{j+1+c_0}{T+c_0}\right)^{a c_{1,\lambda}}
\frac{a c_{2,\lambda}}{j+c_0}
\left[
4\left(\frac{\tau_{\rm mix}+c_0}{j+c_0}\right)^{2\rho_0 a}
+
G_2(T)\cdot\frac{4\rho_0^2a^2}{(1-2\rho_0 a)(j+c_0)^{2\rho_0 a}}
\right]\\
=\;&
\frac{a c_{2,\lambda}}{(T+c_0)^{a c_{1,\lambda}}}
\left(4(\tau_{\rm mix}+c_0)^{2\rho_0 a}+\frac{4\rho_0^2a^2}{1-2\rho_0 a}G_2(T)\right)
\sum_{j=\tau_{\rm mix}}^{T-1}\frac{(j+1+c_0)^{a c_{1,\lambda}}}{(j+c_0)^{1+2\rho_0 a}}\\
\le\;&
\frac{2^{a c_{1,\lambda}} a c_{2,\lambda}}{(T+c_0)^{a c_{1,\lambda}}}
\left(4(\tau_{\rm mix}+c_0)^{2\rho_0 a}+\frac{4\rho_0^2a^2}{1-2\rho_0 a}G_2(T)\right)
\sum_{j=\tau_{\rm mix}}^{T-1}(j+c_0)^{a c_{1,\lambda}-1-2\rho_0 a}\\
\le\;&
\frac{2^{a c_{1,\lambda}} a c_{2,\lambda}}{(T+c_0)^{a c_{1,\lambda}}}
\left(4(\tau_{\rm mix}+c_0)^{2\rho_0 a}+\frac{4\rho_0^2a^2}{1-2\rho_0 a}G_2(T)\right)
\cdot \frac{1}{2\rho_0 a-a c_{1,\lambda}},
\end{align*}
where the last step uses $a c_{1,\lambda}<2\rho_0 a$, which follows from
$\lambda^2\ge \frac{2\eta-4\rho_0}{r_{\rm max}+2R_\theta}$ (equivalently, $c_{1,\lambda}\le 2\rho_0$).
The cases $\rho_0 a=1/2$ and $\rho_0 a>1/2$ are handled similarly, yielding the same $(T+c_0)^{-a c_{1,\lambda}}$ scaling.
Define
\begin{equation}
\label{eq:Gamma1}    
\Gamma_1:=
\begin{cases}
\displaystyle
\frac{2^{a c_{1,\lambda}} a c_{2,\lambda}}{2\rho_0 a-a c_{1,\lambda}}
\left(4(\tau_{\rm mix}+c_0)^{2\rho_0 a}+\frac{4\rho_0^2a^2}{1-2\rho_0 a}G_2(T)\right), & \rho_0 a\in(0,1/2),\\[3mm]
\displaystyle
\frac{4 a c_{2,\lambda}}{1-a c_{1,\lambda}}
\left(4(\tau_{\rm mix}+c_0)^{2\rho_0 a}+4\rho_0^2a^2\log(T+c_0)\,G_2(T)\right), & \rho_0 a=1/2,\\[3mm]
\displaystyle
\frac{4 a c_{2,\lambda}}{1-a c_{1,\lambda}}
\left(4(\tau_{\rm mix}+c_0)^{2\rho_0 a}+\frac{4e\rho_0^2a^2}{2\rho_0 a-1}G_2(T)\right), & \rho_0 a\in(1/2,\infty).
\end{cases}
\end{equation}
Then $I_2'\le \Gamma_1/(T+c_0)^{a c_{1,\lambda}}$.

Putting the bounds together and using $\mathbb{E}\|\delta_{\tau_{\rm mix}}^\theta\|^2\le 4R_\theta^2$, we obtain
\[
\mathbb{E}\|\delta_T^\theta\|^2
\le
4R_\theta^2\left(\frac{\tau_{\rm mix}+c_0}{T+c_0}\right)^{a c_{1,\lambda}}
+\frac{\Gamma_1+\Gamma_0G_1(T)}{(T+c_0)^{a c_{1,\lambda}}}.
\]

\paragraph{Case 2: $\xi\in(0,1)$.}

Recall from Lemma~\ref{l: bound on delta w decaying} (case $\xi\in(0,1)$) that for all $t\in[\tau_{\rm mix},T]$,
\begin{equation}
\label{eq: w bound split}
\mathbb{E}\|\delta_t^w\|^2
\le
4\exp\!\left(-A\Big((t+c_0)^{1-\xi}-(\tau_{\rm mix}+c_0)^{1-\xi}\Big)\right)
\;+\;\rho_0 G_2(T)\,\alpha_t,
\qquad
A:=\frac{2\rho_0 a}{1-\xi}.
\end{equation}

Define
\[
\Delta_T:=(T+c_0)^{1-\xi}-(\tau_{\rm mix}+c_0)^{1-\xi},
\qquad
S_T:=\sum_{j=\tau_{\rm mix}}^{T-1}\alpha_j^2\Big(\prod_{i=j+1}^{T-1}(1-c_{1,\lambda}\alpha_i)\Big).
\]
We also define the exponential-part sum
\begin{equation}
\label{eq: STexp def}
S_T^{\exp}
:=
\sum_{j=\tau_{\rm mix}}^{T-1}
\alpha_j
\Big(\prod_{i=j+1}^{T-1}(1-c_{1,\lambda}\alpha_i)\Big)
\exp\!\left(-A\Big((j+c_0)^{1-\xi}-(\tau_{\rm mix}+c_0)^{1-\xi}\Big)\right).
\end{equation}

\begin{lemma}
\label{l: super linear}
Let $\xi\in(0,1)$. Then
\begin{equation}
\label{eq: STexp bound}
S_T^{\exp}
\le
\exp\!\Big(\tfrac{a c_{1,\lambda}}{c_0^\xi}\Big)\cdot
\frac{a}{1-\xi}\,\Delta_T\,
\exp\!\left(
-\frac{a}{1-\xi}\min\{c_{1,\lambda},\,2\rho_0\}\,\Delta_T
\right).
\end{equation}
\end{lemma}
The proof is deferred to Section~\ref{sec: pof super linear}.

Now plug \eqref{eq: w bound split} into the second term of \eqref{eq: theta unroll}. Using
$c_{2,\lambda}\alpha_j\,\rho_0 G_2(T)\alpha_j = \rho_0 c_{2,\lambda}G_2(T)\alpha_j^2$,
we obtain
\begin{align*}
&\sum_{j=\tau_{\rm mix}}^{T-1}\Big(\prod_{i=j+1}^{T-1}(1-c_{1,\lambda}\alpha_i)\Big)\alpha_j c_{2,\lambda}\,\mathbb{E}\|\delta_j^w\|^2\\
\le\;&
4c_{2,\lambda}\,S_T^{\exp}
\;+\;\rho_0 c_{2,\lambda}G_2(T)\,S_T.
\end{align*}
Therefore, by \eqref{eq: theta unroll} and \eqref{eq: I1prime closed form} (with $\xi\in(0,1)$),
\begin{align*}
\mathbb{E}\|\delta_T^\theta\|^2
\le\;&
\exp\!\left(-\frac{a c_{1,\lambda}}{1-\xi}\Delta_T\right)\mathbb{E}\|\delta_{\tau_{\rm mix}}^\theta\|^2
\;+\;\big(G_1(T)+\rho_0 c_{2,\lambda}G_2(T)\big)\,S_T
\;+\;4c_{2,\lambda}\,S_T^{\exp}.
\end{align*}

By Lemma~\ref{l: dynamic system with decaying stepsize} and the condition
$c_0\ge \left(\frac{\xi}{a c_{1,\lambda}}\right)^{\!\frac{1}{1-\xi}}$, we have
\[
S_T \le \frac{a}{c_{1,\lambda}(T+c_0)^\xi}.
\]
Applying Lemma~\ref{l: super linear} (i.e., \eqref{eq: STexp bound}) and using
$\mathbb{E}\|\delta_{\tau_{\rm mix}}^\theta\|^2\le 4R_\theta^2$ yields the following bound for Case~2:
\begin{align*}
\mathbb{E}\|\delta_T^\theta\|^2
\le\;&
4R_\theta^2\exp\!\left(-\frac{a c_{1,\lambda}}{1-\xi}\Delta_T\right)
+\frac{a\big(G_1(T)+\rho_0 c_{2,\lambda}G_2(T)\big)}{c_{1,\lambda}(T+c_0)^\xi}\\
&\quad
+4c_{2,\lambda}\exp\!\Big(\tfrac{a c_{1,\lambda}}{c_0^\xi}\Big)\cdot
\frac{a}{1-\xi}\,\Delta_T\,
\exp\!\left(
-\frac{a}{1-\xi}\min\{c_{1,\lambda},\,2\rho_0\}\,\Delta_T
\right).
\end{align*}
\end{proof}

\subsection{Proof of Lemma~\ref{l: markov noise one chain decaying}}
\label{sec: pof markov noise one chain decaying}

\begin{proof}[Proof of Lemma~\ref{l: markov noise one chain decaying}]
The iteration of \(w\) suggests that
\begin{equation}
\label{eq: w in last tau one chain decaying}
\begin{aligned}
\|w_t-w_{t-\tau_{\rm mix}}\|
&\le \sum_{i=t-\tau_{\rm mix}}^{t-1}\|w_{i+1}-w_i\|
\le \sum_{i=t-\tau_{\rm mix}}^{t-1}\beta_i\|f(s_i,w_i)\| \\
&\le 2\sum_{i=t-\tau_{\rm mix}}^{t-1}\beta_i.
\end{aligned}
\end{equation}
For simplicity, denote $g_t(w_t,\theta_t):=g(s_t,s_t',w_t,\theta_t)$.
First, by telescoping and Lemma~\ref{l: bound on f and g onechain},
\begin{equation}
\label{eq: theta in last tau one chain decaying}
\begin{aligned}
\|\theta_t-\theta_{t-\tau_{\rm mix}}\|
&\le \sum_{i=t-\tau_{\rm mix}}^{t-1}\|\theta_{i+1}-\theta_i\|
\le \sum_{i=t-\tau_{\rm mix}}^{t-1}\alpha_i\|g(s_i,s_i',w_i,\theta_i)\|\\
&\le (2r_{\rm max}+6R_\theta)\sum_{i=t-\tau_{\rm mix}}^{t-1}\alpha_i
=2(r_{\rm max}+3R_\theta)\sum_{i=t-\tau_{\rm mix}}^{t-1}\alpha_i.
\end{aligned}
\end{equation}

Decompose
\begin{align*}
&\mathbb{E}\!\left[{\delta_t^\theta}^\top\big(g_t(w_t,\theta_t)-\bar g(w_t,\theta_t)\big)\right]\\
=\;&
\mathbb{E}\!\left[{\delta_{t-\tau_{\rm mix}}^\theta}^\top\big(g_t(w_t,\theta_t)-g_t(w_{t-\tau_{\rm mix}},\theta_{t-\tau_{\rm mix}})\big)\right]
+\mathbb{E}\!\left[{\delta_{t-\tau_{\rm mix}}^\theta}^\top\big(\bar g(w_{t-\tau_{\rm mix}},\theta_{t-\tau_{\rm mix}})-\bar g(w_t,\theta_t)\big)\right]\\
&\quad +\mathbb{E}\!\left[{\delta_{t-\tau_{\rm mix}}^\theta}^\top\big(g_t(w_{t-\tau_{\rm mix}},\theta_{t-\tau_{\rm mix}})-\bar g(w_{t-\tau_{\rm mix}},\theta_{t-\tau_{\rm mix}})\big)\right]
+\mathbb{E}\!\left[(\theta_t-\theta_{t-\tau_{\rm mix}})^\top\big(g_t(w_t,\theta_t)-\bar g(w_t,\theta_t)\big)\right]\\
=:\;& I_1+I_2+I_3+I_4.
\end{align*}

\textbf{Terms $I_1$ and $I_2$.}
By Lemma~\ref{l: lipschitz of f and g one chain}, $\|g(\cdot,\cdot,w,\theta_1)-g(\cdot,\cdot,w,\theta_2)\|\le 2\|\theta_1-\theta_2\|$
and $\|g(\cdot,\cdot,w_1,\theta)-g(\cdot,\cdot,w_2,\theta)\|\le (r_{\rm max}+2R_\theta)\|w_1-w_2\|$.
Also, by \eqref{eq: w in last tau one chain decaying} and $\beta_i=\rho_0\alpha_i$,
\[
\|w_t-w_{t-\tau_{\rm mix}}\|
\le 2\sum_{i=t-\tau_{\rm mix}}^{t-1}\beta_i
=2\rho_0\sum_{i=t-\tau_{\rm mix}}^{t-1}\alpha_i.
\]
Thus, using $\|\delta_{t-\tau_{\rm mix}}^\theta\|\le 2R_\theta$,
\begin{align*}
I_1
&\le \mathbb{E}\!\left[\|\delta_{t-\tau_{\rm mix}}^\theta\|\cdot 2\|\theta_t-\theta_{t-\tau_{\rm mix}}\|\right]
+\mathbb{E}\!\left[\|\delta_{t-\tau_{\rm mix}}^\theta\|\cdot (r_{\rm max}+2R_\theta)\|w_t-w_{t-\tau_{\rm mix}}\|\right]\\
&\le 2R_\theta\cdot 4(r_{\rm max}+3R_\theta)\sum_{i=t-\tau_{\rm mix}}^{t-1}\alpha_i
+2R_\theta\cdot (r_{\rm max}+2R_\theta)\cdot 2\rho_0\sum_{i=t-\tau_{\rm mix}}^{t-1}\alpha_i\\
&\le (8+4\rho_0)R_\theta(r_{\rm max}+3R_\theta)\sum_{i=t-\tau_{\rm mix}}^{t-1}\alpha_i.
\end{align*}
The same bound applies to $I_2$ since $\bar g$ is the expectation of $g$.

\textbf{Term $I_3$.}
Conditioning on $\mathcal{F}_{t-\tau_{\rm mix}}$ and using the mixing bound
$\|P_{\tau_{\rm mix}}(\cdot\mid s)-\mu\|_1\le C\beta^{\tau_{\rm mix}}\le \beta_T\le \alpha_t$ (for $t\le T$),
\begin{align*}
I_3
&\le
\mathbb{E}\!\left[\|\delta_{t-\tau_{\rm mix}}^\theta\|\cdot \|P_{\tau_{\rm mix}}(\cdot\mid s_{t-\tau_{\rm mix}})-\mu\|_1
\cdot \sup_{s,s'}\|g(s,s',w_{t-\tau_{\rm mix}},\theta_{t-\tau_{\rm mix}})\|\right]\\
&\le 2R_\theta\cdot \alpha_t \cdot (2r_{\rm max}+6R_\theta)
=4\alpha_t R_\theta(r_{\rm max}+3R_\theta).
\end{align*}

\textbf{Term $I_4$.}
Using \eqref{eq: theta in last tau one chain decaying} and Lemma~\ref{l: bound on f and g onechain},
\begin{align*}
I_4
&\le \mathbb{E}\!\left[\|\theta_t-\theta_{t-\tau_{\rm mix}}\|\cdot \big(\|g_t(w_t,\theta_t)\|+\|\bar g(w_t,\theta_t)\|\big)\right]\\
&\le 2(r_{\rm max}+3R_\theta)\sum_{i=t-\tau_{\rm mix}}^{t-1}\alpha_i \cdot 2\cdot (2r_{\rm max}+6R_\theta)\\
&=8(r_{\rm max}+3R_\theta)^2\sum_{i=t-\tau_{\rm mix}}^{t-1}\alpha_i.
\end{align*}

Summing the four terms and using $R_\theta\le r_{\rm max}+3R_\theta$ and $\alpha_t\le \sum_{i=t-\tau_{\rm mix}}^{t-1}\alpha_i$,
\begin{align*}
\mathbb{E}\!\left[{\delta_t^\theta}^\top\big(g_t(w_t,\theta_t)-\bar g(w_t,\theta_t)\big)\right]
&\le (16+8\rho_0)R_\theta(r_{\rm max}+3R_\theta)\sum_{i=t-\tau_{\rm mix}}^{t-1}\alpha_i
+4\alpha_t R_\theta(r_{\rm max}+3R_\theta)\\
&\quad +8(r_{\rm max}+3R_\theta)^2\sum_{i=t-\tau_{\rm mix}}^{t-1}\alpha_i\\
&\le (28+8\rho_0)(r_{\rm max}+3R_\theta)^2\sum_{i=t-\tau_{\rm mix}}^{t-1}\alpha_i,
\end{align*}
which is the desired result with $\rho_1=28+8\rho_0$.
\end{proof}

\subsection{Proof of Lemma~\ref{l: bound on delta w decaying}}
\label{sec: pof bound on delta w decaying}

\begin{proof}[Proof of Lemma~\ref{l: bound on delta w decaying}]
The iterates of \(w\) suggests that
\begin{align*}
\mathbb{E}\|\delta_{t+1}^w\|^2
\le \mathbb{E}\|\delta_t^w\|^2
+2\beta_t \mathbb{E}\!\left[(\delta_t^w)^\top f(s_t,w_t)\right]
+\beta_t^2\mathbb{E}\|f(s_t,w_t)\|^2.
\end{align*}
By Lemma~\ref{l: bound on f and g onechain}, $\mathbb{E}\|f(s_t,w_t)\|^2\le 4$, hence the third term is $\le 4\beta_t^2$.

For the second term, decompose as in the constant-step proof (Appendix~\ref{appendix: one chain constant}):
\[
\mathbb{E}\!\left[(\delta_t^w)^\top f(s_t,w_t)\right]
= I_1+I_2+I_3+I_4+I_5,
\]
where the five terms are exactly those in Section~\ref{sec: pof bound on delta w} with $\beta$ replaced by $\beta_t$ (and sums over the last $\tau_{\rm mix}$ indices).
Using the same arguments but keeping the time-varying step-size, we obtain:
\begin{align*}
I_1 &\le 4\sum_{i=t-\tau_{\rm mix}}^{t-1}\beta_i,\\
I_2 &\le 4\beta_t,\\
I_3 &\le 4\sum_{i=t-\tau_{\rm mix}}^{t-1}\beta_i,\\
I_4 &\le 8\sum_{i=t-\tau_{\rm mix}}^{t-1}\beta_i,\\
I_5 &= -\mathbb{E}\|\delta_t^w\|^2,
\end{align*}
and therefore
\[
\mathbb{E}\!\left[(\delta_t^w)^\top f(s_t,w_t)\right]
\le -\mathbb{E}\|\delta_t^w\|^2 + 16\sum_{i=t-\tau_{\rm mix}}^{t-1}\beta_i + 4\beta_t.
\]
Plugging back yields
\begin{align}
\label{eq: w recursion decaying core}
\mathbb{E}\|\delta_{t+1}^w\|^2
\le (1-2\beta_t)\mathbb{E}\|\delta_t^w\|^2
+32\beta_t\hat\beta_t + 12\beta_t^2,
\qquad
\hat\beta_t:=\sum_{i=t-\tau_{\rm mix}}^{t-1}\beta_i.
\end{align}

As in Lemma~\ref{l: sum alpha bound}, one can show that
\[
\hat\beta_t \le 2L_1\,\beta_t\big(\log(1/\beta_t)+1\big),
\qquad t\le T,
\]
where $L_1=\max\left\{1,\frac{\log C+1}{\log(1/\beta)}\right\}$.
Substituting this bound into \eqref{eq: w recursion decaying core} and using $\log(1/\beta_t)\le \log(1/\beta_T)=\log(T+c_0)-\log(\rho_0 a)$ for $t\le T$ gives
\[
\mathbb{E}\|\delta_{t+1}^w\|^2
\le (1-2\beta_t)\mathbb{E}\|\delta_t^w\|^2 + G_2(T)\,\beta_t^2,
\]
with $G_2(T)$ as defined in Lemma~\ref{l: bound on delta w decaying}.
Unrolling the recursion yields
\[
\mathbb{E}\|\delta_t^w\|^2
\le \underbrace{\prod_{i=\tau_{\rm mix}}^{t-1}(1-2\beta_i)}_{J_1}\,\mathbb{E}\|\delta_{\tau_{\rm mix}}^w\|^2
+G_2(T)\underbrace{\sum_{i=\tau_{\rm mix}}^{t-1}\beta_i^2\prod_{j=i+1}^{t-1}(1-2\beta_j)}_{J_2}.
\]
The term $J_1$ is bounded by $J_1\le \exp(-2\sum_{i=\tau_{\rm mix}}^{t-1}\beta_i)$, which gives the stated exponential bounds for $\xi\in(0,1)$ and the stated power bounds for $\xi=1$.
The term $J_2$ is bounded by the standard summation estimates, yielding the three sub-cases for $\xi=1$ and, for $\xi\in(0,1)$ under
$c_0\ge \left(\frac{\xi}{\rho_0 a}\right)^{\frac{1}{1-\xi}}$,
the bound $J_2\le \frac{\rho_0 a}{(t+c_0)^\xi}$ via Lemma~\ref{l: dynamic system with decaying stepsize}.
Finally, Lemma~\ref{l: delta bound one chain constant} gives $\mathbb{E}\|\delta_{\tau_{\rm mix}}^w\|^2\le 4$, completing the proof.
\end{proof}

\subsection{Proof of Lemma \ref{l: super linear}}
\label{sec: pof super linear}

\begin{proof}[Proof of Lemma~\ref{l: super linear}]
Let $u(t):=(t+c_0)^{1-\xi}$ and $\Delta_T=u(T)-u(\tau_{\rm mix})$.
For any $j\in[\tau_{\rm mix},T-1]$, using $1-x\le e^{-x}$ and $\alpha_i\ge 0$,
\[
\prod_{i=j+1}^{T-1}(1-c_{1,\lambda}\alpha_i)
\le
\exp\!\left(-c_{1,\lambda}\sum_{i=j+1}^{T-1}\alpha_i\right).
\]
Since $x\mapsto (x+c_0)^{-\xi}$ is decreasing,
\[
\sum_{i=j+1}^{T-1}\alpha_i
=a\sum_{i=j+1}^{T-1}\frac{1}{(i+c_0)^\xi}
\ge
a\int_{j+1}^{T}\frac{dx}{(x+c_0)^\xi}
=
\frac{a}{1-\xi}\big(u(T)-u(j+1)\big).
\]
Moreover,
\[
u(j+1)=u(j)+\big(u(j+1)-u(j)\big)
\le u(j)+(1-\xi)(j+c_0)^{-\xi}
= u(j)+\frac{1-\xi}{a}\alpha_j,
\]
hence
\[
u(T)-u(j+1)\ge u(T)-u(j)-\frac{1-\xi}{a}\alpha_j.
\]
Combining the above gives
\[
\prod_{i=j+1}^{T-1}(1-c_{1,\lambda}\alpha_i)
\le
\exp\!\left(-\frac{a c_{1,\lambda}}{1-\xi}\big(u(T)-u(j)\big)+c_{1,\lambda}\alpha_j\right)
\le
\exp\!\Big(\tfrac{a c_{1,\lambda}}{c_0^\xi}\Big)\cdot
\exp\!\left(-\frac{a c_{1,\lambda}}{1-\xi}\big(u(T)-u(j)\big)\right),
\]
where we used $\alpha_j\le \alpha_0=a/c_0^\xi$.

Therefore, each summand in \eqref{eq: STexp def} satisfies
\begin{align*}
&\alpha_j
\Big(\prod_{i=j+1}^{T-1}(1-c_{1,\lambda}\alpha_i)\Big)
\exp\!\left(-A\big(u(j)-u(\tau_{\rm mix})\big)\right)\\
\le\;&
\exp\!\Big(\tfrac{a c_{1,\lambda}}{c_0^\xi}\Big)\cdot
\alpha_j
\exp\!\left(
-\frac{a c_{1,\lambda}}{1-\xi}\big(u(T)-u(j)\big)
-\frac{2\rho_0 a}{1-\xi}\big(u(j)-u(\tau_{\rm mix})\big)
\right).
\end{align*}
Since $u(j)\in[u(\tau_{\rm mix}),u(T)]$, we have for all such $j$,
\[
c_{1,\lambda}\big(u(T)-u(j)\big)+2\rho_0\big(u(j)-u(\tau_{\rm mix})\big)
\ge
\min\{c_{1,\lambda},2\rho_0\}\cdot\big(u(T)-u(\tau_{\rm mix})\big)
=\min\{c_{1,\lambda},2\rho_0\}\Delta_T,
\]
and hence the exponential factor is at most
$\exp\!\left(-\frac{a}{1-\xi}\min\{c_{1,\lambda},2\rho_0\}\Delta_T\right)$.
Summing and using
\[
\sum_{j=\tau_{\rm mix}}^{T-1}\alpha_j
\le a\int_{\tau_{\rm mix}}^{T}\frac{dx}{(x+c_0)^\xi}
=\frac{a}{1-\xi}\Delta_T
\]
yields \eqref{eq: STexp bound}.
\end{proof}

%% file: sections/numerical_result.tex
\section{Numerical Results\label{sec:numerical}}

In this section, we provide numerical results on our proposed algorithms. We consider tasks from OpenAI Gym \cite{towers2024gymnasium}, MO-Gymnasium \cite{felten_toolkit_2023} and Gridworlds. All of the tasks share the following settings: 

\begin{itemize}
    \item Policy: the policy \(\pi\) is learned by using the $Q$-learning algorithm. 
    \item Continuous task: some tasks are episodic. To make it a continuous task, the agent will proceed to the starting point after reaching any terminal state with a reward of \(0\). 
    \item Ergodic MDP: to ensure the induced Markov chain under this policy is ergodic, we modify the transition matrix as follows: for each state \(s\), we examine the \(s\)-th row of the transition matrix \(P\). If the original transition probabilities under the learned policy contains entries that are \(0\) (e.g., \(P(\cdot|s)=[0,0,1,0,0]\)), we redistribute a small portion of the probability mass to all previously unreachable states. Specifically, we assign a small probability \(\epsilon\) equally among the zero-probability entries, and reduce the original non-zero entries accordingly to ensure the row still sums to 1. For instance, when \(\epsilon=0.1\), the row above becomes \([0.025,0.025,0.9,0.025,0.025]\). This adjustment is applied to all rows of \(P\), ensuring that every state has a non-zero probability of transitioning to every other state, thus enforcing ergodicity. The choice of \(\epsilon\) for different tasks can be found in Table \ref{table: parameters}. 
    \item Reward function: the reward function \(R\) is a vector whose \(s\)-th row \(R(s)\) is defined by the deterministic one-step reward of performing the policy \(\pi\) in state \(s\). 
    \item Stationary distribution: the stationary distribution \(\mu\) is obtained by solving \(\mu^T = \mu^T P\).
    \item Averaged reward: the average reward function \(g\) is defined as \(g = \mu^T R\).
    \item Feature matrix: the feature matrix \(\Phi\) is defined to be a \(|S| \times d\) matrix. We first generate a matrix \(\tilde{\Phi} \in \mathbb{R}^{|S| \times (d-2)}\), where each element is drawn from the Bernoulli distribution with success probability \(p=0.5\). Then, we construct \(\Phi\) by stacking the all-ones vector \(e\) and the true value function \(W^*\) as columns into the matrix \(\tilde{\Phi}\), i.e., \(\Phi = [\tilde{\Phi}, e, W^*]\). The process is repeated until the feature matrix has full column rank. We further normalize the features to ensure \(\Vert \phi(s) \Vert \le 1\) for all \(s \in S\). 
\end{itemize}

We plot the value function error $\|W_t-W^*\|$ for four algorithms. Two of them are proposed in this paper, namely the Double-Chain and Single-Chain algorithms. As baselines, we include representative prior methods from Table~\ref{tb:comparison}. We note that \cite{tsitsiklis1999average}, \cite{blaser2024almost}, \cite{haque2024stochastic}, \cite{haque2024stochastic}, and \cite{chen2025non} use essentially the same update rule, while \cite{li2024stochastic} incorporates variance reduction and therefore converges much more slowly. 

Here, for our algorithms, we compute \(W^\star\) as the unique solution to
\[
W^\star + g e = P W^\star + R,
\qquad
\mu^\top W^\star = 0.
\]
For the other algorithms, we measure the error in value-function space modulo additive constants, since in the average-reward setting the relative value function is only defined up to a constant shift. Specifically, rather than comparing \(W_t\) with a single representative \(W^\star\), we measure the Euclidean distance from \(W_t\) to the affine space
\[
\mathcal W^\star := \{W^\star + c e : c \in \mathbb R\},
\]
where \(e\) denotes the all-one vector. Equivalently, we remove from \(W_t - W^\star\) its component in the constant direction \(e\), and retain only the orthogonal component. The resulting error metric is
\[
\mathrm{dist}(W_t,\mathcal W^\star)
=
\left\|
\left(I-\frac{ee^\top}{e^\top e}\right)(W_t-W^\star)
\right\|_2.
\]

Each curve is averaged over three independent runs. The step-size schedule $\alpha_t$ and the total number of iterations $T$ are reported in Table~\ref{table: parameters}.

\begin{table}
\caption{Parameters in different tasks}
\label{table: parameters}
\centering
\begin{tabular}{lllll}
    \toprule
    Task     & \(d\)     & \(\alpha_t\) & \(T\) & \(\epsilon\)\\
    \midrule
    Random Walk (\(50\))  & \(5\)  & \(150/(t+1000)\) & \(150000\) & N/A   \\
    Random Walk (\(100\))  & \(20\)  & \(150/(t+1000)\) & \(150000\) & N/A   \\
    Random Walk (\(1000\)) & \(100\) & \(150/(t+1000)\) & \(150000\) & N/A    \\
    Frozen Lake & \(10\)& \(150/(t+1000)\) & \(150000\) & \(0.1\)    \\
    Cliff Walking & \(20\) & \(150/(t+1000)\)& \(150000\)& \(0.1\)    \\
    Taxi & \(100\) & \(150/(t+1000)\) & \(150000\) & \(0.5\)    \\
    Grid World (5x5) & \(10\) & \(200/(t+1000)\)& \(200000\)& \(0.2\)    \\
    Grid World (10x10) & \(40\) & \(300/(t+1000)\) & \(300000\) & \(0.2\)    \\
    Grid World (2x11) & \(10\) & \(300/(t+1000)\) & \(300000\) & \(0.2\)    \\
    Deep sea & \(20\) & \(200/(t+1000)\) & \(200000\) & \(0.2\)    \\
    Deep sea (concave) & \(20\) & \(200/(t+1000)\) & \(200000\) & \(0.2\)    \\
    Resource Gathering & \(50\)& \(500/(t+1000)\)& \(500000\)& \(0.5\)\\
    Fruit Tree (depth = 5) & \(50\)& \(150/(t+1000)\)& \(150000\)& \(0.2\)    \\
    Fruit Tree (depth = 6) & \(50\)& \(150/(t+1000)\)& \(150000\)& \(0.2\)    \\
    Fruit Tree (depth = 7) & \(50\)& \(300/(t+1000)\)& \(300000\)& \(0.2\)    \\
    \bottomrule
\end{tabular}
\end{table}

\begin{table}
\caption{Distance in value space in different tasks (mean \(\pm\) std)}
\label{table: quality-vstar-compare-all}
\centering
\begin{scriptsize}
\begin{sc}
\resizebox{\linewidth}{!}{%
\begin{tabular}{lcccc}
    \toprule
    Task & DoubleChain & SingleChain & \makecell[c]{\cite{haque2024stochastic} \\ \& \cite{zhang2021finite} \\ \& \cite{chen2025non}} & \cite{kim2025implicit} \\
    \midrule
    Random Walk (50)   & \(0.12 \pm 0.02\) & \(\boldsymbol{0.04 \pm 0.00}\) & \(\boldsymbol{0.04 \pm 0.00}\) & \(\boldsymbol{0.04 \pm 0.00}\) \\
    Random Walk (100)  & \(0.19 \pm 0.02\) & \(0.07 \pm 0.01\) & \(\boldsymbol{0.06 \pm 0.01}\) & \(\boldsymbol{0.06 \pm 0.01}\) \\
    Random Walk (1000) & \(2.94 \pm 0.13\) & \(\boldsymbol{2.82 \pm 0.03}\) & \(2.84 \pm 0.03\) & \(2.92 \pm 0.03\) \\
    Frozen Lake        & \(\boldsymbol{0.44 \pm 0.00}\) & \(\boldsymbol{0.44 \pm 0.01}\) & \(\boldsymbol{0.44 \pm 0.01}\) & \(\boldsymbol{0.44 \pm 0.01}\) \\
    Cliff Walking      & \(\boldsymbol{0.62 \pm 0.12}\) & \(0.85 \pm 0.02\) & \(0.88 \pm 0.01\) & \(0.88 \pm 0.01\) \\
    Taxi               & \(15.88 \pm 2.96\) & \(\boldsymbol{5.90 \pm 0.38}\) & \(9.05 \pm 0.45\) & \(8.96 \pm 0.42\) \\
    Grid World (5x5)   & \(0.04 \pm 0.01\) & \(\boldsymbol{0.01 \pm 0.00}\) & \(0.12 \pm 0.00\) & \(0.12 \pm 0.00\) \\
    Grid World (10x10) & \(\boldsymbol{0.44 \pm 0.03}\) & \(0.49 \pm 0.01\) & \(0.45 \pm 0.01\) & \(0.48 \pm 0.01\) \\
    Grid World (2x11)  & \(0.04 \pm 0.00\) & \(\boldsymbol{0.01 \pm 0.00}\) & \(0.07 \pm 0.00\) & \(0.08 \pm 0.00\) \\
    Deep Sea           & \(0.22 \pm 0.06\) & \(\boldsymbol{0.09 \pm 0.02}\) & \(0.28 \pm 0.01\) & \(0.29 \pm 0.01\) \\
    Deep Sea (concave) & \(0.15 \pm 0.03\) & \(\boldsymbol{0.05 \pm 0.01}\) & \(0.15 \pm 0.01\) & \(0.15 \pm 0.01\) \\
    Resource Gathering & \(\boldsymbol{0.00 \pm 0.00}\) & \(\boldsymbol{0.00 \pm 0.00}\) & \(\boldsymbol{0.00 \pm 0.00}\) & \(\boldsymbol{0.00 \pm 0.00}\) \\
    Fruit Tree (depth = 5) & \(0.87 \pm 0.14\) & \(\boldsymbol{0.49 \pm 0.04}\) & \(0.51 \pm 0.02\) & \(0.52 \pm 0.02\) \\
    Fruit Tree (depth = 6) & \(1.22 \pm 0.10\) & \(\boldsymbol{0.51 \pm 0.03}\) & \(0.85 \pm 0.01\) & \(0.85 \pm 0.01\) \\
    Fruit Tree (depth = 7) & \(0.86 \pm 0.17\) & \(\boldsymbol{0.38 \pm 0.05}\) & \(0.46 \pm 0.06\) & \(0.45 \pm 0.05\) \\
    \bottomrule
\end{tabular}%
}
\end{sc}
\end{scriptsize}
\end{table}

\begin{figure}[h]
     \centering
     \begin{subfigure}{0.32\textwidth}
         \centering
         \includegraphics[scale=0.2]{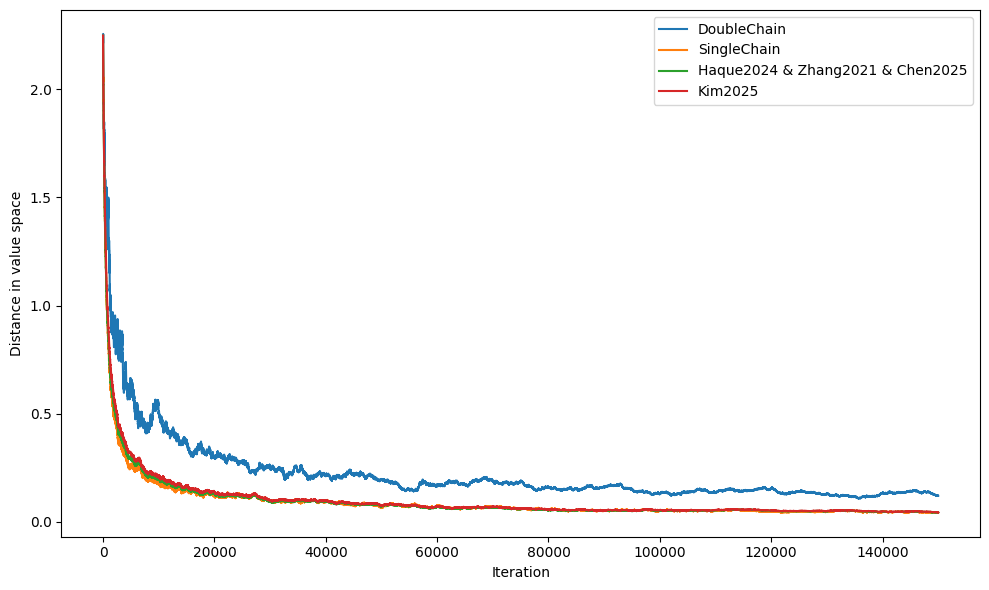}
         \caption{Random Walk (\(50\) states)}
     \end{subfigure}
     \begin{subfigure}{0.32\textwidth}
         \centering
         \includegraphics[scale=0.2]{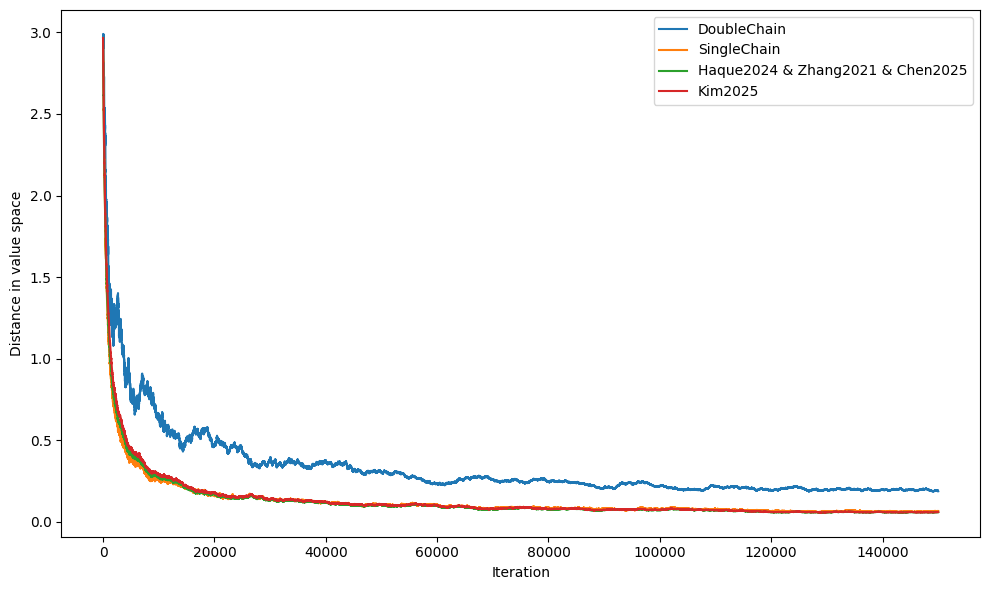}
         \caption{Random Walk (\(100\) states)}
     \end{subfigure}
     \begin{subfigure}{0.32\textwidth}
         \centering
         \includegraphics[scale=0.2]{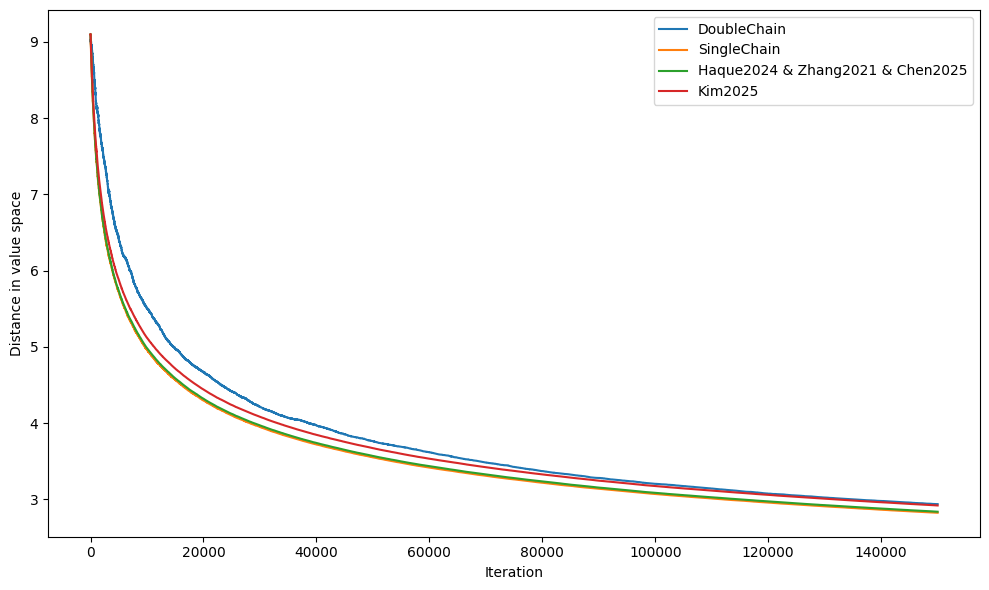}
         \caption{Random Walk (\(1000\) states)}
     \end{subfigure}
     \begin{subfigure}{0.32\textwidth}
         \centering
         \includegraphics[scale=0.2]{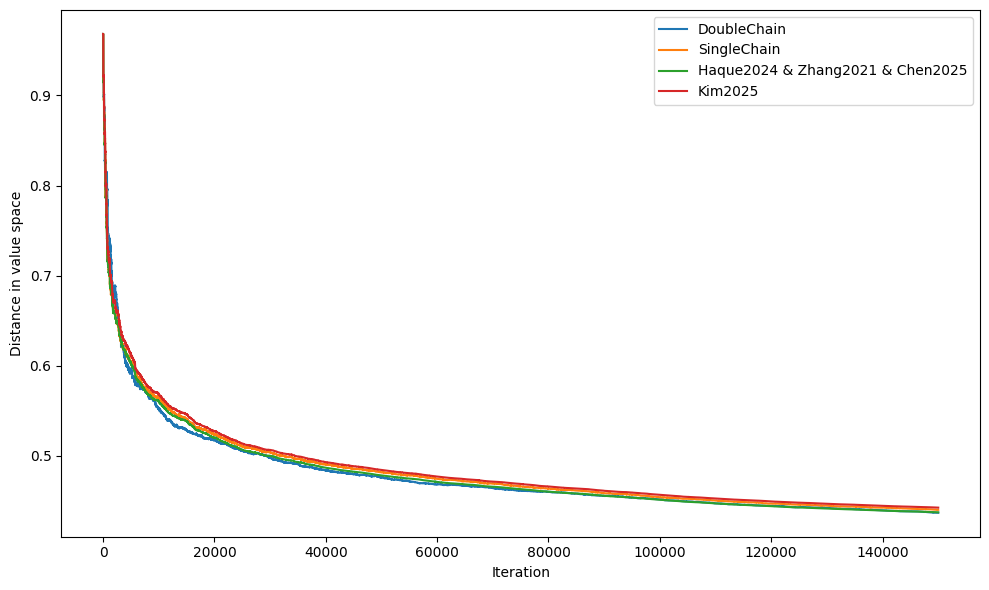}
         \caption{Frozen Lake}
     \end{subfigure}
     \begin{subfigure}{0.32\textwidth}
         \centering
         \includegraphics[scale=0.2]{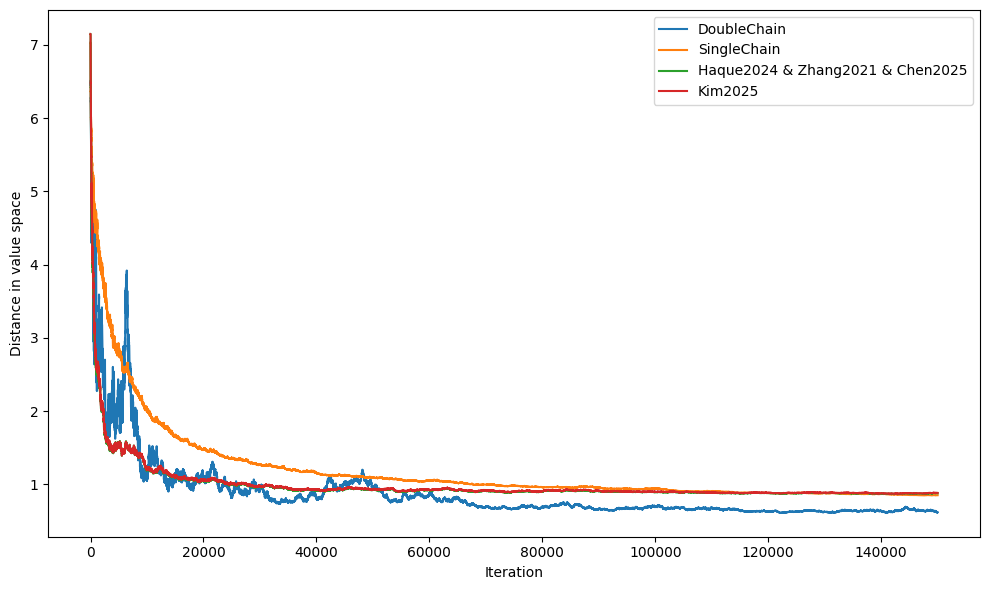}
         \caption{Cliff Walking}
     \end{subfigure}
     \begin{subfigure}{0.32\textwidth}
         \centering
         \includegraphics[scale=0.2]{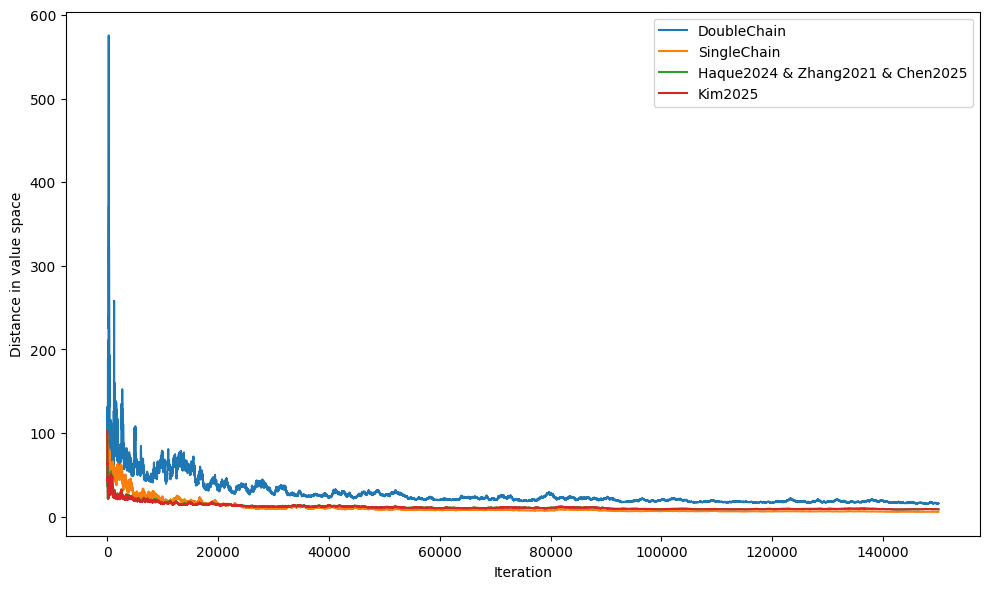}
         \caption{Taxi}
     \end{subfigure}
     \begin{subfigure}{0.32\textwidth}
         \centering
         \includegraphics[scale=0.2]{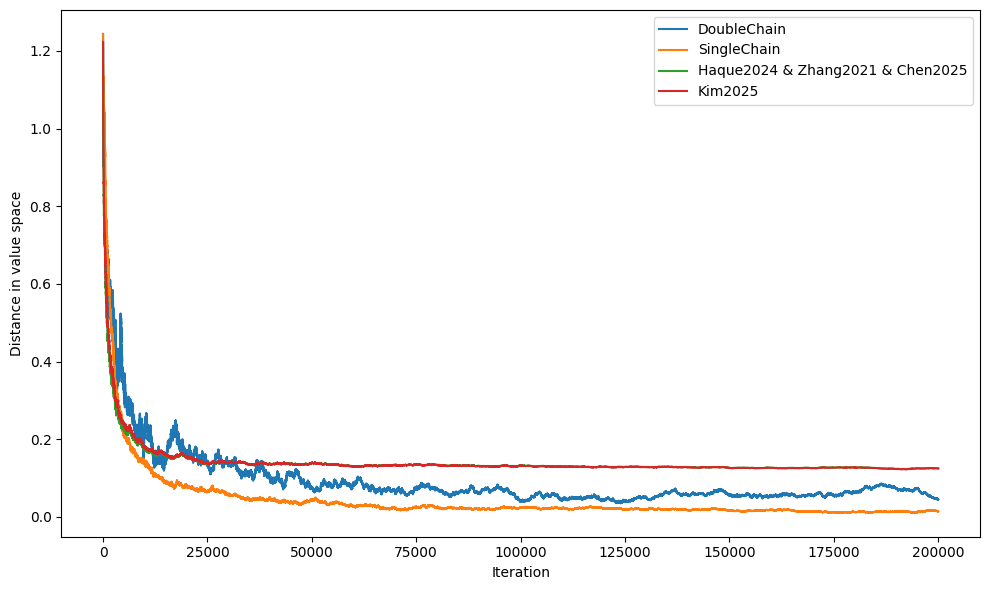}
         \caption{Grid World (5x5)}
     \end{subfigure}
     \begin{subfigure}{0.32\textwidth}
         \centering
         \includegraphics[scale=0.2]{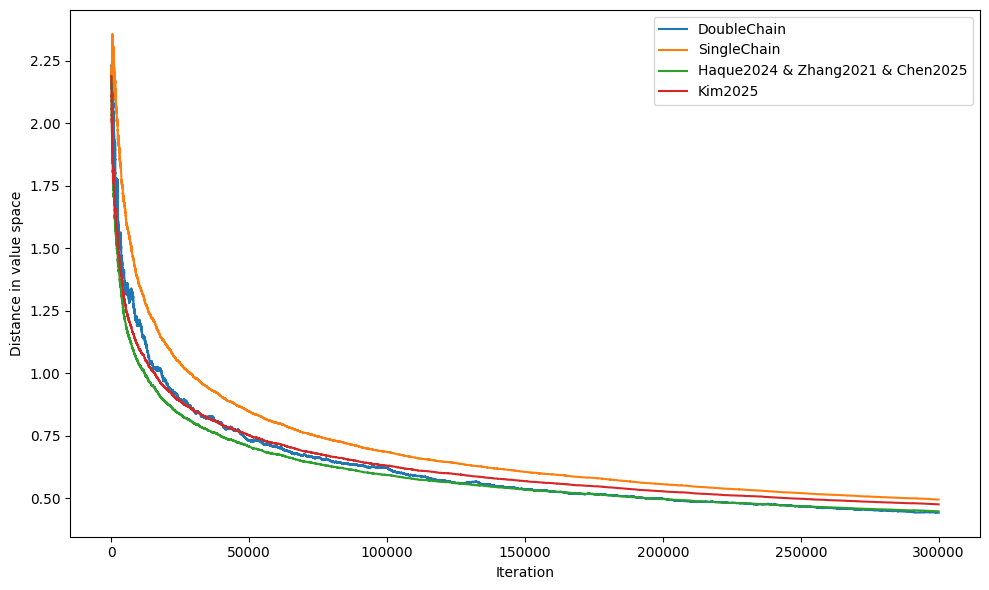}
         \caption{Grid World (10x10)}
     \end{subfigure}
     \begin{subfigure}{0.32\textwidth}
         \centering
         \includegraphics[scale=0.2]{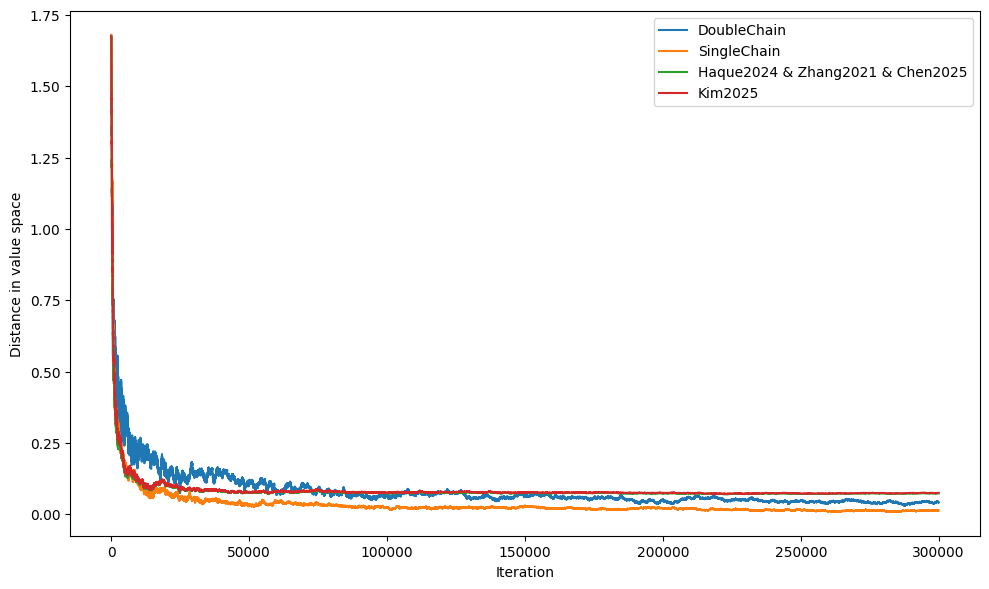}
         \caption{Grid World (2x11)}
     \end{subfigure}
     \begin{subfigure}{0.32\textwidth}
         \centering
         \includegraphics[scale=0.2]{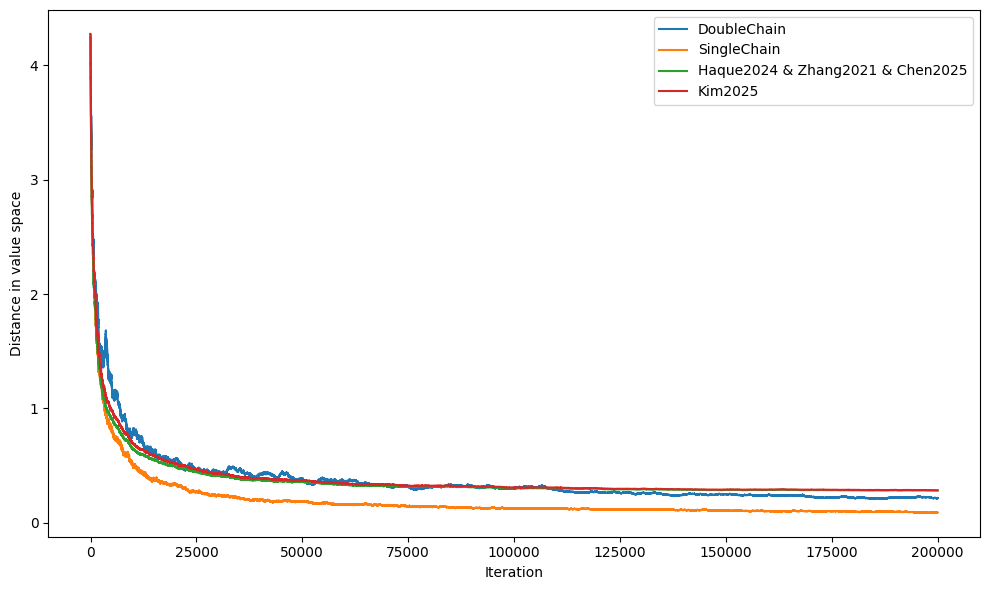}
         \caption{Deep Sea}
     \end{subfigure}
     \begin{subfigure}{0.32\textwidth}
         \centering
         \includegraphics[scale=0.2]{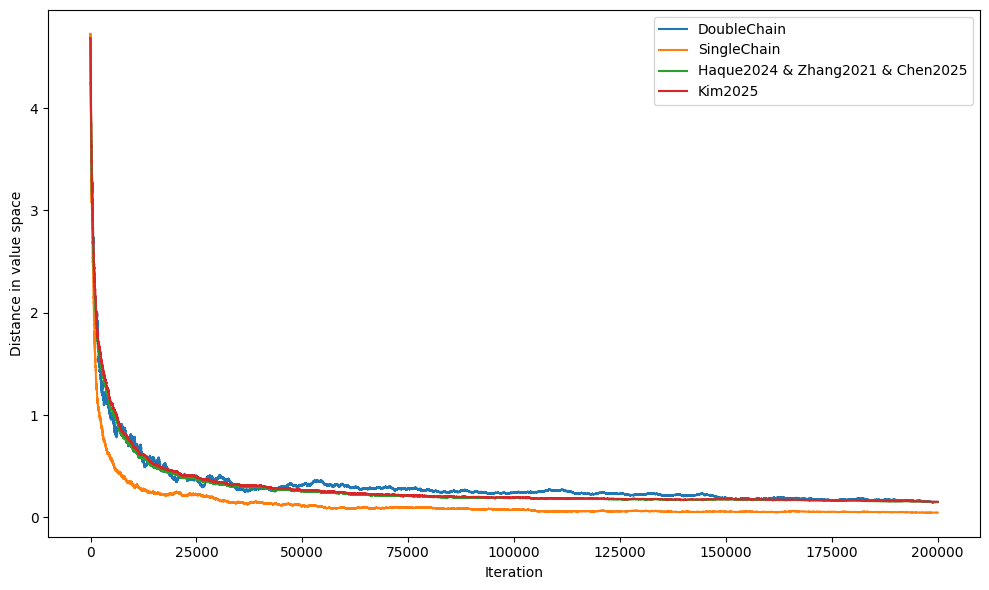}
         \caption{Deep Sea (concave)}
     \end{subfigure}
     \begin{subfigure}{0.32\textwidth}
         \centering
         \includegraphics[scale=0.2]{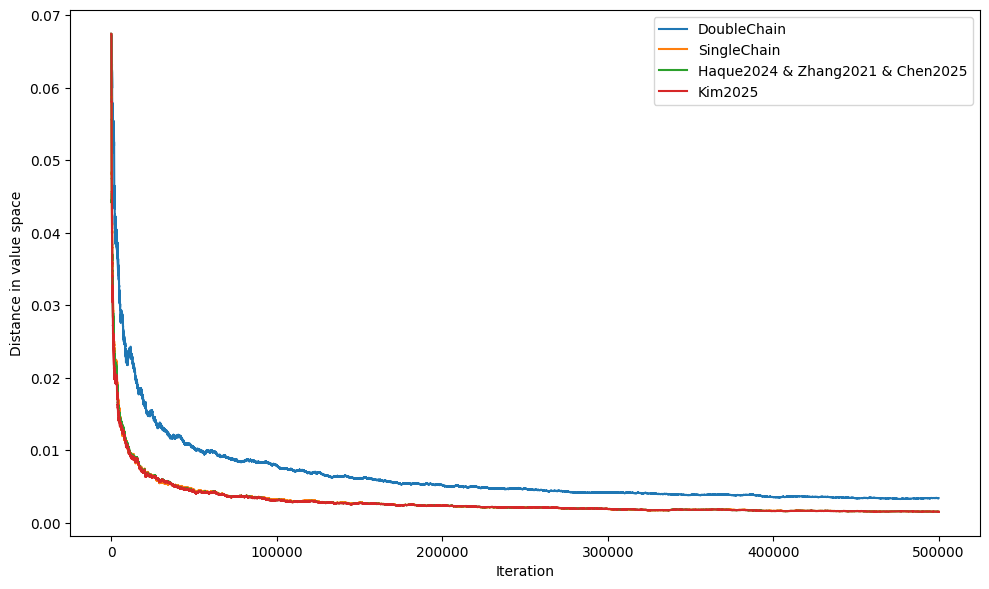}
         \caption{Resource Gathering}
     \end{subfigure}
     \begin{subfigure}{0.32\textwidth}
         \centering
         \includegraphics[scale=0.2]{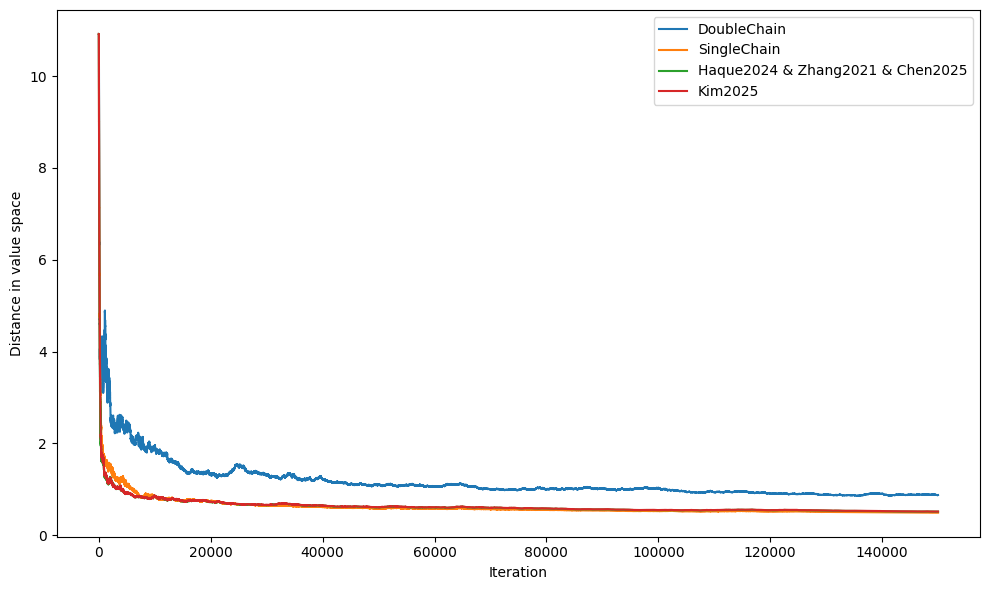}
         \caption{Fruit Tree (depth = 5)}
     \end{subfigure}
     \begin{subfigure}{0.32\textwidth}
         \centering
         \includegraphics[scale=0.2]{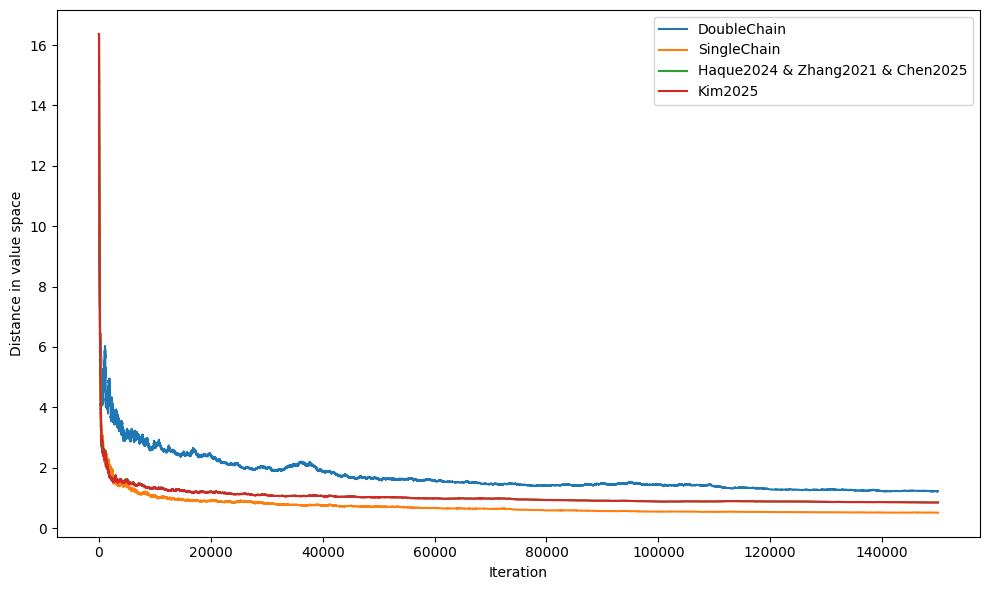}
         \caption{Fruit Tree (depth = 6)}
     \end{subfigure}
     \begin{subfigure}{0.32\textwidth}
         \centering
         \includegraphics[scale=0.2]{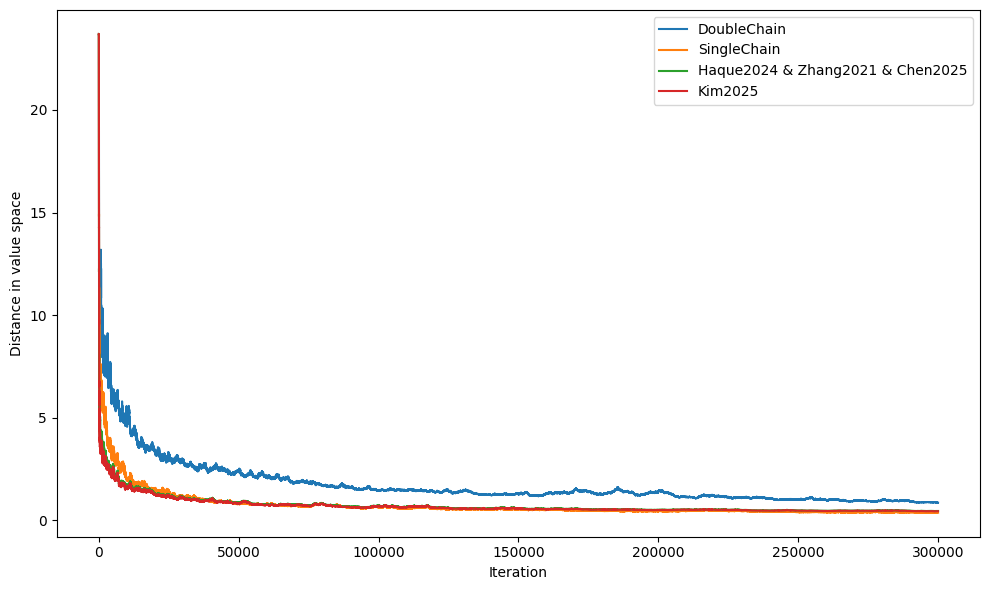}
         \caption{Fruit Tree (depth = 7)}
     \end{subfigure}
\caption{Simulation results}
\end{figure}

\begin{table}
\caption{Condition numbers in different tasks}
\label{table: condition numbers}
\centering
\begin{tabular}{llll}
    \toprule
    Task     & \(\eta_1\)     & \(\eta_2\) & \(\eta_3\) \\
    \midrule
    Random Walk (\(50\)) & \(\mathbf{3.64\times10^{-3}}\)  & \(2.77\times10^{-3}\)  & \(3.04\times10^{-3}\)    \\
    Random Walk (\(100\))  & \(\mathbf{3.68\times10^{-3}}\)  & \(2.63\times10^{-3}\) & \(3.19\times10^{-3}\)    \\
    Random Walk (\(1000\)) & \(\mathbf{1.91\times10^{-4}}\) & \(1.09\times10^{-4}\) & \(1.82\times10^{-4}\)     \\
    Frozen Lake & \(\mathbf{4.81\times10^{-4}}\)& \(4.26\times10^{-4}\) & \(5.20\times10^{-5}\)     \\
    Cliff Walking & \(\mathbf{5.04\times10^{-4}}\) & \(3.27\times10^{-4}\)& \(7.27\times10^{-5}\)    \\
    Taxi & \(\mathbf{2.29\times10^{-4}}\) & \(1.75\times10^{-4}\) & \(1.69\times10^{-5}\)     \\
    Grid World (5x5) & \(\mathbf{4.21\times10^{-3}}\) & \(3.94\times10^{-3}\)& \(1.60\times10^{-3}\)    \\
    Grid World (10x10) & \(\mathbf{4.09\times10^{-4}}\) & \(2.44\times10^{-4}\) & \(7.83\times10^{-5}\)     \\
    Grid World (2x11) & \(3.77\times10^{-3}\) & \(\mathbf{3.88\times10^{-3}}\) & \(1.23\times10^{-3}\)     \\
    Deep sea & \(\mathbf{1.73\times10^{-3}}\) & \(1.19\times10^{-3}\) & \(4.16\times10^{-4}\) \\
    Deep sea (concave) & \(\mathbf{1.92\times10^{-3}}\) & \(1.28\times10^{-3}\) & \(3.72\times10^{-4}\) \\
    Resource Gathering & \(4.35\times10^{-3}\)& \(\mathbf{4.36\times10^{-3}}\)& \(2.07\times10^{-3}\)  \\
    Fruit Tree (depth = 5) & \(\mathbf{1.28\times10^{-5}}\)& \(7.94\times10^{-6}\)& \(5.71\times10^{-6}\) \\
    Fruit Tree (depth = 6) & \(\mathbf{1.07\times10^{-4}}\)& \(6.36\times10^{-5}\)& \(3.75\times10^{-5}\)   \\
    Fruit Tree (depth = 7) & \(\mathbf{1.92\times10^{-4}}\)& \(1.11\times10^{-4}\)& \(5.75\times10^{-5}\)    \\
    \bottomrule
\end{tabular}
\end{table}